\definecolor{cvprblue}{rgb}{0.21,0.49,0.74}
\newcommand{\cmark}{\textcolor{black}{\ding{51}}} 
\newcommand{\xmark}{\textcolor{black}{\ding{55}}} 
\definecolor{lightpurple}{RGB}{230, 220, 250}
\definecolor{HeaderGray}{RGB}{245,245,245}
\definecolor{BandGray}{RGB}{244,246,250}
\definecolor{MyRed}{RGB}{208,16,32}
\definecolor{MyBlue}{RGB}{0,92,184}
\definecolor{BandOrange}{RGB}{255,229,204}
\definecolor{BandPurple}{RGB}{225,220,255}
\definecolor{BandGreen}{RGB}{212,241,215}
\definecolor{BandPink}{RGB}{244,212,229}
\newcommand{\best}[1]{\textbf{\textcolor{MyRed}{#1}}}
\newcommand{\second}[1]{\textbf{\textcolor{MyBlue}{#1}}}
\newcommand{\tighttabsetup}{%
  \setlength{\tabcolsep}{2.8pt}%
  \renewcommand{\arraystretch}{1.06}%
  \aboverulesep=0.35ex \belowrulesep=0.35ex
  \abovetopsep=0.25ex \belowbottomsep=0.25ex
}
\newcommand{\tablesize}{\fontsize{7.1}{8.3}\selectfont}
\newcommand{\FourDatasetHeader}[4]{%
  \rowcolor{HeaderGray}
  \multirow{2}{*}{Method} &
  \multicolumn{5}{c}{#1} &
  \multicolumn{5}{c}{#2} &
  \multicolumn{5}{c}{#3} &
  \multicolumn{5}{c}{#4}\\
  \cmidrule(lr){2-6}\cmidrule(lr){7-11}\cmidrule(lr){12-16}\cmidrule(lr){17-21}
  \rowcolor{HeaderGray}
  & PSNR$\uparrow$ & SSIM$\uparrow$ & LPIPS$\downarrow$ & FID$\downarrow$ & DISTS$\downarrow$
  & PSNR$\uparrow$ & SSIM$\uparrow$ & LPIPS$\downarrow$ & FID$\downarrow$ & DISTS$\downarrow$
  & PSNR$\uparrow$ & SSIM$\uparrow$ & LPIPS$\downarrow$ & FID$\downarrow$ & DISTS$\downarrow$
  & PSNR$\uparrow$ & SSIM$\uparrow$ & LPIPS$\downarrow$ & FID$\downarrow$ & DISTS$\downarrow$\\
}
\newcommand{\MetricsFiveHead}{PSNR$\uparrow$ & SSIM$\uparrow$ & LPIPS$\downarrow$ & FID$\downarrow$ & DISTS$\downarrow$}
\newcommand{\FourDatasetHeaderIQA}[4]{%
  \rowcolor{HeaderGray}
  \multirow{2}{*}{Method} &
  \multicolumn{5}{c}{#1} &
  \multicolumn{5}{c}{#2} &
  \multicolumn{5}{c}{#3} &
  \multicolumn{5}{c}{#4}\\
  \cmidrule(lr){2-6}\cmidrule(lr){7-11}\cmidrule(lr){12-16}\cmidrule(lr){17-21}
  \rowcolor{HeaderGray}
  & NIQE$\downarrow$ & MUSIQ$\uparrow$ & MANIQA$\uparrow$ & CLIPIQA$\uparrow$ & TOPIQ$\uparrow$
  & NIQE$\downarrow$ & MUSIQ$\uparrow$ & MANIQA$\uparrow$ & CLIPIQA$\uparrow$ & TOPIQ$\uparrow$
  & NIQE$\downarrow$ & MUSIQ$\uparrow$ & MANIQA$\uparrow$ & CLIPIQA$\uparrow$ & TOPIQ$\uparrow$
  & NIQE$\downarrow$ & MUSIQ$\uparrow$ & MANIQA$\uparrow$ & CLIPIQA$\uparrow$ & TOPIQ$\uparrow$\\
}
\newcommand{\SectionBand}[2]{%
  \rowcolor{#1}\multicolumn{21}{l}{\textit{#2}}\\[-0.5ex]
}
\newcommand{\TriSeriesHeader}[3]{%
\multirowcell{2}{\textbf{Method}} &
\multicolumn{5}{c}{#1} &
\multicolumn{5}{c}{#2} &
\multicolumn{5}{c}{#3} \\
\cmidrule(lr){2-6}\cmidrule(lr){7-11}\cmidrule(lr){12-16}
\multicolumn{1}{c}{} & \MetricsFiveHead & \MetricsFiveHead & \MetricsFiveHead \\
}
\newcommand{\SinSeriesHeader}[1]{%
\multirowcell{2}{\textbf{Method}} &
\multicolumn{5}{c}{#1} \\
\cmidrule(lr){2-6}
\multicolumn{1}{c}{} & \MetricsFiveHead \\
}
\newcolumntype{L}[1]{>{\raggedright\arraybackslash}m{#1}}
\definecolor{myblue}{RGB}{66,133,244}
\definecolor{mygreen}{RGB}{51,168,83}
\definecolor{myyellow}{RGB}{251,188,3}
\definecolor{myred}{RGB}{234,67,53}
\definecolor{mygrey}{RGB}{95,99,104}
\definecolor{mypup}{RGB}{153,0,204}
\definecolor{RubineRed}{HTML}{D10056}
\definecolor{RoyalBlue}{HTML}{4169E1}
\definecolor{ForestGreen}{HTML}{228B22}
\definecolor{BurntOrange}{HTML}{CC5500}
\definecolor{Purple}{HTML}{800080}
\definecolor{Teal}{HTML}{008080}
\definecolor{Brown}{HTML}{8B4513}
\newenvironment{promptbox}{\begin{tcolorbox}[promptstyle]}{\end{tcolorbox}}
\newcommand{\prompttt}{\scriptsize\ttfamily}
\newtheorem{theorem}{Theorem}
\newtheorem{definition}{Definition}
\newtheorem{lemma}{Lemma}
\newtheorem{proposition}{Proposition}
\newtheorem{corollary}{Corollary}
\newtheorem{assumption}{Assumption}
\theoremstyle{remark}\newtheorem{remark}{Remark}
\title{FAPE-IR: Frequency-Aware Planning and Execution Framework \\ for All-in-One Image Restoration}
\author{
Jingren Liu\textsuperscript{1}\thanks{Equal contribution.},
Shuning Xu\textsuperscript{2}\footnotemark[1],
Qirui Yang\textsuperscript{1}\footnotemark[1],
Yun Wang\textsuperscript{3},
Xiangyu Chen\textsuperscript{4}\footnotemark[2],
Zhong Ji\textsuperscript{1}\thanks{Corresponding author.} \\
\textsuperscript{1}Tianjin University, \textsuperscript{2}University of Macau, \textsuperscript{3}City University of Hong Kong \\
\textsuperscript{4}Institute of Artificial Intelligence (TeleAI), China Telecom
}
\begin{document}
\maketitle
\begin{abstract}
All-in-One Image Restoration (AIO-IR) aims to develop a unified model that can handle multiple degradations under complex conditions. However, existing methods often rely on task-specific designs or latent routing strategies, making it hard to adapt to real-world scenarios with various degradations. We propose FAPE-IR, a Frequency-Aware Planning and Execution framework for image restoration. It uses a frozen Multimodal Large Language Model (MLLM) as a planner to analyze degraded images and generate concise, frequency-aware restoration plans. These plans guide a LoRA-based Mixture-of-Experts (LoRA-MoE) module within a diffusion-based executor, which dynamically selects high- or low-frequency experts, complemented by frequency features of the input image. To further improve restoration quality and reduce artifacts, we introduce adversarial training and a frequency regularization loss. By coupling semantic planning with frequency-based restoration, FAPE-IR offers a unified and interpretable solution for all-in-one image restoration. Extensive experiments show that FAPE-IR achieves state-of-the-art performance across seven restoration tasks and exhibits strong zero-shot generalization under mixed degradations. Code is available at \url{https://github.com/Programmergg/FAPE-IR}.
\end{abstract}    
\vspace{-4ex}
\section{Introduction}
\label{sec:intro}

\begin{figure}[t]
\centering
\includegraphics[width=0.93\linewidth]{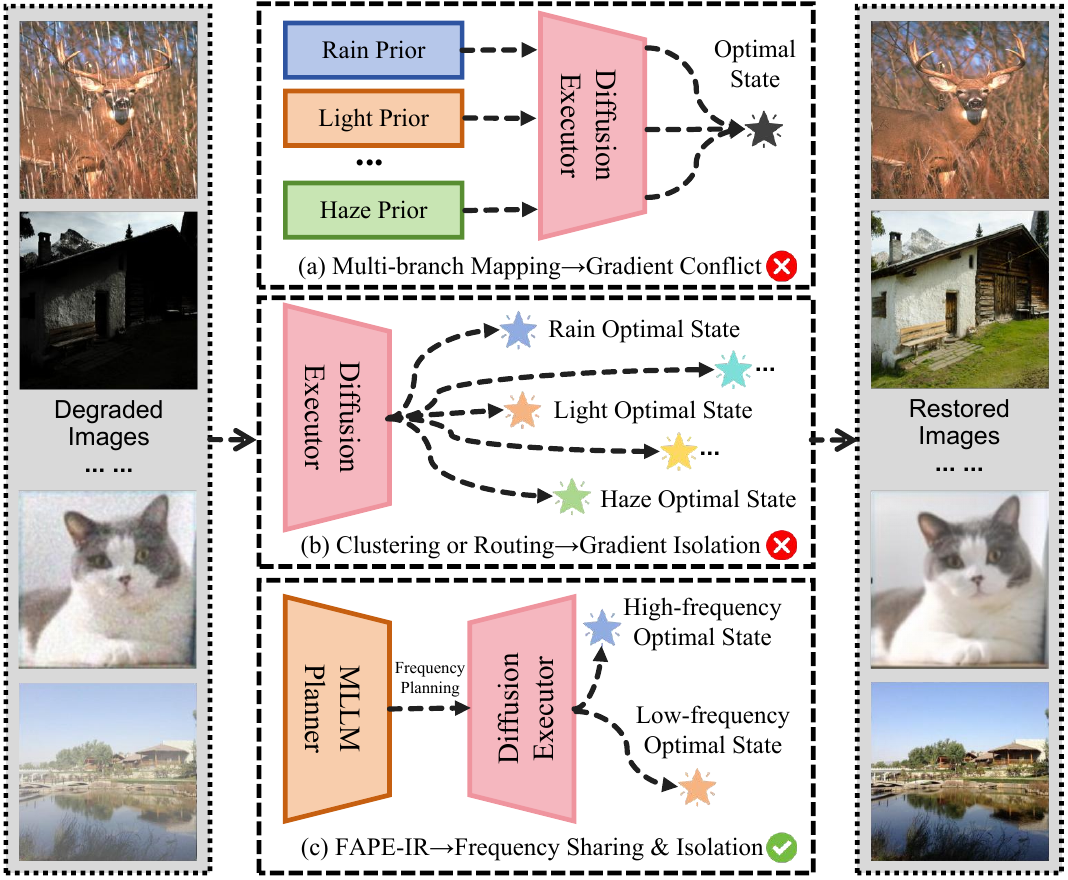}
\caption{Comparison of AIO-IR methods: (a) multi-branch mappings with task-level priors; (b) task-specific routing/clustering; (c) our FAPE-IR, unifying understanding and restoration. }
\label{figs:fig1}
\vspace{-4ex}
\end{figure}

Image restoration seeks to recover clean images from degradations such as rain, snow, fog, blur, low-light, and noise, and is fundamental to safety- and quality-critical applications in medical imaging \cite{chen2025all,wang2025anatomy}, consumer photography \cite{xu2024image}, and autonomous driving \cite{wang2025moerl,jeong2025robust,lin2025jarvisir}. 
Traditional pipelines typically employ specialized architectures or optimization strategies for each type of degradation \cite{ren2023multiscale,yi2023diff,zhang2017beyond,kawar2022denoising}.
% Traditional pipelines typically engineer custom architectures or optimization schemes for each degradation \cite{ren2023multiscale,yi2023diff,zhang2017beyond,kawar2022denoising}. 
Although recent single-degradation models report strong results on curated benchmarks \cite{xie2025diffusion,wang2025learning,zhou2025prodehaze,lan2025efficient,guo2025compression}, real-world images rarely present a single known corruption; degradations are often unknown, co-occurring, and subject to distribution shift, which undermines stability and controllability. This gap motivates All-in-One Image Restoration (AIO-IR), which aims to handle multiple degradations within a single model \cite{chen2025unirestore,luo2025visual,tian2025degradation,shi2024resfusion,ye2024learning,chen2025exploring,pu2025lumina,chen2024learning}. Yet, most current efforts still assume task separability \cite{luo2025visual,ai2024multimodal} or tether restoration to reconstruction pipelines driven by task-level priors \cite{wang2025enhancement,wu2025debiased}, preventing models from autonomously perceiving and disentangling degradation attributes. As a result, model generalization to real-world scenarios remains inadequate.

Within current AIO-IR methods, a prevailing strategy is to employ multi-branch mappings (see Figure~\ref{figs:fig1}(a)), injecting task-level conditions via explicit priors (e.g., prompts, embeddings, or specialized encoders) and learning the restoration paths accordingly \cite{potlapalli2023promptir,li2023prompt,ma2023prores,luo2023controlling,chen2025unirestore,zhou2025unires}. This design is prone to severe cross-task interference. In a unified model with shared parameters, gradients from different degradations conflict, making it difficult to reach task-specific optima simultaneously. Moreover, both training and inference depend on prior labels or textual inputs, incurring substantial annotation and prompt-engineering costs. A complementary strategy clusters features or employs specialized routing mechanisms within the model (see Figure~\ref{figs:fig1}(b)) \cite{cui2024adair,tian2025degradation,zamfir2025complexity,zheng2024selective,wang2025m2restore,cheng2025unildiff}. Because tasks are strongly separated in latent space, their learning processes become nearly independent, which hampers the discovery of shared structures and compositionality across degradations. Consequently, robustness to compound degradations is limited, and generalization suffers. More critically, both lines lack semantic understanding and rely on opaque degradation pipelines, hindering content-aware adaptation and interpretability. 
These limitations call for a unified, semantics-aware AIO-IR framework in which degradation understanding and restoration are coupled, and task knowledge is both disentangled and shared in an adaptive way.

To instantiate such a framework, we adopt diffusion-based image restoration, which has recently become a popular choice for AIO-IR due to its strong generative priors \cite{potlapalli2023promptir,chen2025unirestore,zhou2025unires,cui2024adair,zheng2024selective,wang2025m2restore,cheng2025unildiff}. 
This formulation naturally models diverse degradations, yet prior work typically treats diffusion as a task-conditioned generator driven by labels or hand-crafted priors, leaving degradation modeling opaque and under-exploiting cross-degradation interactions.
We instead regard diffusion as the execution engine in a unified multimodal understanding–generation paradigm. In this paradigm, degradations are first parsed and organized, and then restored.
Inspired by recent unified models~\cite{xie2024show,lin2025uniworld,wu2025janus,deng2025emerging,chen2025blip3,yang2025mmada}, we instantiate FAPE-IR, a multimodal large language model (MLLM) + diffusion framework for low-level vision (see Figure~\ref{figs:fig1}(c)). 
An MLLM explicitly parses image content and degradation semantics and outputs understanding and planning tokens that condition a diffusion executor to perform content-adaptive restoration in the frequency domain, grouping tasks into high- and low-frequency regimes, thereby encouraging sharing within same-frequency tasks and isolating conflicting ones.

Concretely, FAPE-IR couples a multimodal planner with a diffusion-based executor to mitigate cross-task gradient conflict and isolation from a frequency perspective. In the planning stage, we employ a label-free procedure to extract as many degradation-related low-level features as possible, and introduce several well-designed instructions that constrain the model to output: the degradation types present in the image, the primary frequency bands requiring restoration, the underlying causes, and the proposed restoration pipeline. This design yields effective, per-image understanding that more accurately guides the restoration process. In the execution stage, tokens produced by the planner, augmented with high-/low-frequency visual features from SigLIP-v2 \cite{tschannen2025siglip} and a VAE \cite{flux2024}, guide the restoration of the entire image. Furthermore, we propose a Frequency-Aware LoRA-MoE architecture, where text tokens from the planner and the high-/low-frequency signals extracted from intermediate executor's features drive the MoE gating to select among high-/low-frequency experts dynamically and interpretably. For the overall training recipe, we provide theoretical and empirical evidence that adversarial training on diffusion-pretrained weights can yield higher fidelity and fewer artifacts than other fine-tuning methods under our setting.
Building on this, we further introduce an energy-based frequency regularization loss so that each frequency expert is optimized in its most suitable band.

Our main contributions are summarized as follows:
(i) We introduce FAPE-IR, a unified understanding–generation AIO-IR framework that couples a multimodal planner with a diffusion-based executor, enabling semantics-driven, content-adaptive restoration across diverse degradations. (ii) We develop a frequency-aware modeling strategy that dynamically routes information between high- and low-frequency experts, mitigating cross-task interference while promoting shared representations among related tasks. (iii) We conduct comprehensive experiments on a wide range of single-degradation and mixed-degradation benchmarks, demonstrating that FAPE-IR achieves state-of-the-art or competitive performance and exhibits strong zero-shot generalization to unseen compound degradations.
\section{Related Works}
\label{sec:related_works}

\subsection{Diffusion-based AIO-IR}
In recent years, diffusion-based AIO-IR methods have advanced rapidly. Existing approaches fall into two families: (i) multi-branch mappings and (ii) clustering or routing.
The former relies on injecting task-level priors into a shared backbone via task encoders or textual/latent prompts. PromptIR~\cite{potlapalli2306promptir}, Prompt-in-Prompt~\cite{li2023prompt}, and InstructIR~\cite{conde2024instructir} unify degradations through prompts or instructions. ProRes~\cite{ma2023prores} and DA-CLIP~\cite{luo2023controlling} supply content/degradation embeddings via visual prompts or CLIP~\cite{radford2021learning} controllers and fuse them with cross-attention. UniRestore~\cite{chen2025unirestore} injects task cues into diffusion, whereas UniRes~\cite{zhou2025unires} composes task specialists at sampling time.
The latter adapts internal representations by clustering features or routing to experts. AdaIR~\cite{cui2024adair} and DFPIR~\cite{tian2025degradation} mine frequency cues and reshape separable, degradation-aware features. AMIRNet~\cite{zhang2023all} performs label-free, layer-wise clustering with domain alignment. Within diffusion, DiffUIR~\cite{zheng2024selective} disentangles shared/private factors for selective routing. MoCE-IR~\cite{zamfir2025complexity} and M2Restore~\cite{wang2025m2restore} use complexity-aware MoE to activate only necessary experts.
Overall, these strategies improve efficiency but lack sample-level planning, relying on fixed task-level designs and ignoring whether tasks should share or isolate knowledge. This leads to under-shared representations, misrouting, and artifacts under unknown or composite degradations, limiting open-world robustness. To address this, we propose FAPE-IR, a unified understanding–generation framework.

\subsection{Unified Models}
A central goal of multimodal AI is to develop general-purpose foundation models that can jointly understand and generate diverse modalities within a single framework. Recent unified models largely follow three paradigms. Multimodal autoregression methods, such as Chameleon~\cite{team2024chameleon}, Emu3~\cite{wang2024emu3}, and UGen~\cite{tang2025ugen}, tokenize images into a shared discrete space and train a single decoder-only Transformer, but this tokenization degrades fine details and incurs long generation latency, limiting their suitability for precise low-level restoration. Multimodal diffusion approaches instead unify modalities at the denoising stage: MMaDA~\cite{yang2025mmada} employs a modality-agnostic diffusion backbone with unified post-training; LaViDa~\cite{li2025lavida} uses elastic masked diffusion for understanding, editing, and generation; Dimple~\cite{yu2025dimple} couples an autoregression warm-up with discrete diffusion and parallel confident decoding; and UniDisc~\cite{swerdlow2025unified} casts joint text--image modeling as discrete diffusion in a shared token space. However, this paradigm remains nascent, with unresolved issues in consistency and inference efficiency. A third line, MLLM+Diffusion frameworks, decouple high-level multimodal understanding from generation: UniWorld-V1~\cite{lin2025uniworld} conditions diffusion on high-resolution semantic features from MLLMs and contrastive encoders; BAGEL~\cite{deng2025emerging} integrates decoder-only MLLMs with diffusion heads in an ``integrated Transformer'' for both understanding and image generation; Janus~\cite{wu2025janus} decouples visual encoding for understanding versus generation under a unified backbone; and BLIP3-o~\cite{chen2025blip3} uses CLIP-based semantic spaces with diffusion transformers and staged pretraining to balance image understanding and generation. Yet these conditioning interfaces are primarily designed for high-level semantic editing or creation, rather than pixel-accurate, artifact-free restoration. Our FAPE-IR framework follows the MLLM+Diffusion paradigm but is specialized for low-level image restoration, where fine-grained controllability and artifact suppression are central.

\begin{figure*}[t]
\centering
\includegraphics[width=0.93\linewidth]{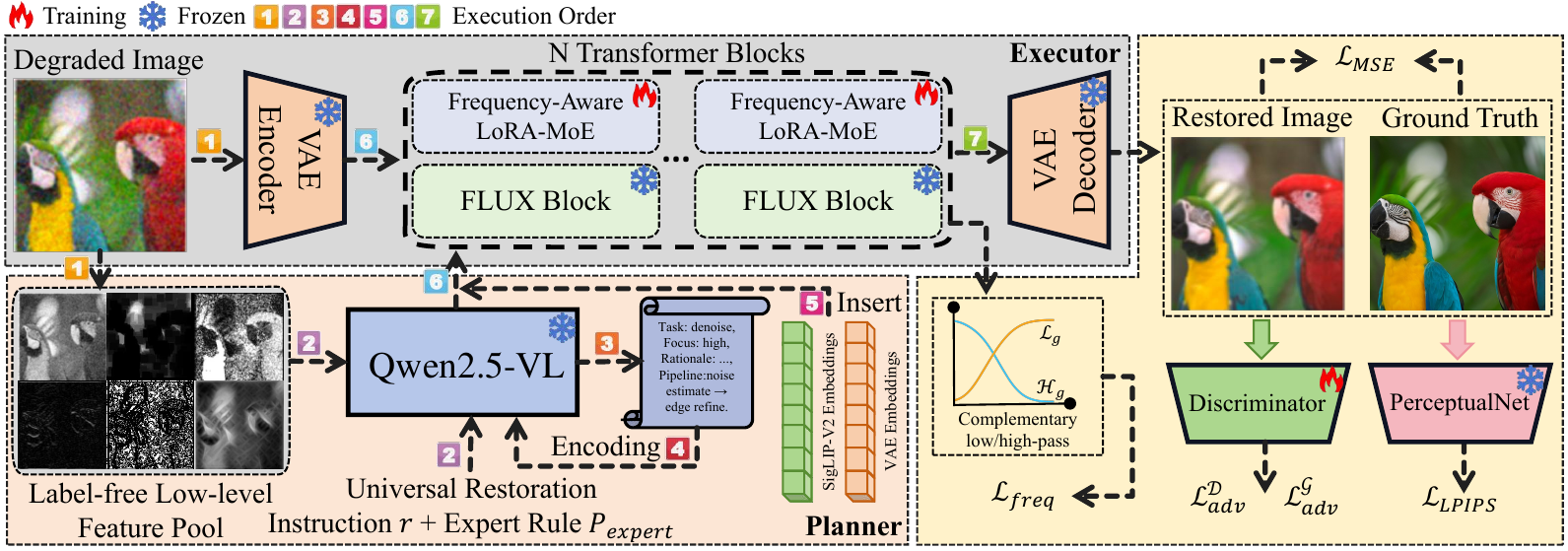}
\caption{Overview of the proposed FAPE-IR framework.}
\label{figs:fig2}
\vspace{-3ex}
\end{figure*}

\section{Methodology}
\label{sec:plan_then_restore}
As shown in Figure~\ref{figs:fig2}, FAPE-IR adopts a planning and execution paradigm that decomposes image restoration into a frequency-aware planner and a band-specialized restoration executor. In the remainder of this section, we first describe the frequency-aware planner, including the construction of a label-free low-level feature pool, instructions and planning, and the encoding of frequency-aligned understanding tokens (Section~\ref{sec:freqaware}). We then present the proposed Frequency-Aware LoRA-MoE architecture in executor (Section~\ref{sec:restoration-detailed}). Finally, we introduce the adversarial training strategy and the overall loss formulation, which couples frequency-aware routing with perceptual and adversarial supervision (Section~\ref{sec:adv_training}).

\subsection{Frequency-aware Planner}
\label{sec:freqaware}
\textbf{Label-free Low-level Feature Pool.} Given a degraded input image $c$, we first construct a label-free low-level feature pool that characterizes its degradation pattern from a frequency perspective (details in Appendix). Specifically, we compute a vector of simple image statistics $P_{\text{hints}}$ directly from pixel values, without relying on degradation labels or auxiliary metadata. Each component of $P_{\text{hints}}$ corresponds to one of seven representative degradation types: for \textbf{rain}, the strength of oriented streaks; for \textbf{snow}, the density of small bright blobs; for \textbf{noise}, luminance and chroma variations in flat regions; for \textbf{blur}, the responses of Laplacian and gradient operators; for \textbf{haze}, dark-channel and saturation statistics; for \textbf{low light}, global luminance; and for \textbf{super-resolution}, the spatial dimensions $H \times W$ of the input. These lightweight global statistics can be computed in a single pass and require no supervision. The resulting vector $P_{\text{hints}}$ is then provided to the planner as frequency-aware visual information describing the input degradation.

\vspace{-3ex}
\paragraph{Instructions and Planning.}
We consider a low-level restoration task space $\mathcal{T}$ that covers seven representative degradations, including low-light enhancement, dehazing, desnowing, deraining, deblurring, denoising, and super-resolution. In FAPE-IR, Qwen2.5-VL~\cite{bai2025qwen2} serves as a multimodal planner that turns frequency-aware visual cues and textual instructions into a structured, frequency-explicit restoration plan. Concretely, the planner is conditioned on three types of inputs: a universal restoration instruction $r$ that describes the general goal of recovering a clean and sharp image (details in Appendix); an expert rule $P_{\text{expert}}$ that encodes the task taxonomy and prior knowledge about which degradations affect which frequency bands (also detailed in Appendix); and the concise visual hints $P_{\text{hints}}$ derived from the label-free low-level feature pool. Based on these inputs, Qwen2.5-VL produces a parse-friendly textual output, which we then parse into a tuple
$FP = (\hat{t}, \hat{f}, \mathcal{R}, \mathcal{E})$, where $\hat{t}$ is the selected task from $\mathcal{T}$, $\hat{f}$ indicates whether the restoration should primarily focus on high- or low-frequency content, $\mathcal{R}$ summarizes the intended restoration pipeline in natural language, and $\mathcal{E}$ explains the reasoning behind this choice. This human-readable plan acts as a routing signal for downstream high- and low-frequency experts, making the decision process transparent and interpretable even under composite degradations.

\vspace{-3ex}
\paragraph{Encoding.}
To ensure the planner’s decisions effectively guide the diffusion-based executor, we use Qwen2.5-VL’s encoding capability to compile the planner decisions $FP$ into compact, frequency-aligned understanding tokens $h$.
To mitigate the gap between understanding tokens and the executor’s generation tokens, we enrich the conditioning tokens with high- and low-frequency visual information. Concretely, we use Qwen2.5-VL’s visual placeholders to inject tokens produced by SigLIP-v2 $\mathcal{E}_{\text{sig}}(c)$ and the VAE $\mathcal{E}_{\text{vae}}(c)$, replacing the executor's original T5 encoding for conditioning $h_{\text{cond}} = \text{insert}\left(h, \mathcal{E}_{\text{sig}}(c), \mathcal{E}_{\text{vae}}(c)\right)$.
In addition, to retain precise, human-interpretable semantics for the subsequent Frequency-Aware LoRA-MoE module, we similarly use text placeholders to extract the textual tokens $h_{\text{text}} = h[:, \text{slot}_{\text{text}}, :]$.

\subsection{Restoration Executor}
\label{sec:restoration-detailed}
Given a degraded image $c$, we employ a restoration executor based on the FLUX transformer~\cite{flux2024} to refine the VAE latent $z$ into a restored latent $\hat{z}$, guided jointly by the textual tokens $h_{\text{text}}$ and the conditioning tokens $h_{\text{cond}}$. The VAE decoder then maps $\hat{z}$ to the restored image $\hat{x}$. 

To meet constraints on model size and optimization time, FAPE-IR adopts a parameter-efficient LoRA-MoE executor with two experts specialized for high- and low-frequency bands. Building on this design, we introduce a Frequency-Aware LoRA-MoE module with a dual-end gating mechanism: a planner-side gate uses the frequency-aligned understanding tokens $h_{\text{text}}$ to propose an expert selection, while an executor-side gate inspects intermediate high- and low-frequency generation tokens to refine that choice. As illustrated in Figure~\ref{figs:fig3}, this design conditions the MoE router jointly on semantic and spectral cues, leading to more precise expert assignment and improved robustness to frequency-band drift during optimization.

\subsubsection{Frequency-Aware LoRA-MoE}

\begin{figure}[t]
\centering
\includegraphics[scale=0.5]{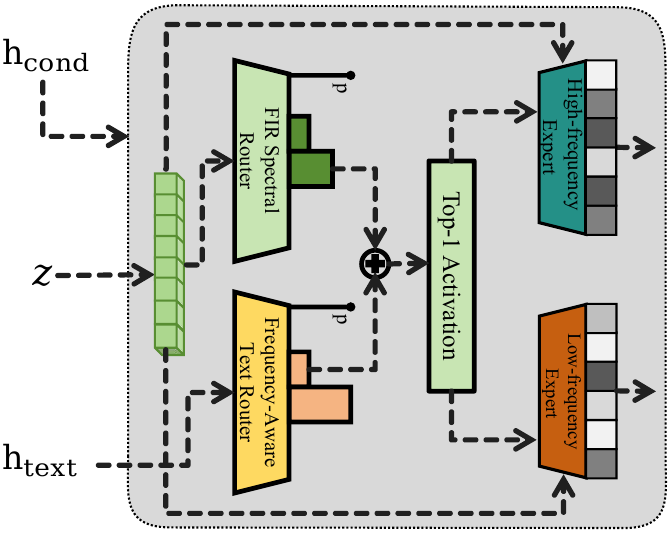}
\caption{Frequency-Aware LoRA-MoE architecture.}
\label{figs:fig3}
\vspace{-4ex}
\end{figure}

% \noindent\textbf{Frequency-Aware Text Router.}
% Since the planner yields $h_{\text{text}}\in\mathbb{R}^{B\times K\times D}$  while the LoRA-MoE gate operates on $\mathbb{R}^{B\times L\times D}$, we right-pad $h_{\text{text}}$ along the token axis to length $L$ ($K \leq L$), leaving $D$ unchanged, so that the gating network can be applied token-wise over a sequence of consistent length. Then, we implement the text gating function as a token-wise fully connected layer on the padded text tokens, followed by a softmax over experts to obtain the text gating weights $w_{text} = \mathrm{Softmax}\big(W_t\cdot\text{Padding}(h_{\text{text}})\big)$.

\noindent\textbf{Frequency-Aware Text Router.}
Since the planner yields $h_{\text{text}}\in\mathbb{R}^{B\times K\times D}$, where $B$ denotes the batch size, $K$ denotes the planner token length, and $D$ denotes the embedding dimension, while the LoRA-MoE gate operates on representations in $\mathbb{R}^{B\times L\times D}$, where $L$ denotes the LoRA-MoE gate token length, we right-pad $h_{\text{text}}$ along the token axis to length $L$ ($K \leq L$), leaving $D$ unchanged, so that the gating network can be applied token-wise over a sequence of consistent length. Then, we implement the text gating function as a token-wise fully connected layer on the padded text tokens, followed by a softmax over experts to obtain the text gating weights $w_{\text{text}} = \mathrm{Softmax}\big(W_t\cdot\text{Padding}(h_{\text{text}})\big)$.

\noindent\textbf{FIR Spectral Router.} 
To compensate for the limitations of high-level semantic tokens in gating, we add a visual-frequency path that separates the executor’s generation tokens into low/high-frequency components using a depthwise FIR low-pass filter \cite{makandar2015image,chen2019drop}. Let $h_{\text{gen}}\in\mathbb{R}^{B\times L\times D}$ denote the per-block input. For an odd kernel size $K$ and a symmetric, normalized 1D Gaussian $g\in\mathbb{R}^{K}$ with $\sum_k g_k=1$, we define a depthwise 1D convolution along the token axis as $h_{\text{low}}=\mathcal{L}_g(h_{\text{gen}})$ and $h_{\text{high}}=h_{\text{gen}}-h_{\text{low}}$.

Next, to allocate the proportion of energy between the low- and high-frequency generation tokens, we compute their relative energy and derive the high–low gating weights. Specifically, for each token we set $e_{\text{low}}=\lVert h_{\text{low}}\rVert_2^2$ and $e_{\text{high}}=\lVert h_{\text{high}}\rVert_2^2$, then $p_{\text{low}}= \frac{e_{\text{low}}}{e_{\text{low}}+e_{\text{high}}}$ and $p_{\text{high}}=1-p_{\text{low}}$; a temperature-scaled softmax over $[p_{\text{low}},p_{\text{high}}]$ yields the high–low gating weights $w_{\text{visual}}$.

\noindent\textbf{Softmax-gated Merged Weights.}
Subsequently, we fuse the textual and spectral gating weights with non-negative, learnable scalars $\lambda_s$: $\tilde{\alpha} \;=\; \lambda_s\,w_{\text{text}} \;+\; (1-\lambda_s)\,w_{\text{visual}}$. Finally, we apply top-1 routing to obtain the final frequency-band selection, i.e., $\alpha=\mathrm{Top1}(\tilde{\alpha})$, where $\alpha$ is a one-hot vector indicating the chosen band.

With per-expert coefficients $\alpha_i$ produced by the fused and sparsified router, the FLUX projection is updated as:
\vspace{-1ex}
\begin{equation}
\label{eq:lora_moe_param}
W' \;=\; W \;+\; \sum_{i=1}^{N} \alpha_i \, A_i B_i,
\end{equation}
where $(A_i,B_i)$ is a rank-$r_i$ LoRA adapter for the $i$-th frequency expert and the backbone $W$ is frozen.

\subsection{Adversarial Training}
\label{sec:adv_training}
To improve efficiency at both training and inference, our FAPE-IR replaces the flow-matching fine-tuning objective~\cite{lipman2022flow} commonly used in unified models with adversarial training. In Theorem~\ref{thm:comp-fidelity} (Appendix), we show that, compared with auto-regression~\cite{team2024chameleon,sun2024generative,wang2024emu3} and flow-matching~\cite{lin2025uniworld,deng2025emerging,wu2025janus,chen2025blip3} losses, adversarial training attains fewer artifacts. Next, we detail FAPE-IR’s training paradigm in Figure~\ref{figs:fig2}.

\noindent \textbf{Discriminator.}
Given a ground truth image $x$ (restored image $\hat{x}$), the frozen SigLIP-v2 discriminator $\mathcal{F}_{\text{sig}}$ yields (i) a sequence of hidden states from which we take $L$ token maps at increasing depths and reshape them into spatial feature maps $\{\mathbf{f}^{(l)}\in\mathbb{R}^{C_l\times H_l\times W_l}\}_{l=1}^{L}$, and (ii) a pooled representation $\mathbf{p}\in\mathbb{R}^{D}$ from the final layer. To more comprehensively judge the input, we attach a multi-level discriminator head that mines spatial evidence across scales.

\noindent \textbf{Multi–level Discriminator Head.}
The discriminator head $\mathcal{H}_\psi$ processes each spatial map with a shallow, spectrally normalized convolutional path that includes anti-aliasing BlurPool downsampling, and processes the pooled token:
\begin{equation}
\mathbf{s}^{(l)} \;=\; \mathcal{H}^{(l)}_\psi\!\left(\mathbf{f}^{(l)}\right)\in\mathbb{R}^{H'_l\times W'_l},
s^{\text{pool}} \;=\; \mathcal{H}^{\text{pool}}_\psi(\mathbf{p}).
\end{equation}
Each spatial score map $\mathbf{s}^{(l)}$ is spatially averaged to a scalar $\bar{s}^{(l)}=\frac{1}{H'_l W'_l}\sum_{i,j}\mathbf{s}^{(l)}_{ij}$; we then aggregate levelwise with uniform weights:
$D(x)\;=\;\frac{1}{L+1}\Big(\sum_{l=1}^{L}\bar{s}^{(l)} \;+\; s^{\text{pool}}\Big).$
This design encourages consistency between local structures (via multi-scale spatial paths) and global semantics (via the pooled path), while remaining lightweight.

\noindent \textbf{Discriminator Loss.}
Following the adversarial training, we optimize the critic with the standard discriminator loss:
\begin{equation}
\mathcal{L}_{\text{adv}}^{\mathcal{D}} = -\mathbb{E}_{x}[\log D(x)] - \mathbb{E}_{\hat{x}}[\log (1 - D(\hat{x}))].
\end{equation}

\noindent \textbf{Generator Loss.}
To minimize over-generation and distortion, we do not concatenate additional noise channels into the inputs of the FLUX transformer. Finally, we train $\mathcal{G}_\theta$ under a composite adversarial training loss.
\begin{equation}
\small
\mathcal{L}_{\text{adv}} =
\underbrace{\alpha\|\hat{x}-x\|_2^2}_{\mathcal{L}_\text{MSE}}
\underbrace{+\,\beta\|\phi(\hat{x})-\phi(x)\|_2^2}_{\mathcal{L}_\text{LPIPS}}
\underbrace{-\,\lambda\,\mathbb{E}\!\left[D(\hat{x})\right]}_{\mathcal{L}_\text{adv}^{\mathcal{G}}}.
\end{equation}

While $\mathcal{L}_{\text{adv}}$ promotes pixel fidelity, perceptual similarity, and adversarial realism, it remains agnostic to how spectral content is allocated across experts. To align training with the frequency-aware routing in Subsection~\ref{sec:freqaware}–\ref{sec:restoration-detailed}, we add a single frequency regularizer on adapter outputs that penalizes out-of-band synthesis. This preserves objective parsimony, only one additional scalar term, while explicitly enforcing low/high-band specialization.

\noindent\textbf{Frequency Regularizer.}
To make the two LoRA–MoE experts specialize in complementary bands, we penalize out-of-band energy on their adapter outputs. Let $\mathcal{L}_g$ be a depthwise FIR low-pass along the token axis, and define its complementary high-pass $\mathcal{H}_g \triangleq I-\mathcal{L}_g$. For the low/high-frequency experts’ outputs $y_{\text{low}}$ and $y_{\text{high}}$, we minimize $\mathcal{L}_{\mathrm{freq}}=\mathrm{mean}\!\big[\|\mathcal{H}_g(y_{\text{low}})\|_2^2+\|\mathcal{L}_g(y_{\text{high}})\|_2^2\big]$. In practice, $g$ is a fixed Gaussian kernel stored as a buffer (non-trainable), and the loss is summed across layers.

At this point, the overall FAPE-IR objective is:
\begin{equation}
\mathcal{L}_{\text{Total}}
=\mathcal{L}_{\text{adv}}
+\gamma\,\mathcal{L}_{\mathrm{freq}}.
\end{equation}

In general, by marrying a frequency-aware planner with band-specialized LoRA–MoE and an  adversarial training objective augmented by a single frequency regularizer, our FAPE-IR reconciles interpretability with fidelity.
\section{Experiments}
\subsection{Experimental Setup}

\begin{table*}[t]
\centering
\tablesize
\tighttabsetup
\caption{Unified comparison across six AIO-IR task series. Each series shows five metrics (PSNR↑, SSIM↑, LPIPS↓, FID↓, DISTS↓). The best results are highlighted in \textbf{\best{red}}, and the second-best results are shown in \textbf{\second{blue}}.}
\label{tab:IR_series}
\begin{adjustbox}{width=\textwidth}
\begin{tabular}{@{} c *{15}{c} @{}}
\toprule

% -------- 上面三个：Rain / Noise / Blur(Dynamic) --------
\TriSeriesHeader{Deraining}{Denoising}{Deblurring}
\toprule
PromptIR~\cite{potlapalli2306promptir} & \second{21.94} & \second{0.73} & \second{0.30} & 100.17 & \second{0.21} & 31.27 & \second{0.84} & \second{0.16} & 57.82 & \second{0.14} & -- & -- & -- & -- & -- \\
FoundIR~\cite{li2024foundir} & 19.68 & 0.68 & 0.37 & 118.29 & 0.24 & 30.85 & 0.80 & 0.21 & 67.08 & 0.22 & 28.35 & 0.85 & 0.21 & 42.03 & 0.16 \\
DFPIR~\cite{tian2025degradation} & 19.27 & 0.68 & 0.37 & 104.13 & 0.27 & 31.31 & 0.83 & 0.17 & 61.80 & 0.15 & \second{30.82} & \best{0.90} & \second{0.15} & \second{28.23} & \second{0.12} \\
MoCE-IR~\cite{zamfir2025complexity} & 21.42 & 0.71 & 0.32 & 101.21 & 0.23 & \best{31.63} & \second{0.84} & \second{0.16} & \second{56.74} & \second{0.14} & 21.67 & 0.74 & 0.28 & 48.85 & 0.19 \\
AdaIR~\cite{cui2024adair} & 21.64 & 0.71 & 0.32 & \second{100.07} & 0.23 & \second{31.45} & \second{0.84} & 0.17 & 58.05 & \second{0.14} & 22.23 & 0.75 & 0.28 & 49.64 & 0.19 \\
\rowcolor{blue!8}
FAPE-IR (Ours) & \best{28.30} & \best{0.84} & \best{0.09} & \best{21.55} & \best{0.07} & 30.34 & \best{0.87} & \best{0.10} & \best{30.33} & \best{0.09} & \best{30.91} & \second{0.88} & \best{0.10} & \best{16.53} & \best{0.07} \\

\midrule

\TriSeriesHeader{Desnowing}{Dehazing}{Low-light enhancement}
\toprule
PromptIR~\cite{potlapalli2306promptir} & -- & -- & -- & -- & -- & \second{21.94} & \second{0.90} & \second{0.14} & \second{24.26} & \second{0.11} & -- & -- & -- & -- & -- \\
FoundIR~\cite{li2024foundir} & 23.79 & \second{0.78} & 0.28 & 29.31 & 0.17 & 13.65 & 0.75 & 0.31 & 34.41 & 0.21 & 14.44 & 0.67 & 0.32 & 86.46 & 0.22 \\
DFPIR~\cite{tian2025degradation} & 20.78 & 0.74 & 0.30 & 34.86 & 0.18 & 13.54 & 0.73 & 0.30 & 29.43 & 0.21 & \best{25.92} & \best{0.90} & \second{0.16} & 52.15 & 0.13 \\
MoCE-IR~\cite{zamfir2025complexity} & 23.82 & 0.76 & 0.28 & 30.47 & \second{0.16} & 15.54 & 0.80 & 0.22 & 27.98 & 0.15 & 23.36 & \second{0.89} & \second{0.16} & \second{44.61} & \second{0.12} \\
AdaIR~\cite{cui2024adair} & \second{24.19} & 0.77 & \second{0.27} & \second{26.82} & \second{0.16} & 19.89 & 0.88 & 0.16 & 25.71 & 0.12 & 24.51 & 0.86 & 0.17 & 50.78 & 0.13\\
\rowcolor{blue!8}
FAPE-IR (Ours) & \best{30.29} & \best{0.88} & \best{0.08} & \best{1.49} & \best{0.06} & \best{33.85} & \best{0.97} & \best{0.04} & \best{7.21} & \best{0.04} & \second{25.34} & \best{0.90} & \best{0.11} & \best{40.30} & \best{0.09}\\
\bottomrule
\end{tabular}
\end{adjustbox}
\vspace{-4.5ex}
\end{table*}

\begin{table}[t]
\centering
\tablesize
\tighttabsetup
\caption{Unified comparison across SR task series. Each series shows five metrics. The best results are marked in \textbf{\best{red}}, and the second-best results are shown in \textbf{\second{blue}}.}
\label{tab:SR_series}
\begin{adjustbox}{scale=1.15}
\begin{tabular}{@{} c *{5}{c} @{}}
\toprule
% -------- 上面三个：Rain / Noise / Blur(Dynamic) --------
\SinSeriesHeader{SR}
\toprule
StableSR~\cite{wang2024exploiting} & 25.10 & 0.73 & 0.29 & \second{110.18} & 0.22 \\
DiffBIR~\cite{lin2024diffbir} & 25.65 & 0.65 & 0.39 & 124.53 & 0.25 \\
SeeSR~\cite{wu2024seesr} & 26.60 & 0.74 & 0.31 & 122.10 & 0.23 \\
PASD~\cite{yang2024pixel} & \second{26.87} & 0.75 & 0.29 & 120.46 & \second{0.21} \\
OSEDiff~\cite{wu2024one} & 26.49 & \second{0.76} & \second{0.28} & 117.55 & \second{0.21} \\
PURE~\cite{wei2025perceive} & 24.47 & 0.64 & 0.38 & 121.07 & 0.24 \\
\rowcolor{blue!8}
FAPE-IR (Ours) & \best{28.53} & \best{0.85} & \best{0.19} & \best{85.82} & \best{0.15} \\
\bottomrule
\end{tabular}
\end{adjustbox}
\vspace{-5ex}
\end{table}

\noindent\textbf{Training Data.}
We construct a training corpus covering seven restoration tasks, including deblurring, dehazing, low-light enhancement, deraining, desnowing, denoising, and super-resolution.
Weather-related degradations (deraining, desnowing, dehazing) are sourced from Snow100K-Train~\cite{liu2018desnownet}, Rain100-L/H-Train~\cite{yang2019joint}, and the NTIRE 2025 Challenge dataset~\cite{li2025ntire}.
For dehazing, we use a curated $10\mathrm{K}$ subset of OTS/ITS-Train~\cite{li2018benchmarking} combined with URHI-Train~\cite{fang2024real}.
Deblurring utilizes RealBlur-Train (R/J)~\cite{rim2020real}, GoPro-Train, and its gamma-corrected variant~\cite{nah2017deep}.
Low-light enhancement is trained on LOL-v2-Train~\cite{yang2021sparse}.
For denoising, BSD400~\cite{martin2001database} and WaterlooED~\cite{ma2016waterloo} are corrupted with Gaussian noise at $\sigma\in\{15,25,50\}$.
For super-resolution, we follow prior work~\cite{wu2024one,wu2024seesr,wei2025perceive} and adopt LSDIR~\cite{li2023lsdir} plus a $10\mathrm{K}$-image FFHQ subset~\cite{karras2019style}, with degradations synthesized via Real-ESRGAN~\cite{wang2021real}.

\noindent\textbf{Testing Data.}
We evaluate our method on a comprehensive collection of benchmarks that span both synthetic and real-world degradations across deblurring, dehazing, low-light enhancement, deraining, desnowing, denoising, and super-resolution.
For deblurring, we adopt the official test splits of RealBlur-J/R~\cite{rim2020real}, GoPro, and GoPro\_gamma~\cite{nah2017deep}. 
Dehazing evaluation is conducted using the ITS validation set and the URHI test split~\cite{li2018benchmarking}. 
Low-light enhancement is assessed on the test sets of LOL-v1~\cite{wei2018deep} and LOL-v2~\cite{yang2021sparse}. 
Deraining is benchmarked on Rain100-L/H~\cite{yang2019joint}, OutDoor~\cite{li2019heavy}, and RainDrop~\cite{qian2018attentive}. 
For desnowing, we use the Snow100K-L and Snow100K-S test subsets~\cite{liu2018desnownet} for large-scale qualitative comparison. 
Denoising performance is measured on BSD68~\cite{martin2001database} and Urban100~\cite{huang2015single}, each corrupted with additive Gaussian noise at $\sigma\in\{15,25,50\}$. 
Super-resolution is evaluated on RealSR~\cite{ji2020real} and DRealSR~\cite{wei2020component} at $\times2$ and $\times4$ scales. 
We follow standard evaluation protocols: RealSR is tested using the center-crop strategy~\cite{wei2025perceive}, while images from other benchmarks are resized to $512\times512$.

\noindent\textbf{Optimization.} We train our FAPE-IR framework using the Prodigy optimizer~\cite{mishchenko2023prodigy} for the main model and AdamW~\cite{loshchilov2017decoupled} for the discriminator head. The base learning rate for AdamW is initialized at $1e-4$ and cosine-annealed during training. All models are trained for $200\mathrm{K}$ steps with a batch size of 1 on $512\times512$ inputs across all datasets. Training is conducted on $8\times$ NVIDIA H200 GPUs. In the one-step setting, we use $\alpha=50.0$, $\beta=5.0$, $\lambda=0.5$, $\gamma=1\times10^{-3}$, and a diffusion timestep $t=300$. 

\subsection{Quantitative Results}
We conduct a quantitative comparison between FAPE-IR and recent AIO-IR methods, including PromptIR~\cite{potlapalli2306promptir} (NeurIPS’23), FoundIR~\cite{li2024foundir} (ICCV’25), AdaIR~\cite{cui2024adair} (ICLR’25), DFPIR~\cite{tian2025degradation} (CVPR’25), and MoCE-IR~\cite{zamfir2025complexity} (CVPR’25), as well as some SR methods: StableSR~\cite{wang2024exploiting} (IJCV’24), DiffBIR~\cite{lin2024diffbir} (ECCV’24), SeeSR~\cite{wu2024seesr} (CVPR’24), PASD~\cite{yang2024pixel} (ECCV’24), OSEDiff~\cite{wu2024one} (NeurIPS’24), and PURE~\cite{wei2025perceive} (ICCV’25). \textit{We evaluate all tasks on a single trained model rather than training for each task and testing them individually.}

\begin{table}[t]
  \centering
  \begingroup
  \setlength{\tabcolsep}{4pt}
  \renewcommand{\arraystretch}{1.2}
  \caption{Complexity comparison. All methods are evaluated on $512\times512$ inputs, and inference is measured on an H200 GPU. Param (G) denotes the inference memory footprint.}
  \label{tab:flops-time-transposed}
  \begin{adjustbox}{max width=\linewidth}
    % m{1.8cm} 可按需调宽
    \begin{tabular}{>{\centering\arraybackslash}m{1.8cm} *{6}{c}}
      \toprule
      \multirow[c]{2}{*}{\textbf{Metric}} &
      \multicolumn{6}{c}{\textbf{Methods}} \\
      \cmidrule(lr){2-7}
      & FoundIR~\cite{li2024foundir}
      & DFPIR~\cite{tian2025degradation}
      & MoCE-IR~\cite{zamfir2025complexity}
      & AdaIR~\cite{cui2024adair}
      & PURE~\cite{wei2025perceive}
      & Ours \\
      \midrule
      Time (s)  & 0.18 & 0.08 & 0.10 & 0.10 & 201.67 & 1.57 \\
      Param (G) & 3.36 & 3.92 & 2.11 & 3.49 & 26.22 & 38.92 \\
      \bottomrule
    \end{tabular}
  \end{adjustbox}
  \endgroup
  \vspace{-4ex}
\end{table}

\begin{figure*}[t]
\centering
\includegraphics[width=\linewidth]{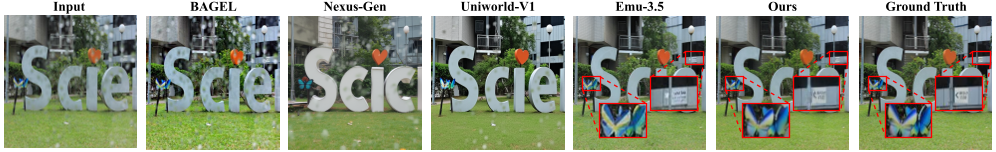}
\caption{Comparison among unified models, including BAGEL~\cite{deng2025emerging}, Nexus-Gen~\cite{zhang2504nexus}, Uniworld-V1~\cite{lin2025uniworld}, and Emu3.5~\cite{cui2025emu3}.}
\label{fig:qual4}
\vspace{-2ex}
\end{figure*}

\begin{figure}[t]
\centering
\includegraphics[width=\linewidth]{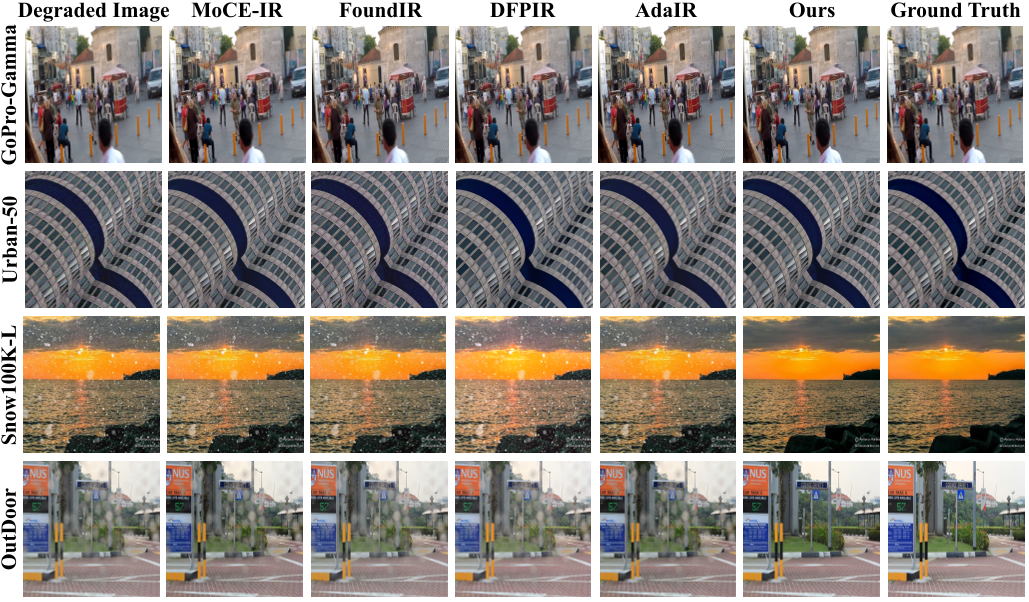}
\caption{High-frequency–dominant tasks (derain/desnow/deblur/denoise): FAPE-IR preserves fine structures and textures with less ringing/oversharpening. Please zoom in for details.}
\label{fig:qual}
\vspace{-4ex}
\end{figure}

As summarized in Table~\ref{tab:IR_series} and Table~\ref{tab:SR_series} (with per-benchmark scores and perceptual metrics detailed in Table~\ref{figs:res} and Table~\ref{figs:res2} in the Appendix), FAPE-IR consistently secures either the best or second-best results across all six AIO-IR task series. On weather-related benchmarks (deraining, desnowing, dehazing), FAPE-IR brings substantial gains over the strongest baselines, typically improving PSNR by about 6–8 dB and reducing perceptual distances (LPIPS, FID, DISTS) by several times (e.g., deraining improves PSNR from 21.94 dB to 28.30 dB and FID from around 100.07 to 21.55). These improvements are most evident on degradations with rich semantic layouts and complex frequency patterns, where the frequency-aware planner guides the LoRA-MoE executor to apply band-specialized experts, yielding cleaner structures and fewer artifacts. For denoising and low-light enhancement, FAPE-IR attains slightly lower PSNR than the best methods but achieves the best SSIM and consistently lower LPIPS/FID/DISTS, suggesting that the combination of frequency-aware planning and adversarial training favors perceptually faithful details. 
On SR benchmarks, FAPE-IR again outperforms all competitors on all five metrics, for instance increasing PSNR from 26.87 dB to 28.53 dB and SSIM from 0.76 to 0.85 while notably reducing FID and DISTS, indicating that our FAPE-IR performs well on mixed degradations. Overall, these results show that explicitly modeling both semantic structure and frequency content allows FAPE-IR to strike a strong balance between distortion and perceptual quality across a broad range of degradations.

To meet real-world latency requirements, Table~\ref{tab:flops-time-transposed} further shows that, despite its larger number of parameters, FAPE-IR maintains reasonable inference time and runs notably faster than the unified counterpart PURE~\cite{wei2025perceive}, suggesting that the framework is feasible for interactive applications.

\subsection{Visual Comparison with Unified Models}
As shown in Figure~\ref{fig:qual4}, we benchmark FAPE-IR against recent unified models on low-level restoration. On RainDrop, most unified models lack explicit low-level modeling and thus fail to complete restoration, either removing only a subset of droplets or causing noticeable color shifts. Emu 3.5~\cite{cui2025emu3} shows a marked jump over earlier Emu variants~\cite{sun2023emu,sun2024generative,wang2024emu3} and contemporaries like BAGEL~\cite{deng2025emerging}, Uniworld-V1~\cite{lin2025uniworld}, and Nexus-Gen~\cite{zhang2504nexus}, likely due to its large corpus and 34.1B parameters. However, its multimodal autoregression design still introduces fine-detail artifacts. By contrast, FAPE-IR reliably restores structures without color drift, underscoring the benefits of frequency-aware planning and band-specialized execution, and pointing toward a path to transfer it to future unified models.

\begin{figure}[t]
\centering
\includegraphics[width=\linewidth]{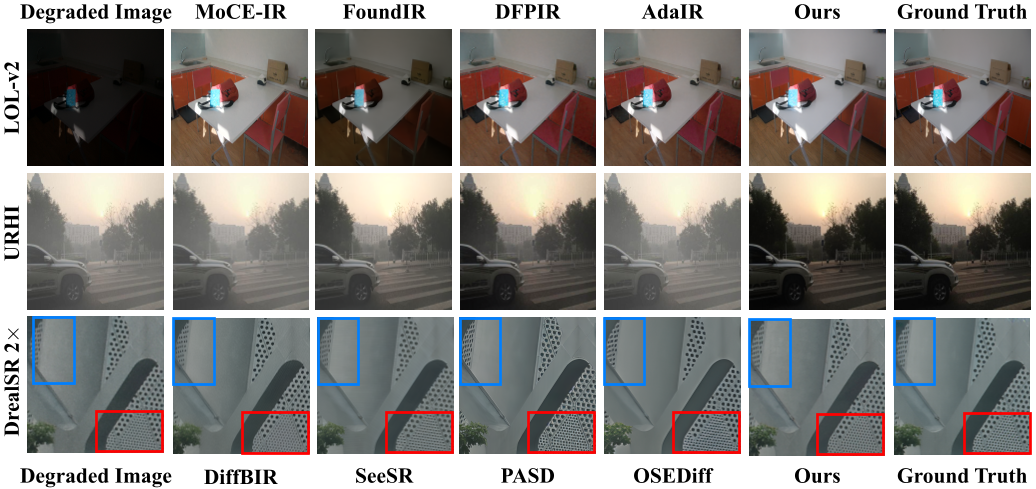}
\caption{Low-frequency \& SR: FAPE-IR cleans haze and balances illumination, preserving color; for SR it yields higher fidelity and fewer artifacts. Please zoom in for details.}
\label{fig:qual2}
\vspace{-3ex}
\end{figure}

\subsection{Visual Comparison with AIO-IR Models}
As shown in Figure~\ref{fig:qual} and Figure~\ref{fig:qual2}, we provide visual comparisons between FAPE-IR and prior AIO-IR methods. Relative to contemporaries, FAPE-IR exhibits stronger visual robustness. In high-frequency tasks, on the large-scale Snow100K-L/S benchmarks, competing methods often fail to recover fine structures, whereas FAPE-IR reconstructs them faithfully. On Urban-50, although a few methods achieve slightly higher scores, FAPE-IR produces visibly less noise and cleaner edges. In low-frequency cases, such as URHI, most baselines restore only portions of the scene, while FAPE-IR more completely removes the haze veil and corrects illumination. For super-resolution, FAPE-IR maintains the highest fidelity to GT with fewer artifacts, preserving textures and text without over-sharpening. These observations are consistent with the quantitative trends. Overall, FAPE-IR suppresses artifacts robustly, maintains strong pixel anchoring, and improves the realism of fine details.

\subsection{Ablation Study}
Under a controlled setting where we fix all hyper-parameters and train for only 10K steps, we perform a minimal ablation of the frequency-aware pipeline. As shown in Table~\ref{tab:moe_r2}, on URHI (low-frequency / haze-dominant), the baseline without Qwen, Freq-U, or Freq-G reaches only $25.03$ dB / $0.92$ SSIM. Simply adding Qwen2.5-VL without any routing control improves performance to $27.95$ dB / $0.94$ SSIM, indicating that semantic planning is beneficial but insufficient to yield effective compute allocation. Coupling planner outputs to the MoE gate (Freq-U) further stabilizes performance at around $28.9$ dB / $0.94$--$0.95$ SSIM, showing that the model can already autonomously perceive and select frequency bands. Injecting token-derived spectral priors into the gate (Freq-G) yields the best result of $29.71$ dB / $0.95$ SSIM, corresponding to a $+4.68$ dB PSNR gain over the no-planner baseline and noticeably more consistent behavior across configurations.

We also study the impact of LoRA--MoE capacity ($r$) and expert allocation. With no frequency priors, a balanced capacity (8/8) is the most stable. Under the Freq-G constraint, a mildly asymmetric allocation (8/16), aligned with the planner-indicated dominant band, maintains near-optimal performance ($29.60$ dB). In contrast, extremely skewed settings (4/16) degrade performance both with and without priors (down to $25.93$ / $24.75$ dB), suggesting that simply reshaping the LoRA structure is insufficient to disentangle frequency bands. Instead, the coupling of frequency priors with semantic planning is key to achieving stable routing and overall performance gains.

\begin{table}[t]
  \centering
  \small
  \caption{Ablation on URHI benchmark. Qwen: remove Qwen2.5-VL; Freq-U: remove frequency-aware text router; Freq-G: remove FIR spectral router; $r$: LoRA rank in expert combos.}
  \label{tab:moe_r2}
  \begin{adjustbox}{scale=0.75}
  \begin{tabular}{cccccccc}
    \toprule
    \textbf{Qwen} & \textbf{Freq-U} & \textbf{Freq-G} & \textbf{r=4} & \textbf{r=8} & \textbf{r=16} &
    \textbf{PSNR$\uparrow$} & \textbf{SSIM$\uparrow$} \\
    \midrule
    \xmark & \xmark & \xmark &  & \cmark &  & 25.03 & 0.92 \\
    \cmark & \xmark & \xmark &  & \cmark &  & 27.95 & 0.94 \\
    \addlinespace[2pt]
    \multicolumn{8}{l}{\textit{+ Freq-U enabled}} \\
    \cmark & \cmark & \xmark &  & \cmark &  & 28.92 & 0.94 \\
    \cmark & \cmark & \xmark &  & \cmark \cmark &  & 28.89 & 0.95 \\
    \cmark & \cmark & \xmark & \cmark &  & \cmark & 24.75 & 0.91 \\
    \cmark & \cmark & \xmark &  & \cmark & \cmark & 24.77 & 0.92 \\
    \addlinespace[2pt]
    \multicolumn{8}{l}{\textit{+ Freq-G enabled}} \\
    \cmark & \cmark & \cmark &  & \cmark \cmark &  & 29.71 & 0.95 \\
    \cmark & \cmark & \cmark & \cmark &  & \cmark & 25.93 & 0.93 \\
    \cmark & \cmark & \cmark &  & \cmark & \cmark & 29.60 & 0.95 \\
    \bottomrule
  \end{tabular}
  \end{adjustbox}
  \vspace{-2ex}
\end{table}

\begin{figure}[t]
\centering
\setlength{\tabcolsep}{4pt}
\begin{subfigure}[t]{0.5\linewidth}
\centering
\includegraphics[width=\linewidth]{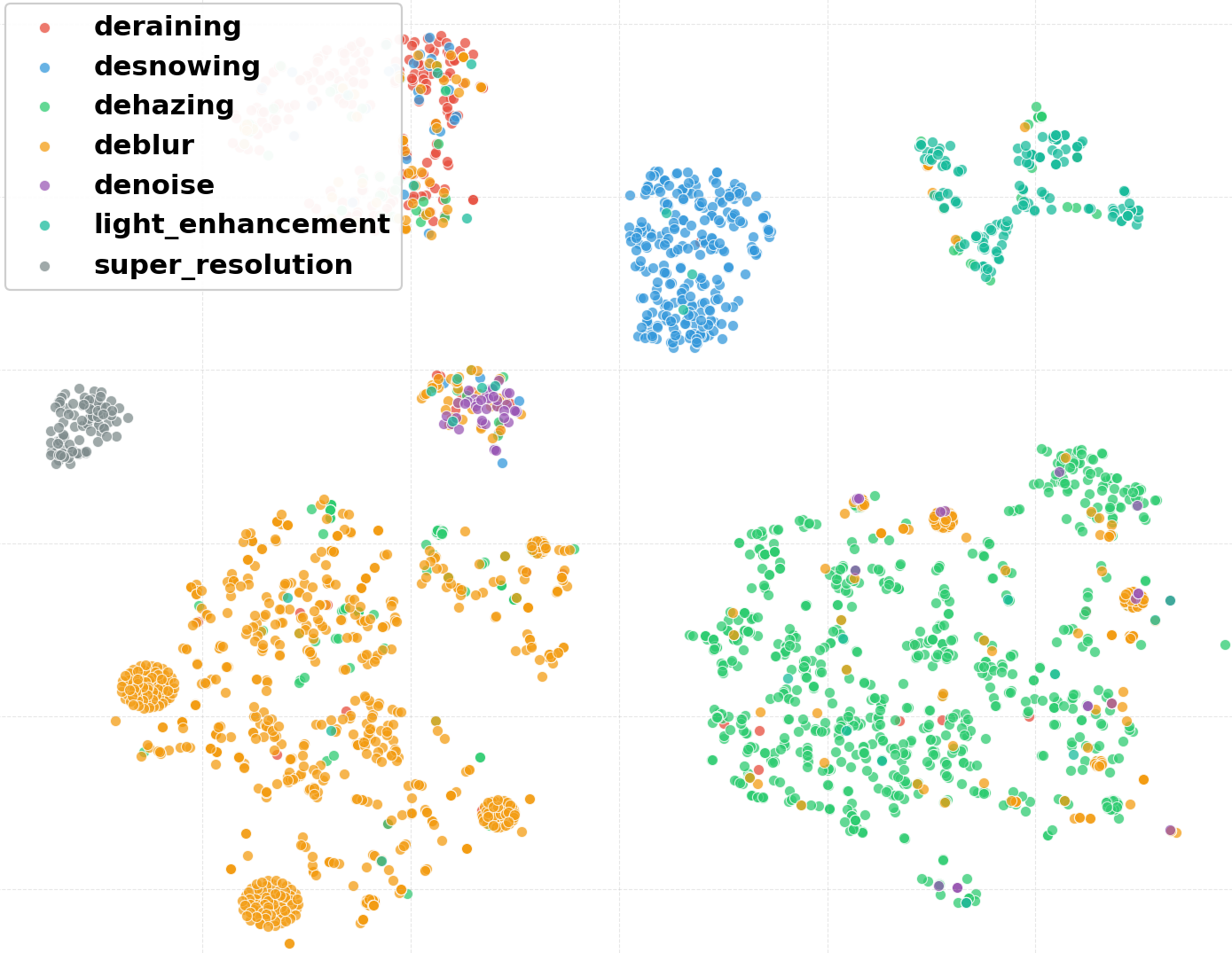}
\label{fig:tsne}
\end{subfigure}\hfill
\begin{subfigure}[t]{0.5\linewidth}
\centering
\includegraphics[width=\linewidth]{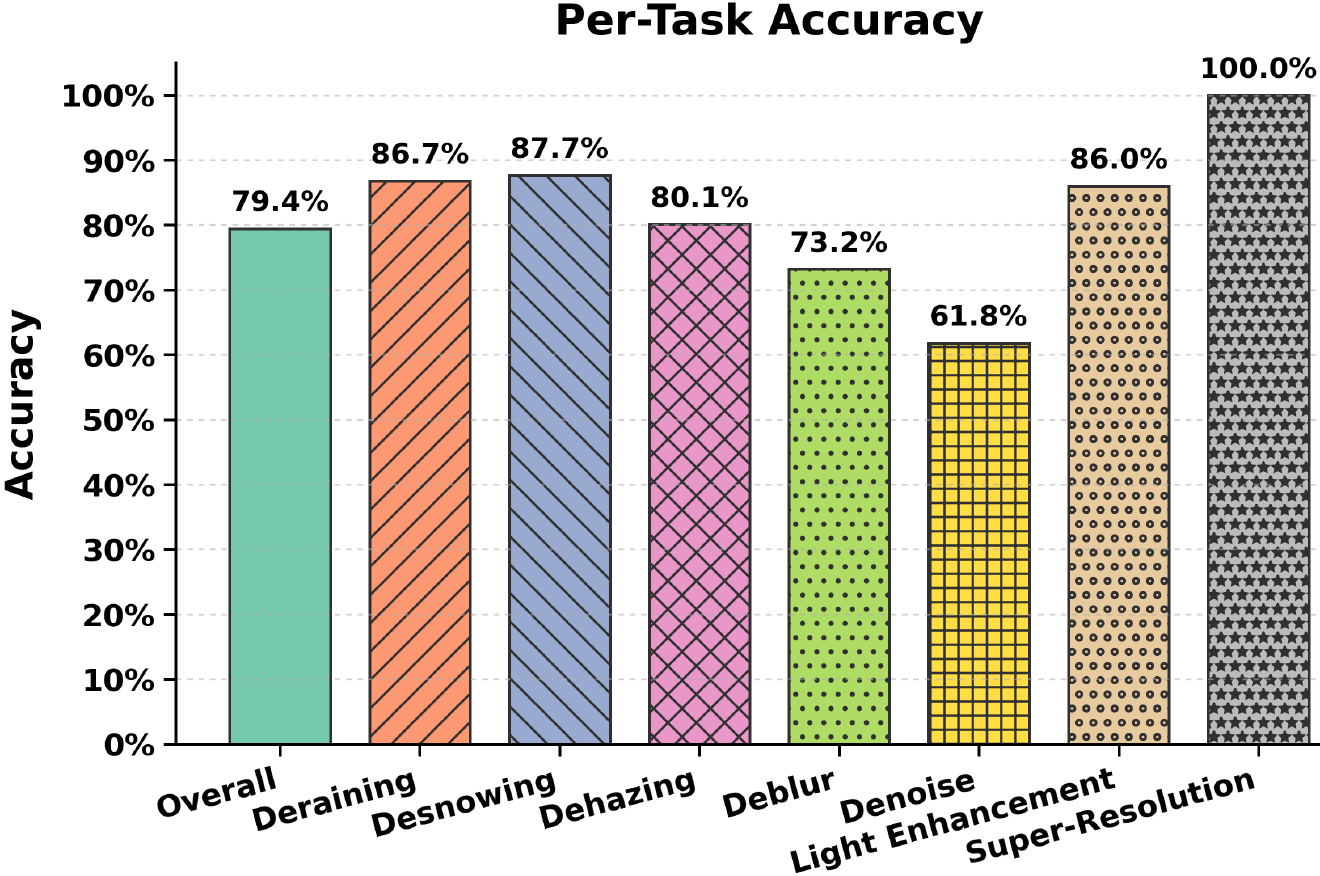}
\label{fig:acc}
\end{subfigure}
\vspace{-0.6em}
\caption{\textbf{Low-level task planning.} (a) The frequency-aware planner yields separable, spectrum-aligned manifolds in feature space using only its own hidden states. (b) Text-based task classification accuracy of Qwen2.5-VL on the corresponding tasks.}
\label{fig:tsne-acc}
\vspace{-3.5ex}
\end{figure}

\subsection{Low-Level Task Planning}
\label{sec:lowlevel-cls}
We assess whether the planner’s decisions form a task- and frequency-aware representation that is both separable and useful. Specifically, we freeze the Qwen2.5-VL planner, extract the decision token sequence $FP$ for each input (BSD68-50, URHI, GoPro, LOL-v2, RainDrop, Snow100K-S, RealSR$\times 2$), and mean-pool their final-layer states. The resulting t-SNE embedding (Figure~\ref{fig:tsne-acc}) exhibits clean, task-dependent manifolds.
Nevertheless, the text-based task readout achieves 79.4\% accuracy overall (all images in BSD68-50 are grayscale, which can spuriously trigger low-light flags).
Moreover, the exemplars (Figure~\ref{fig:cls}) show that the planner’s outputs not only describe but also instantiate a causal frequency control path for restoration.
Collectively, these analyses demonstrate that the planner provides interpretable, frequency-aligned, and causally effective signals that steer band-specialized restoration, explaining the sharper, artifact-averse behavior.

\begin{figure}[t]
\centering
\includegraphics[width=\linewidth]{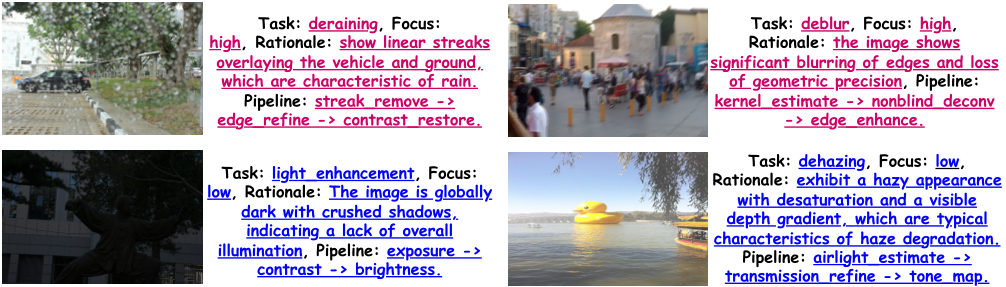}
\caption{\textbf{Frequency-aware planner outputs.} We visualize results on representative corrupted inputs covering four tasks.}
\label{fig:cls}
\vspace{-2ex}
\end{figure}

\begin{figure}[t]
\centering
\includegraphics[width=\linewidth]{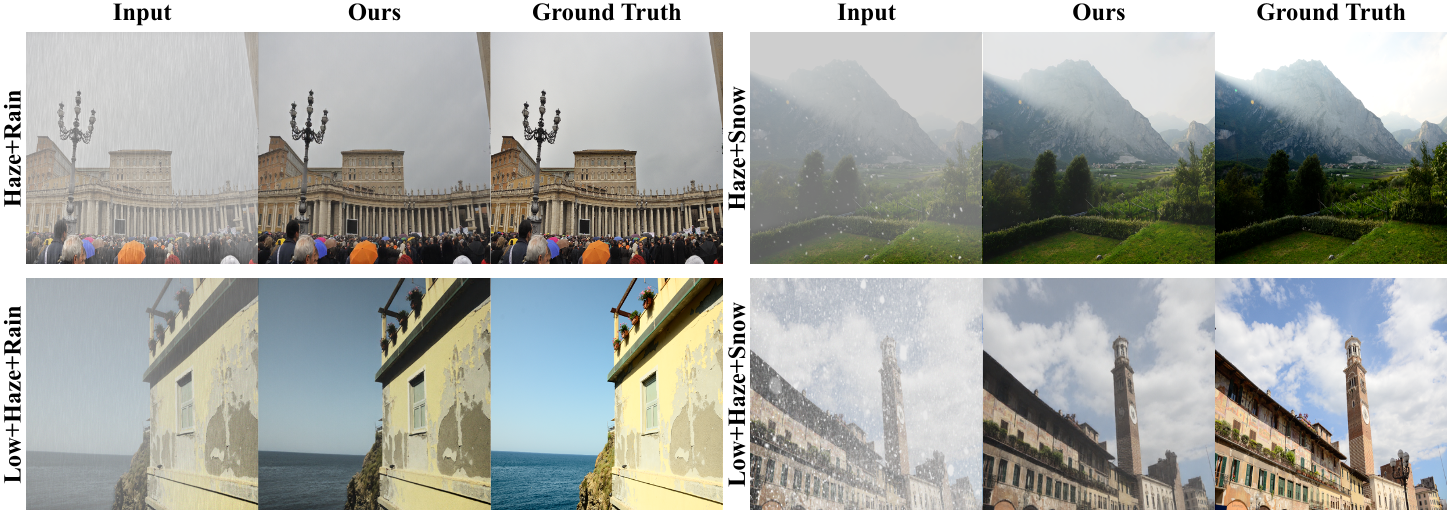}
% \caption{Qualitative generalization to compound degradations (haze+rain/snow; low-light+haze+rain/snow). FAPE-IR removes low-frequency artifacts while preserving high-frequency details and suppressing cross-artifacts. Please zoom in to inspect details.}
\caption{Compound degradations (haze+rain/snow; low-light mixtures): FAPE-IR removes low-frequency artifacts, preserves details, and reduces cross-artifacts. Please zoom in for details.}
\label{fig:qual3}
\vspace{-3ex}
\end{figure}

\subsection{Generalization to Multi-Degradation Tasks}
To assess generalization beyond single degradations, we evaluate FAPE-IR on the recent CDD-11 benchmark~\cite{guo2024onerestore}, which contains real-world images with multi-degradation tasks. As shown in Figure~\ref{fig:qual3}, although FAPE-IR is trained exclusively on single-degradation data, it exhibits strong out-of-distribution performance on these compound mixtures, indicating a non-trivial level of degradation-agnostic generalization. We attribute this to the frequency-aware LoRA-MoE module, which in a single forward pass dispatches complementary experts, allocating low-frequency capacity to illumination correction and veil removal while activating high-frequency capacity around edges and fine textures, consistent with the layered nature of degradations. Overall, the results substantiate FAPE-IR’s robustness and its ability to generalize to unseen composite degradations.

\vspace{-1ex}
\section{Conclusion}
We introduce FAPE-IR, a unified AIO-IR framework that couples an interpretable, frequency-aware planner with a diffusion-based executor equipped with the frequency-aware LoRA-MoE module, trained under the adversarial training objective with a single frequency regularizer. By making the semantics–spectrum link explicit, FAPE-IR improves cross-task disentanglement and sharing, stabilizes optimization, and improves open-world robustness. It achieves state-of-the-art or competitive performance across seven AIO-IR tasks and shows strong zero-shot generalization on unseen composite degradations.

\section{Acknowledgements}
This work was supported by National Natural Science Foundation of China (NSFC) under Grants 62441235 and 62176178.

{
    \small
    \bibliographystyle{ieeenat_fullname}
    \bibliography{main}
}

\clearpage
\setcounter{page}{1}
\maketitlesupplementary

% \setcounter{secnumdepth}{1}
% \appendix

% ---- Section: AR & Flow Matching vs GAN (Low-Level Restoration, Formal Version) ----
\section{Distortions and Over-Generation in AR/FM and Adversarial Training for Low-Level Restoration}
\label{sec:distortion-restore}

\subsection{Notation and Standing Assumptions}
Let $\mathcal{X}=\mathbb{R}^d$ denote image space with Fourier variable $\omega\in\mathbb{R}^d$. Let $p\equiv p_{\mathrm{data}}$ be the ground-truth image law and $q_\theta$ the model law of generator $G_\theta$. For a smoothing schedule $\sigma_t\ge 0$, define $p_t:=p\ast \mathcal{N}(0,\sigma_t^2 I)$ and write $\widehat{f}$ for the Fourier transform. Let $\Phi:\mathcal{X}\to\mathbb{R}^m$ be a fixed perceptual map (LPIPS), frozen during training. Expectations are finite.

We consider a standard linear degradation model for low-level restoration:
\begin{equation}
\label{eq:degradation}
 c = Hx + \eta,\qquad x\sim p,\; \eta\sim \mathcal{N}(0,\sigma_\eta^2 I),
\end{equation}
with known operator $H:\mathcal{X}\to\mathcal{X}$ (e.g., blur+downsample for SR, convolutional blur for deblurring, masking for inpainting). Denote the conditional posterior by $p(x\mid c)$ and its smoothed versions $p_t(x\mid c):=p(\,\cdot\mid c)\ast\mathcal{N}(0,\sigma_t^2 I)$.

\begin{assumption}[Spectral regularity of natural images]
\label{ass:spectral}
The power spectrum of natural images satisfies:
\begin{equation}
    S_{xx}(\omega):=\mathbb{E}|\widehat{x}(\omega)|^2\;\asymp\;\|\omega\|^{-\kappa},\quad \kappa>0,
\end{equation}
so that energy is mainly concentrated at low frequencies and decays polynomially with frequency~\cite{van1996modelling}. This reflects the empirical smoothness of natural images and excludes degenerate high-frequency dominated signals \cite{field1987relations,ruderman1993statistics,hyvarinen2009natural,bora2017compressed,van1996modelling,simoncelli2001natural}.
\end{assumption}

\begin{assumption}[Forward operator]
\label{ass:H}
$H$ is linear, bounded, and shift-invariant on the domain of interest with frequency response $\widehat{H}(\omega)$. For blur/downsample/inpainting, $|\widehat{H}(\omega)|$ decays (or vanishes) for large $\|\omega\|$ or outside the passband~\cite{ulyanov2018deep,krishnan2009fast,levin2009understanding}.
\end{assumption}

\begin{assumption}[Regularized flow path]
\label{ass:path}
The conditional path is given by Gaussian smoothing~\cite{sporring2013gaussian,witkin1987scale,lindeberg1998feature}:
\begin{equation}
    p_t(\cdot\mid c)=p(\cdot\mid c)\ast K_{\sigma_t},
\end{equation}
with $\sigma_t$ smooth in $t$. Then $p_t$ is $C^1$ in $t$, and there exists a drift $v_t^\star(\cdot\mid c)$ such that $\partial_t p_t+\nabla\!\cdot(v_t^\star p_t)=0$. For $p_t=p\ast\mathcal N(0,\sigma_t^2 I)$ one can take the explicit choice~\cite{romano2017little,venkatakrishnan2013plug}:
\begin{equation}
    v_t^\star(x,c)= -\tfrac{\dot\sigma_t^2}{2}\,\nabla\log p_t(x\mid c) .
\end{equation}
\end{assumption}

\begin{assumption}[Lipschitz drift and locally bounded conditional density]
\label{ass:lipschitz}
Each $v_t^\star(\cdot\mid c)$ is $L$-Lipschitz. Moreover, for each $t$ there exists a compact set $K_t\subset\mathcal X$ such that the conditional densities are locally bounded and bounded away from zero on $K_t$:
\begin{equation}
    0<m_t\le p_t(x\mid c)\le M_t<\infty\quad \text{for all }x\in K_t,
\end{equation}
uniformly (in the sense appropriate to the expectations we consider). All expectations in this paper are taken over $x$ restricted to $K_t$ (or via truncation/limit arguments), which is natural in low-level restoration where reconstructions lie in a plausible compact region.
\end{assumption}

\begin{assumption}[IPM critic: attainment and compact parameterization]
\label{ass:ipm}
We assume $d_{\mathcal F}(p,q)=\sup_{f\in\mathcal F}(\mathbb E_p f-\mathbb E_q f)$ with $\mathcal F=\{f_\psi:\psi\in\Psi\}$ parameterized by a compact set $\Psi$, such that the supremum is attained at some $\psi^\star\in\Psi$. Then Danskin’s theorem applies to yield:
\begin{equation}
    \nabla_\theta d_{\mathcal F}(p,q_\theta) \;=\; -\,\mathbb E\!\left[J_{G_\theta}^\top \nabla_x f_{\psi^\star}(G_\theta)\right].
\end{equation}
(Alternatively, one may work with subgradients if $\mathcal F$ is non-compact, e.g.\ the $1$-Lipschitz ball for $W_1$.) See \cite{sriperumbudur2009integral,arjovsky2017wasserstein,ambrosio2021lectures}.
\end{assumption}

\subsection{Objectives (Restoration Setting)}
\noindent\textbf{Autoregression (AR).} Conditioned on $c$, the chain factorization is $q_\theta(x\mid c)=\prod_{t=1}^T q_\theta(x_t\mid x_{<t},c)$, and
\begin{equation}
\label{eq:ar-ml-cond}
\theta^\star_{\mathrm{AR}}\in\arg\min_\theta\,\mathbb{E}_{c\sim p(c)}\,\mathrm{KL}\big(p(\cdot\mid c)\,\|\,q_\theta(\cdot\mid c)\big).
\end{equation}

\noindent\textbf{Flow Matching (FM).} 
FM is a general framework for distribution transport, with Rectified Flow as the linear-path special case; related score-based views connect to diffusion/SDE modeling and score matching \cite{lipman2022flow,liu2022flow,song2020score,hyvarinen2005estimation}. Here we adopt the basic form based on conditional scores of smoothed posteriors:
\begin{equation}
\label{eq:fm-obj}
    \begin{aligned}
        \theta^\star_{\mathrm{FM}}\in \arg\min_{\theta}\int_0^1\! w(t)\,
        \mathbb{E}_{c\sim p(c)}\,\mathbb{E}_{x_t\sim p_t(\cdot\mid c)} \\
        \big\|s_\theta(x_t,c,t)-\nabla\log p_t(x_t\mid c)\big\|_2^2\,dt.
    \end{aligned}
\end{equation}

\noindent\textbf{Adversarial Training.} Given pairs $(c,x)$, augment the composite objective with a measurement term:
\begin{equation}
\label{eq:composite-restore}
\begin{aligned}
\mathcal{J}(\theta)
&=\lambda\, d_{\mathcal{F}}\big(p, q_\theta\big)
+\alpha\,\mathbb{E}\,\|G_\theta(c)-x\|_2^2 \\
&\quad+\beta\,\mathbb{E}\,\|\Phi(G_\theta(c)) - \Phi(x)\|_2^2,
% &\quad+\gamma\,\mathbb{E}\,\|H G_\theta(c) - c\|_2^2,
\quad \lambda,\alpha,\beta\ge 0,
\end{aligned}
\end{equation}
with $d_{\mathcal F}$ often chosen as $W_1$ (WGAN) \cite{arjovsky2017wasserstein,sriperumbudur2009integral,ambrosio2021lectures}.

\subsection{AR Causes Artifacts and Over-Generation}
\label{sec:ar-artifacts}

\noindent\textbf{Modeling assumptions for AR proofs.}
We make explicit the family of admissible AR conditionals and the inference rule used at test time.

\begin{assumption}[Single-mode AR conditionals or deterministic limit]\label{ass:ar-family}
For each $t$, conditionals $q_\theta(x_t\mid x_{<t},c)$ belong to a centered, symmetric, log-concave location family:
\begin{equation}
    \mathcal Q:=\big\{\,x_t\mapsto f_\sigma(x_t-\mu_t)\;:\; \mu_t\in\mathbb{R}^{d_t},\; \sigma\in(0,\sigma_{\max}]\,\big\},
\end{equation}
where $f_\sigma$ is even, strictly log-concave, smooth in $\sigma$, and the \emph{deterministic limit} $\sigma\!\to\!0$, i.e.\ $q_\theta(\cdot\mid x_{<t},c)\Rightarrow \delta_{\mu_t(x_{<t},c)}$.
\end{assumption}

\begin{assumption}[Greedy or vanishing-temperature decoding]\label{ass:greedy}
At test time, decoding is either greedy ($x_t=\arg\max q_\theta(\cdot\mid x_{<t},c)$) or stochastic with temperature $\tau\downarrow 0$ so that $x_t\Rightarrow \arg\max q_\theta(\cdot\mid x_{<t},c)$ in probability.
\end{assumption}

\begin{definition}[Posterior $\ker(H)$-ambiguity at step $t$]
\label{def:ambig}
Let $E_t:\mathbb{R}^{d_t}\to\mathcal X$ be the (fixed) embedding that maps the local step-$t$ variable into image space. We say that the step-$t$ posterior exhibits a symmetric $\ker(H)$-ambiguity with gap $u_t\in\mathbb{R}^{d_t}$ if for a measurable set $A_t\subseteq\{(x_{<t},c)\}$ with $p(A_t)\ge \rho_t>0$,
\begin{equation}
    \begin{aligned}
    p(x_t \mid x_{<t},c)
    &= \tfrac12 \,\delta_{x_t^{(0)}(x_{<t},c)+u_t} \\
    & + \tfrac12 \,\delta_{x_t^{(0)}(x_{<t},c)-u_t},\\
    H\,E_t u_t &= 0 .
    \end{aligned}
\end{equation}
Here $x_t^{(0)}$ is the posterior midpoint in the local coordinates; the corresponding image-space ambiguity direction is $E_t u_t$.
\end{definition}

\begin{theorem}[AR deterministic collapse along $\ker(H)$]\label{thm:ar-collapse}
Under Assumptions~\ref{ass:ar-family}--\ref{ass:greedy} and Definition~\ref{def:ambig}, the maximum-likelihood solution of Equation~\ref{eq:ar-ml-cond} satisfies
\begin{equation}
   \mu_t^\star(x_{<t},c)=x_t^{(0)}(x_{<t},c) \text{ for $p$-a.e.\ $(x_{<t},c)\in A_t$}, 
\end{equation}
and the decoded token obeys $x_t=\mu_t^\star(x_{<t},c)$ a.s.\ in the greedy limit. Consequently, the image-space component of $x_t$ along the ambiguity direction $\operatorname{span}\{E_t u_t\}$ is collapsed to zero mean, removing genuine $\pm E_t u_t$ variability and inducing structured artifacts aligned with $\ker(H)$.
\end{theorem}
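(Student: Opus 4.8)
The plan is to reduce the chain-level objective in Equation~\ref{eq:ar-ml-cond} to a per-step, pointwise variational problem, solve that problem in closed form on the ambiguous set $A_t$, and then push the optimum through the greedy decoder. By the chain rule for $\mathrm{KL}$, Equation~\ref{eq:ar-ml-cond} decomposes into a sum of terms $\mathbb{E}_{c,\,x_{<t}\sim p}\,\mathrm{KL}\big(p(x_t\mid x_{<t},c)\,\|\,q_\theta(x_t\mid x_{<t},c)\big)$, whose only $\theta$-dependent part is the conditional cross-entropy $-\mathbb{E}_{(x_{<t},c)}\,\mathbb{E}_{x_t\sim p(\cdot\mid x_{<t},c)}\big[\log f_\sigma\big(x_t-\mu_t(x_{<t},c)\big)\big]$, which is finite because $f_\sigma$ is a strictly log-concave density. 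Since the location map $\mu_t(\cdot)$ and the scale $\sigma$ range freely within $\mathcal Q$, it suffices to minimize this quantity pointwise in $(x_{<t},c)$. On $A_t$ the posterior is the symmetric two-atom law of Definition~\ref{def:ambig}; writing $v:=x_t^{(0)}(x_{<t},c)-\mu_t$, the quantity to be maximized becomes $g(v):=\tfrac12\log f_\sigma(v+u_t)+\tfrac12\log f_\sigma(v-u_t)$.

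Next I would use the two structural hypotheses on $f_\sigma$ in Assumption~\ref{ass:ar-family}. Strict log-concavity makes each summand of $g$ strictly concave in $v$, so $g$ is strictly concave; evenness of $f_\sigma$ gives $g(-v)=g(v)$. A strictly concave, even function on $\mathbb{R}^{d_t}$ has a unique maximizer at the origin: if $v^\star\ne 0$ maximized $g$, then $g(0)=g\big(\tfrac12 v^\star+\tfrac12(-v^\star)\big)>\tfrac12 g(v^\star)+\tfrac12 g(-v^\star)=g(v^\star)$, a contradiction. Hence $v=0$, i.e.\ $\mu_t^\star(x_{<t},c)=x_t^{(0)}(x_{<t},c)$ for $p$-a.e.\ $(x_{<t},c)\in A_t$; note this holds for every admissible $\sigma$, so no choice of scale escapes it. For the decoding statement, greedy decoding returns $\arg\max_{x_t} f_\sigma(x_t-\mu_t^\star)$, which equals $\mu_t^\star$ since an even strictly log-concave density is unimodal with mode at its center; the vanishing-temperature (resp.\ $\sigma\to 0$) limit in Assumption~\ref{ass:greedy} concentrates the stochastic sampler at the same point. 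Therefore $x_t=\mu_t^\star(x_{<t},c)=x_t^{(0)}(x_{<t},c)$ almost surely.

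For the artifact conclusion I would compare image-space marginals along the ambiguity direction $\operatorname{span}\{E_t u_t\}$. Because $H E_t u_t=0$, both atoms $E_t(x_t^{(0)}\pm u_t)$ induce the identical measurement $H E_t x_t^{(0)}$, so the $\pm E_t u_t$ split is a genuine, measurement-invisible degree of freedom lying in $\ker H$; the true conditional places mass $\tfrac12$ on each, so its marginal along $\operatorname{span}\{E_t u_t\}$ has variance $\|E_t u_t\|^2$. The AR solution instead emits the single value $E_t x_t^{(0)}$, the arithmetic mean of the two admissible reconstructions, so that marginal collapses to a point mass at the midpoint and its variance drops to $0$. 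Since the two reconstructions differ precisely in structured null-space content---e.g.\ the position or phase of an edge or texture annihilated by blur or downsampling---their average is a ghosted, over-smoothed structure off the data manifold, i.e.\ a structured artifact aligned with $\ker H$, which is the claimed deterministic collapse.

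The main obstacle is not the optimization step---the even-plus-strictly-log-concave argument is short---but the surrounding bookkeeping: justifying the pointwise reduction of the chain-level objective (measurability of the optimal map $\mu_t^\star(\cdot)$, expressivity of $\mathcal Q$ so the pointwise optimum is attained, and handling the fact that $p(\cdot\mid x_{<t},c)$ is atomic while $q_\theta$ has a density, via the cross-entropy form or via a smoothing/limit argument), together with stating the final ``structured artifact'' conclusion precisely enough to count as a theorem. For the latter I would anchor rigour in the marginal-variance collapse along $\operatorname{span}\{E_t u_t\}\subseteq\ker H$ and present the visual-artifact phrasing as its interpretation.
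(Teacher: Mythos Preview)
Your proposal is correct and follows essentially the same route as the paper's proof: both reduce the chain-level MLE to the per-step conditional log-likelihood on $A_t$, then use evenness plus strict log-concavity of $f_\sigma$ to show the unique maximizer is the midpoint $\mu_t^\star=x_t^{(0)}$, and finally invoke greedy/vanishing-temperature decoding to obtain $x_t=\mu_t^\star$ almost surely. Your write-up is in fact more careful than the paper's (which is quite terse): you make the substitution $v=x_t^{(0)}-\mu_t$ explicit, spell out the $g(-v)=g(v)$ symmetry and the strict-concavity contradiction, and anchor the artifact claim in the variance collapse along $\operatorname{span}\{E_t u_t\}$, whereas the paper simply asserts that the collapsed direction is unobservable in $c$ and hence artifact-prone.
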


\begin{proof}
Fix $(x_{<t},c)\in A_t$. By Assumption~\ref{ass:ar-family} the conditional log-likelihood for center $\mu$ is
\(
\ell(\mu)=\log f_\sigma(x_t^{(0)}+u_t-\mu)+\log f_\sigma(x_t^{(0)}-u_t-\mu).
\)
Since $f_\sigma$ is even and strictly log-concave, $\ell$ is strictly concave and achieves its unique maximum at the midpoint $\mu=x_t^{(0)}$. Teacher forcing MLE therefore yields $\mu_t^\star=x_t^{(0)}$ on $A_t$, and by Assumption~\ref{ass:greedy} the decoded value equals $\mu_t^\star$ almost surely in the deterministic limit. Because $H E_t u_t=0$, the collapsed direction is exactly unobservable in $c$, hence constitutes artifact-prone freedom. 
\end{proof}

\begin{remark}[When collapse can be avoided]\label{rem:avoid}
If the conditional family admits \emph{explicit multimodality} (e.g.\ mixture components with separate centers and nontrivial sampling at test time), Theorem~\ref{thm:ar-collapse} need not hold. This motivates latent-variable AR or distributional heads when $H$ is severely rank-deficient.
\end{remark}

\noindent\textbf{From local collapse to global artifact patterns.}
We connect stepwise ambiguity to image-level artifacts via an additive geometric functional.

\begin{definition}[Artifact \emph{deficit}]
\label{def:artifact}
Let $\Pi:\mathcal X\to\mathbb{R}_{\ge 0}$ be a convex seminorm measuring nullspace-structured deviations (e.g.\ $\Pi(x)=\|P_{\ker(H)}x\|_2$ or a high-pass seminorm restricted to $\ker(H)$). For a sample $\tilde x\sim q_\theta(\cdot\mid c)$ define the \emph{artifact deficit}:
\begin{equation}
    \mathcal D(\tilde x;c)\;:=\;\mathbb E_{p(x\mid c)}\Pi(x)\;-\;\Pi(\tilde x).
\end{equation}
By Jensen’s inequality and Theorem~\ref{thm:ar-collapse}, when the posterior on a direction is symmetric and decoding collapses to the midpoint, $\mathcal D$ is typically nonnegative; its positivity quantifies the loss of genuine posterior variability along $\ker(H)$.
\end{definition}

\begin{proposition}[TV-based upper bound on deficit over ambiguity sets]
\label{prop:artifact-upper-fixed}
Suppose Definition~\ref{def:ambig} holds for indices $t\in\mathcal T\subseteq\{1,\dots,T\}$ with probabilities $\{\rho_t\}$ and ambiguity vectors $\{u_t\}$. Let $\Pi$ be any convex seminorm that is strictly convex on $\ker(H)$. For each $R>0$, define the truncated seminorm $\Pi_R:=\min\{\Pi,R\}$ to ensure boundedness on $K_t$ (Assumption~\ref{ass:lipschitz}). Define on $A_t$:
\begin{equation}
    \bar\varepsilon_t^{(A)}
    :=\operatorname*{ess\,sup}_{(x_{<t},c)\in A_t}
    \mathrm{TV}\!\big(p(\cdot\mid x_{<t},c),\,q_\theta(\cdot\mid x_{<t},c)\big).
\end{equation}
Let
\begin{equation}
\begin{aligned}
&\Delta_{\Pi_R}^{\max} 
:= \sup_{t\le T}\sup_{(x_{<t},c)\in A_t} \tfrac12\Bigl(\Pi_R(x_t^{(0)}{+}u_t) \\
&+\Pi_R(x_t^{(0)}{-}u_t)-2\Pi_R(x_t^{(0)})\Bigr).
\end{aligned}
\end{equation}
Then
\begin{equation}
\label{eq:artifact-upper-correct}
\begin{aligned}
&\mathbb{E}\bigl[\mathcal{D}_R(\tilde x;c)\bigr]
\ \le\ 
\Delta_{\Pi_R}^{\max}\sum_{t=1}^{T}\rho_t\,\bar\varepsilon_t^{(A)} \\
&\;+\;\mathbb E\!\big[\mathcal D_R(\tilde x;c)\,\mathbf 1_{(\cup_t A_t)^\complement}\big],
\end{aligned}
\end{equation}
where $\mathcal D_R$ is defined by replacing $\Pi$ with $\Pi_R$ in Definition~\ref{def:artifact}. Passing to the limit $R\to\infty$ by monotone convergence yields the same bound for $\mathcal D$ provided the moments in Assumption~\ref{ass:lipschitz} hold. The complement term can be further bounded once additional structure outside $\cup_t A_t$ is specified (e.g.\ no nullspace ambiguity or separate curvature bounds).
\end{proposition}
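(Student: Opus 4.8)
The plan is to peel off the complement term, which already appears verbatim on the right-hand side, and bound the remaining contribution $\mathbb{E}\!\left[\mathcal D_R(\tilde x;c)\,\mathbf 1_{\cup_t A_t}\right]$ by a telescoping argument along the autoregressive factorization. First I would integrate out $\tilde x\sim q_\theta(\cdot\mid c)$, writing $\mathbb{E}[\mathcal D_R(\tilde x;c)]=\mathbb{E}_{c}\!\left[\mathbb{E}_{p(\cdot\mid c)}\Pi_R-\mathbb{E}_{q_\theta(\cdot\mid c)}\Pi_R\right]$, and introduce hybrid laws $\mu^{(k)}(\cdot\mid c)$ that use the true conditionals $p(x_j\mid x_{<j},c)$ for $j\le k$ and the model conditionals for $j>k$, so that $\mu^{(0)}=q_\theta$, $\mu^{(T)}=p$, and $\mathbb{E}_{p}\Pi_R-\mathbb{E}_{q_\theta}\Pi_R=\sum_{k=1}^{T}\delta_k$ with $\delta_k=\mathbb{E}_{\mu^{(k)}}\Pi_R-\mathbb{E}_{\mu^{(k-1)}}\Pi_R$. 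Because $\mu^{(k)}$ and $\mu^{(k-1)}$ share both the prefix law and the suffix sampling rule and differ only in the step-$k$ conditional, $\delta_k=\mathbb{E}_{x_{<k}}\!\left[\int\!\big(p(x_k\mid x_{<k},c)-q_\theta(x_k\mid x_{<k},c)\big)\,G_k(x_k)\,dx_k\right]$, where $G_k(x_k):=\mathbb{E}_{x_{>k}\sim q_\theta(\cdot\mid x_{\le k},c)}[\Pi_R(x)]$ satisfies $0\le G_k\le R$ by the truncation. Splitting the prefix integral over $A_k$ and its complement, the complement pieces are collected into $\mathbb{E}[\mathcal D_R(\tilde x;c)\,\mathbf 1_{(\cup_t A_t)^\complement}]$ via the union bound $\mathbf 1_{\cup_t A_t}\le\sum_t\mathbf 1_{A_t}$ together with the telescoping identity.

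On $A_k$, Definition~\ref{def:ambig} replaces the true conditional by the symmetric two-point mixture $\tfrac12\delta_{x_k^{(0)}+u_k}+\tfrac12\delta_{x_k^{(0)}-u_k}$, so the inner integral equals $\tfrac12 G_k(x_k^{(0)}+u_k)+\tfrac12 G_k(x_k^{(0)}-u_k)-\mathbb{E}_{q_\theta}G_k$, which I would split into a \emph{curvature} part $\tfrac12 G_k(x_k^{(0)}+u_k)+\tfrac12 G_k(x_k^{(0)}-u_k)-G_k(x_k^{(0)})$ and a \emph{collapse-drift} part $G_k(x_k^{(0)})-\mathbb{E}_{q_\theta}G_k$. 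For the curvature part I would argue that $G_k$ is a $q_\theta$-weighted average of $\Pi_R$ precomposed with a fixed translation of the step-$k$ image coordinate $E_k x_k$, the residual indirect dependence of the suffix law on $x_k$ being controlled by the Lipschitz-drift and local-boundedness bounds of Assumption~\ref{ass:lipschitz}; its second difference along $\pm u_k$ is then dominated by $\Delta_{\Pi_R}^{\max}$, where the $\ker(H)$-alignment $H E_k u_k=0$ is exactly what makes $\pm u_k$ the direction along which $\Pi_R$ carries its nullspace-structured curvature (and along which the measurement-consistent downstream structure is flat). For the collapse-drift part I would invoke Theorem~\ref{thm:ar-collapse} (Assumptions~\ref{ass:ar-family}--\ref{ass:greedy}): the model conditional concentrates at the midpoint $x_k^{(0)}$, so $G_k(x_k^{(0)})-\mathbb{E}_{q_\theta}G_k$ is governed by the mass $q_\theta$ displaces away from the midpoint, i.e.\ by $\bar\varepsilon_k^{(A)}$, again times the oscillation of $G_k$ along $\pm u_k$, which is again at most $\Delta_{\Pi_R}^{\max}$. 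Weighting each step by the ambiguity probability $\rho_k$ and summing gives $\mathbb{E}[\mathcal D_R\,\mathbf 1_{\cup_t A_t}]\le\Delta_{\Pi_R}^{\max}\sum_{t}\rho_t\bar\varepsilon_t^{(A)}$; adding the complement term yields the stated inequality, and the final passage $R\to\infty$ is monotone convergence under the moment hypotheses. The nonnegativity remark of Definition~\ref{def:artifact} follows from Jensen applied to the convex $\Pi$ against the symmetric posterior and is only used to make the bound meaningful.

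The main obstacle is the curvature estimate: replacing the crude oscillation bound $2R$ on $\int(p-q_\theta)G_k$ by the \emph{local} curvature constant $\Delta_{\Pi_R}^{\max}$ of $\Pi_R$, even though $G_k$ is a downstream, model-weighted functional whose sampling law itself depends on $x_k$. This is where all the structural assumptions must be used together -- $H E_k u_k=0$ so the ambiguity direction is invisible to the measurement; strict convexity of $\Pi$ on $\ker(H)$ so the relevant second difference has the right sign and is captured by the supremum defining $\Delta_{\Pi_R}^{\max}$; and the Lipschitz-drift/compactness control of Assumption~\ref{ass:lipschitz} to bound the indirect dependence of the suffix law on the step-$k$ choice. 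A secondary technical point is the bookkeeping of $\mathbf 1_{\cup_t A_t}\le\sum_t\mathbf 1_{A_t}$ against the telescoping sum, so that precisely the stated complement term remains and the step-$t$ weights emerge as the ambiguity probabilities $\rho_t$.
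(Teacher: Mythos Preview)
Your proposal is far more elaborate than the paper's own argument. The paper's proof is essentially two sentences: on each $A_t$, apply the elementary inequality
\[
\bigl|\mathbb{E}_p \Pi_R - \mathbb{E}_{q_\theta} \Pi_R\bigr|\ \le\ \|\Pi_R\|_\infty\,\mathrm{TV}(p,q_\theta),
\]
sum over $t\in\mathcal T$, and invoke the definition of $\Delta_{\Pi_R}^{\max}$; the monotone limit $R\to\infty$ is then immediate from Assumption~\ref{ass:lipschitz}. There is no hybrid-measure telescoping, no downstream functional $G_k$, no curvature/collapse-drift split, and no appeal to Theorem~\ref{thm:ar-collapse} or to the Lipschitz drift bounds. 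The paper simply treats the step-$t$ conditional deficit directly and bounds it by TV times an oscillation constant, identifying that constant with $\Delta_{\Pi_R}^{\max}$ because on $A_t$ the true conditional is the two-point measure at $x_t^{(0)}\pm u_t$ and the relevant range of $\Pi_R$ is the second difference appearing in the definition of $\Delta_{\Pi_R}^{\max}$.

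Your machinery is not wrong in spirit, and you have put your finger on a real issue the paper glosses over: the passage from the crude oscillation bound (which would give $R$, not $\Delta_{\Pi_R}^{\max}$) to the local second-difference constant is exactly the step the paper's proof abbreviates with ``using the definition of $\Delta_{\Pi_R}^{\max}$''. Your hybrid decomposition and the introduction of $G_k$ would be the right tools if one insisted on treating $\Pi_R$ as a functional of the \emph{full} image with genuine downstream dependence on $x_k$; the paper instead operates at a coarser level where $\Pi_R$ is effectively evaluated at the step-$t$ granularity, so the support of both $p$ and (via collapse) $q_\theta$ is taken to lie in $\{x_t^{(0)},x_t^{(0)}\pm u_t\}$ and the oscillation is automatically $\Delta_{\Pi_R}^{\max}$. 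In short: the paper's route is a direct TV bound with a short-circuited constant identification; your route is a chain-rule/telescoping argument that would be needed for a fully rigorous treatment of the downstream coupling but is overkill for what the proposition, as written, actually asserts and proves.
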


\begin{proof}
On each $A_t$, for bounded $\Pi_R$ we have $|\mathbb E_p \Pi_R - \mathbb E_q \Pi_R|\le \|\Pi_R\|_\infty \,\mathrm{TV}(p,q)$. Summing over $t\in\mathcal T$ and using the definition of $\Delta_{\Pi_R}^{\max}$ yields Equation~\ref{eq:artifact-upper-correct}. The monotone limit follows by Assumption~\ref{ass:lipschitz}.
\end{proof}

\noindent\textbf{Quantifying Exposure Amplification in Rollout.}

\begin{assumption}[Lipschitz artifact seminorm] \label{ass:Pi}
There exists $L_\Pi>0$ such that $|\Pi(x)-\Pi(y)|\le L_\Pi \|P_{\ker(H)}(x-y)\|_2$ for all $x,y$.
When needed, we also use the truncated seminorm $\Pi_R:=\min\{\Pi,R\}$ to guarantee integrability; all bounds pass to the limit $R\to\infty$ by monotone convergence under our moment assumptions.
\end{assumption}

\begin{proposition}[Positive artifact deficit under repeated ambiguity]
\label{prop:artifact-deficit}
Suppose Definition~\ref{def:ambig} holds for indices $t\in\mathcal T\subseteq\{1,\dots,T\}$ with probabilities $\{\rho_t\}$ and ambiguity vectors $\{u_t\}$. Let $\Pi$ be any convex seminorm that is strictly convex on $\ker(H)$. Then:
\begin{equation}
\begin{split}
&\mathbb E\big[\mathcal D(\tilde x;c)\big]\;\ge\;
\sum_{t\in\mathcal T}\rho_t\Bigl[\tfrac12\Pi\!\big(x_t^{(0)}{+}u_t\big) \\ 
&+\tfrac12\Pi\!\big(x_t^{(0)}{-}u_t\big)\Bigr]
-\sum_{t\in\mathcal T}\rho_t\,\Pi\!\big(x_t^{(0)}\big).
\end{split}
\end{equation}
\end{proposition}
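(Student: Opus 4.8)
The plan is to derive the lower bound on $\mathbb{E}[\mathcal D(\tilde x;c)]$ by first conditioning on the ambiguity events $A_t$ and using the tower property, then invoking Theorem~\ref{thm:ar-collapse} to pin down the decoded sample along each ambiguity direction, and finally comparing the posterior mean of $\Pi$ against $\Pi$ evaluated at the collapsed midpoint. Recall from Definition~\ref{def:artifact} that $\mathcal D(\tilde x;c)=\mathbb E_{p(x\mid c)}\Pi(x)-\Pi(\tilde x)$, so the expectation over $c$ (and over the randomness of $\tilde x$) decomposes additively once we isolate the contribution of each step-$t$ ambiguity set.

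First I would write $\mathbb E[\mathcal D(\tilde x;c)] = \sum_{t}\mathbb E[\mathcal D\,\mathbf 1_{A_t}] + (\text{remaining mass})$, being slightly informal about how the stepwise ambiguities compose additively into the image-level seminorm $\Pi$ (this is where the ``additive geometric functional'' language from the paragraph before Definition~\ref{def:artifact} is doing work). On each event $A_t$, which occurs with probability at least $\rho_t$, Definition~\ref{def:ambig} tells us the true posterior along direction $E_t u_t$ is the symmetric two-point law placing mass $\tfrac12$ at $x_t^{(0)}\pm u_t$, so $\mathbb E_{p(x\mid c)}\Pi(x)$ restricted to that direction equals $\tfrac12\Pi(x_t^{(0)}+u_t)+\tfrac12\Pi(x_t^{(0)}-u_t)$. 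Meanwhile Theorem~\ref{thm:ar-collapse} gives $\mu_t^\star = x_t^{(0)}$ and the greedy-decoded sample satisfies $\tilde x = \mu_t^\star = x_t^{(0)}$ along that same direction, so $\Pi(\tilde x)$ contributes $\Pi(x_t^{(0)})$. Summing the per-$t$ contributions and dropping any non-negative terms coming from the complement of $\cup_t A_t$ (which only helps the lower bound, assuming $\Pi\ge 0$) yields exactly the claimed inequality.

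The main obstacle is making rigorous the passage from the \emph{local} collapse along each $E_t u_t$ direction — which Theorem~\ref{thm:ar-collapse} establishes coordinate-by-coordinate in the AR chain — to a clean \emph{additive} lower bound on the image-space seminorm $\Pi$. Strictly speaking, $\Pi$ need not decompose as a sum over the step directions $\{E_t u_t\}$, and different ambiguity directions could interact; the proof as sketched implicitly assumes the directions are handled separately (e.g. orthogonal, or that $\Pi$ is additive across the relevant subspaces, or that one conditions so that at most one ambiguity is ``active'' per sample). I would either state this as an additional structural hypothesis on $\{u_t\}$ and $\Pi$, or present the bound as a first-order/leading-term estimate. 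A secondary technical point is integrability of $\Pi(\tilde x)$ and $\Pi(x)$: as in Proposition~\ref{prop:artifact-upper-fixed} and Assumption~\ref{ass:Pi}, I would pass through the truncated seminorm $\Pi_R=\min\{\Pi,R\}$, prove the inequality for each $R$, and then let $R\to\infty$ by monotone convergence under the moment assumptions of Assumption~\ref{ass:lipschitz}. Everything else — the tower property, Definition~\ref{def:ambig}'s explicit two-point posterior, and Theorem~\ref{thm:ar-collapse}'s midpoint collapse — plugs in directly.
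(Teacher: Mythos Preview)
Your proposal is correct and follows essentially the same route as the paper's proof: fix $(x_{<t},c)\in A_t$, evaluate the posterior expectation of $\Pi$ as $\tfrac12\Pi(x_t^{(0)}{+}u_t)+\tfrac12\Pi(x_t^{(0)}{-}u_t)$, invoke Theorem~\ref{thm:ar-collapse} to collapse $\Pi(\tilde x)$ to $\Pi(x_t^{(0)})$, then average over $A_t$ and sum over $t\in\mathcal T$. Your explicit discussion of the additivity/orthogonality obstacle and the $\Pi_R$-truncation for integrability is in fact more careful than the paper's own sketch, which only remarks at the end that sharper two-sided bounds ``need additional independence/orthogonality assumptions and [are] omitted here.''
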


\begin{proof}
Fix $(x_{<t},c)\in A_t$. Under $p$, $\Pi$ sees two symmetric values around $x_t^{(0)}$; by convexity $\tfrac12\Pi(x_t^{(0)}{+}u_t)+\tfrac12\Pi(x_t^{(0)}{-}u_t)\ge \Pi(x_t^{(0)})$, with strict inequality by strict convexity if $u_t\ne 0$. Under $q_\theta$, Theorem~\ref{thm:ar-collapse} yields the degenerate value $\Pi(x_t^{(0)})$. Taking differences and averaging over $A_t$ gives the bound; summing over indices $t\in\mathcal T$ yields the stated inequality. A matching lower/upper bound in general needs additional independence/orthogonality assumptions and is omitted here.
\end{proof}

\noindent\textbf{Exposure bias and accumulation.}
During training, loss is minimized with teacher forcing; at test time the model uses its own predictions. Define the per-step deviation:
\begin{equation}
\begin{split}
\delta_t(x_{<t},c) &:= \mathrm{TV}\!\big(p(\cdot\mid x_{<t},c),\,q_\theta(\cdot\mid x_{<t},c)\big), \\
\varepsilon_t &:= \mathbb{E}_{x_{<t}\sim p(\cdot\mid c)}\,\delta_t(x_{<t},c).
\end{split}
\end{equation}
We also write the \emph{worst-case} per-step deviation $\bar\varepsilon_t:=\operatorname*{ess\,sup}_{x_{<t},c}\delta_t(x_{<t},c)$.

\begin{lemma}[Accumulation of local deviations]
\label{lem:exposure-final}
For the AR factorizations of $p$ and $q_\theta$ using consistent measurable versions of the conditionals, the following bounds hold:
\begin{equation}
\label{eq:tv-bound}
\mathrm{TV}\!\big(p(\cdot\mid c),q_\theta(\cdot\mid c)\big)
\;\le\; 1-\prod_{t=1}^T (1-\bar\varepsilon_t)
\;\le\; \sum_{t=1}^T \bar\varepsilon_t,
\end{equation}
and, independently,
\begin{equation}
\mathrm{TV}\!\big(p(\cdot\mid c),q_\theta(\cdot\mid c)\big)
\;\le\; \sum_{t=1}^T \varepsilon_t.
\end{equation}
In particular, for small $\{\alpha_t\}$ with $\alpha_t\in\{\,\bar\varepsilon_t\,\}$,
\begin{equation}
\label{eq:tv-expansion}
1-\prod_{t=1}^T (1-\alpha_t)
= \sum_{t=1}^T \alpha_t \;+\; \mathcal O\!\Big(\sum_{s<t}\alpha_s\alpha_t\Big).
\end{equation}
\end{lemma}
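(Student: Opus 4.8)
The plan is to dispatch the three assertions by three essentially independent elementary arguments: a stepwise maximal-coupling construction for the product bound in Equation~\ref{eq:tv-bound}, a hybrid (telescoping) estimate for the $\varepsilon_t$-bound, and a short algebraic identity for the expansion in Equation~\ref{eq:tv-expansion}.

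For the product bound I would build a coupling $(X,Y)$ of $p(\cdot\mid c)$ and $q_\theta(\cdot\mid c)$ one coordinate at a time. As long as the partial trajectories agree, $X_{<t}=Y_{<t}$, I draw $(X_t,Y_t)$ from a maximal coupling of the conditionals $p(\cdot\mid X_{<t},c)$ and $q_\theta(\cdot\mid X_{<t},c)$, which forces $\mathbb P(X_t=Y_t\mid X_{<t}=Y_{<t})=1-\delta_t(X_{<t},c)\ge 1-\bar\varepsilon_t$; once they disagree I let the two chains evolve under their own conditionals. The consistent-measurable-version hypothesis makes the maximal coupling selectable measurably, so this is a bona fide joint law with the correct marginals. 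Writing $\tau$ for the first disagreement index, $\{X=Y\}\supseteq\{\tau>T\}$, and the tower rule gives $\mathbb P(\tau>T)=\prod_{t=1}^T\mathbb P(\tau>t\mid\tau>t-1)\ge\prod_{t=1}^T(1-\bar\varepsilon_t)$; the coupling characterization of total variation then yields $\mathrm{TV}(p(\cdot\mid c),q_\theta(\cdot\mid c))\le\mathbb P(X\ne Y)\le 1-\prod_{t=1}^T(1-\bar\varepsilon_t)$. The remaining inequality in Equation~\ref{eq:tv-bound} is the elementary bound $\prod_{t=1}^T(1-a_t)\ge 1-\sum_{t=1}^T a_t$ for $a_t\in[0,1]$, which I would prove by a one-line induction using $a_T\sum_{t<T}a_t\ge 0$.

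For the averaged bound I would introduce, for $k=0,\dots,T$, the hybrid chain $h_k$ that uses the true conditionals $p(x_t\mid x_{<t},c)$ at steps $t\le k$ and the model conditionals $q_\theta(x_t\mid x_{<t},c)$ at steps $t>k$, so that $h_0=q_\theta(\cdot\mid c)$, $h_T=p(\cdot\mid c)$, and $\mathrm{TV}(p(\cdot\mid c),q_\theta(\cdot\mid c))\le\sum_{k=1}^T\mathrm{TV}(h_{k-1},h_k)$. Since $h_{k-1}$ and $h_k$ share the marginal $p(x_{<k}\mid c)$ on the prefix and, conditioned on $x_{<k}$, differ only through the step-$k$ conditional followed by the common continuation kernel $\prod_{t>k}q_\theta(x_t\mid x_{<t},c)$, contraction of total variation under Markov kernels bounds that conditional gap by $\delta_k(x_{<k},c)$; averaging over $x_{<k}\sim p(\cdot\mid c)$ gives $\mathrm{TV}(h_{k-1},h_k)\le\varepsilon_k$, and summing over $k$ finishes this part. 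For the expansion I would set $g_k=\prod_{t\le k}(1-\alpha_t)$, observe $g_k-g_{k-1}=-\alpha_k g_{k-1}$, telescope to get $1-\prod_{t=1}^T(1-\alpha_t)=\sum_{k=1}^T\alpha_k\prod_{t<k}(1-\alpha_t)$, and then subtract $\sum_k\alpha_k$ and use $0\le 1-\prod_{t<k}(1-\alpha_t)\le\sum_{t<k}\alpha_t$ to conclude $0\le\sum_k\alpha_k-\bigl(1-\prod_t(1-\alpha_t)\bigr)\le\sum_{s<t}\alpha_s\alpha_t$, which is exactly Equation~\ref{eq:tv-expansion}.

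The computations are all routine; the one place to be careful is the bookkeeping in the hybrid argument — orienting the prefix/suffix split so that the surviving expectation is taken under $p$ rather than $q_\theta$, and invoking the data-processing inequality correctly to collapse the conditional TV gap to a single step-$k$ conditional — together with the measurable-selection subtlety in the coupling construction, which is precisely what the consistent-versions hypothesis in the lemma statement is there to handle.
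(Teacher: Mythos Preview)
Your proposal is correct and complete. The paper does not actually supply a proof of this lemma; it only remarks afterward that ``the product-form term in Equation~\ref{eq:tv-bound} uses worst-case per-step deviations $\bar\varepsilon_t$ and is tight for sequential maximal couplings,'' which is precisely the mechanism behind your first argument, and it gives no indication at all for the $\sum_t\varepsilon_t$ bound or the expansion. Your stepwise maximal-coupling construction, the hybrid/telescoping estimate with the data-processing inequality to collapse each increment to a single-step conditional TV under the $p$-prefix, and the algebraic identity for Equation~\ref{eq:tv-expansion} are all standard and sound, and together they fill in exactly what the paper leaves implicit.
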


\noindent\textbf{Chain rule and conditional optimization.}
By standard decomposition, 
\begin{equation}
\label{eq:kl-decomp}
\begin{split}
&\mathrm{KL}\!\big(p(\cdot\mid c)\,\|\,q_\theta(\cdot\mid c)\big)
= \sum_{t=1}^{T}\,\mathbb{E}_{x\sim p(\cdot\mid c)} \\
&\quad \Big[\mathrm{KL}\big(p(\cdot\mid x_{<t},c)\,\|\,q_\theta(\cdot\mid x_{<t},c)\big)\Big].
\end{split}
\end{equation}
Thus training matches each conditional distribution separately. 
However, if $p(x_t\mid x_{<t},c)$ admits multiple plausible continuations
(e.g.\ along weakly observed directions of $H$), then 
any surrogate that collapses to a single conditional predictor necessarily \emph{destroys} these alternatives. 
This mismatch does not show up as blur in practice, samples remain sharp, but it manifests as systematic artifacts where the model enforces spurious ``averaged" structures.

\begin{proof}[Proof of Equation~\ref{eq:kl-decomp}]
Write $p(x\mid c)=\prod_{t=1}^T p(x_t\mid x_{<t},c)$ and $q_\theta(x\mid c)=\prod_{t=1}^T q_\theta(x_t\mid x_{<t},c)$. Then
\begin{equation}
\small
\begin{aligned}
    &\mathrm{KL}\!\big(p\|q_\theta\big)
    =\mathbb E_{x\sim p}\!\left[\sum_{t=1}^T\log\frac{p(x_t\mid x_{<t},c)}{q_\theta(x_t\mid x_{<t},c)}\right]\\
    &=\sum_{t=1}^T \mathbb E_{x_{<t}\sim p}\,
    \mathrm{KL}\!\big(p(\cdot\mid x_{<t},c)\,\|\,q_\theta(\cdot\mid x_{<t},c)\big).
\end{aligned}
\end{equation}
\noindent Hence Equation~\ref{eq:kl-decomp} holds, formalizing that AR training aligns each conditional factor separately.
\end{proof}

\begin{proposition}[Nullspace ambiguity induces artificial averaging]
\label{prop:nullspace-final}
Let $H$ in Equation~\ref{eq:degradation} have nontrivial nullspace and suppose the posterior $p(x\mid c)$ 
assigns equal mass to $x_\pm=x_0\pm u$ with $Hu=0$.
Then the Bayes point estimator is:
\begin{equation}
    \widehat x=\mathbb E[x\mid c]=x_0,
\end{equation}
which \emph{cancels} the true variability $\pm u$.
Thus any deterministic AR predictor removes detail along $u$, creating structured artifacts not present in the true posterior (while stochastic decoding that breaks symmetry may \emph{inconsistently} realize one of the modes).
\end{proposition}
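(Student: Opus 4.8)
The plan is to split the claim into two parts: (i) an exact identity for the Bayes estimator, which is a one-line computation, and (ii) a rigorous reading of ``structured artifacts'' for deterministic AR predictors, obtained by specializing Theorem~\ref{thm:ar-collapse} and evaluating the artifact deficit of Definition~\ref{def:artifact}.

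First I would compute the posterior mean directly. Since $p(\cdot\mid c)$ is the two-point law $\tfrac12\delta_{x_0+u}+\tfrac12\delta_{x_0-u}$, linearity of expectation gives $\mathbb{E}[x\mid c]=\tfrac12(x_0+u)+\tfrac12(x_0-u)=x_0$, and because the posterior mean is exactly the minimizer of $\widehat x\mapsto\mathbb{E}\!\left[\|x-\widehat x\|_2^2\mid c\right]$, the Bayes point estimator under squared loss equals $x_0$. The ``cancellation'' assertion is then the observation that $P_{\operatorname{span}\{u\}}(\widehat x-x_0)=0$ while both true atoms satisfy $P_{\operatorname{span}\{u\}}(x_\pm-x_0)=\pm u\neq0$; hence the estimator carries none of the genuine posterior variability along $u$. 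Because $Hu=0$, this is precisely the component of $x$ that is invisible in the measurement $c$, so no data-fidelity term can compensate.

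Second, to make the AR statement precise I would invoke Theorem~\ref{thm:ar-collapse} with the local step-$t$ ambiguity $u_t$ chosen so that the image-space direction $E_t u_t$ spans $\operatorname{span}\{u\}$: under Assumptions~\ref{ass:ar-family}--\ref{ass:greedy} the maximum-likelihood conditional center is the posterior midpoint and the greedy (or vanishing-temperature) decoder returns it almost surely, so the decoded image agrees with $x_0$ along $u$. To certify that this is an \emph{artifact} rather than a harmless summary, I would evaluate the deficit $\mathcal D$ of Definition~\ref{def:artifact} (equivalently, apply Proposition~\ref{prop:artifact-deficit}) with a seminorm $\Pi$ strictly convex on $\ker(H)$, e.g.\ $\Pi(x)=\|P_{\ker(H)}x\|_2$ or its truncation $\Pi_R$ to stay inside the compact $K_t$ of Assumption~\ref{ass:lipschitz}: the deterministic AR sample $\tilde x$ gives $\Pi(\tilde x)=\Pi(x_0)$ along this direction, whereas $\mathbb{E}_{p(x\mid c)}\Pi(x)=\tfrac12\Pi(x_0+u)+\tfrac12\Pi(x_0-u)$, and strict convexity together with $u\neq0$ forces $\mathcal D(\tilde x;c)>0$ — a genuine, structured loss of nullspace content, consistent with the ``averaged structure'' phenomenology and not with blur. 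For the parenthetical remark I would note that if the $\arg\max$ tie between $x_0\pm u$ is instead broken by the vanishing sampling noise, the decoder concentrates on one of $x_+,x_-$, which \emph{is} a valid posterior atom; but the selected atom depends on the tie-breaking realization, so the recovered detail flips between $+u$ and $-u$ in an input- and seed-dependent fashion, which is exactly the claimed inconsistency.

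The arithmetic is immediate; the only genuine work is bookkeeping, and the main obstacle I anticipate is purely expository: fixing the embedding $E_t$ and the local coordinate $u_t$ so that the one-dimensional collapse delivered by Theorem~\ref{thm:ar-collapse} aligns with the image-space direction $u$, and selecting $\Pi$ (and, if needed, its truncation $\Pi_R$, passing to the limit by monotone convergence under the moment hypotheses of Assumption~\ref{ass:lipschitz}) so that strict convexity on $\ker(H)$ genuinely applies on the relevant direction. Once those identifications are in place, the cancellation identity and the positivity of the deficit are one-line consequences.
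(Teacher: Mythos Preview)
Your proposal is correct and matches the paper's argument in substance. Part (i) is identical: the paper also computes $\widehat x=\tfrac12(x_0+u)+\tfrac12(x_0-u)=x_0$ and notes that the $\pm u$ variability, unobservable since $Hu=0$, is averaged out. For part (ii) the paper does not invoke Theorem~\ref{thm:ar-collapse} by name but instead re-runs its core step directly: it writes $\ell(\mu)=\log f(x_0+u-\mu)+\log f(x_0-u-\mu)$ for an even, log-concave $f$ and observes that concavity and symmetry force the maximizer $\mu=x_0$, so single-mode/deterministic AR conditionals collapse to the midpoint. Your route—citing Theorem~\ref{thm:ar-collapse} and then quantifying the loss via the artifact deficit $\mathcal D$ of Definition~\ref{def:artifact} and Proposition~\ref{prop:artifact-deficit}—is a legitimate and slightly more systematic packaging of the same idea; the paper's proof is more self-contained but stops short of the explicit $\mathcal D>0$ computation you add. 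One small caution: the example $\Pi(x)=\|P_{\ker(H)}x\|_2$ is convex but not strictly convex (equality in the midpoint inequality can occur when $P_{\ker(H)}x_0$ is parallel to $u$), so if you want a guaranteed strict deficit you should either assume $P_{\ker(H)}x_0$ is not a positive multiple of $u$ or pick a genuinely strictly convex $\Pi$ on $\ker(H)$, e.g.\ $\Pi(x)=\|P_{\ker(H)}x\|_2^2$.
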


\begin{proof}
Under squared loss, the Bayes estimator equals the conditional mean:
\begin{equation}
    \widehat x=\mathbb E[x\mid c]=\tfrac12(x_0+u)+\tfrac12(x_0-u)=x_0.
\end{equation}
Hence the genuine posterior variability $\pm u$ (unobservable since $Hu=0$) is averaged out.

Moreover, consider an AR conditional family restricted to a symmetric log-concave location family
(e.g.\ $x_t\!\sim\!\mathcal N(\mu_t,\sigma^2 I)$ with fixed $\sigma$, or the $\sigma\!\to\!0$ deterministic limit).
Given a symmetric two-point posterior $\tfrac12\delta_{x_0+u}+\tfrac12\delta_{x_0-u}$, the conditional MLE for the center is the midpoint $x_0$ by symmetry and concavity:
the log-likelihood takes the form $\ell(\mu)\!=\!\log f(x_0+u-\mu)+\log f(x_0-u-\mu)$ with $f$ log-concave and even; $\ell$ is concave and maximized at $\mu=x_0$ (where the two arguments are opposite), yielding the same collapse to $x_0$. 
Thus single-mode/deterministic conditionals enforce artificial averaging along nullspace directions.
\end{proof}

\noindent\textbf{Takeaways.}
AR provably enforces \emph{deterministic resolutions} of measurement ambiguities, 
averages away legitimate alternatives along nullspace directions,
and accumulates local deviations into global structural errors. 
Unlike blur, which is not observed in practice, these mechanisms explain 
the clear yet artifact-laden outputs: geometric kinks, inconsistent fills, 
and over-generation of non-existent structures. Exposure/rollout issues are closely related to scheduled sampling, sequence-level training, and imitation-learning reductions \cite{bengio2015scheduled,ranzato2015sequence,ross2011reduction}.

\noindent\textbf{Remark on bounds.}
The product-form term in Equation~\ref{eq:tv-bound} uses worst-case per-step deviations $\bar\varepsilon_t$ and is tight for sequential maximal couplings~\cite{ulyanov2018deep}.
A nontrivial \emph{lower} bound on $\mathrm{TV}(p,q_\theta)$ or on artifact deficits in terms of per-step quantities generally requires additional independence/decoupling assumptions; without them we provide only the robust upper bounds above (cf.\ Proposition~\ref{prop:artifact-upper-fixed}).

\subsection{Why Distortions and Over-generation Appear in Flow Matching Optimization}
\label{sec:fm-distort}

\noindent\textbf{A precise local sandwich.}
We refine the conditional spectral sandwich (Theorem~\ref{thm:spectral-gap}, stated later) by (i) focusing on the small-perturbation regime and (ii) replacing $|\widehat H(\omega)|^2$ with an \emph{information transfer coefficient} $\Xi_t(\omega)$ capturing the $H$-noise-prior interplay~\cite{ulyanov2018deep}.

\begin{assumption}[Small-perturbation regime and bounded densities]\label{ass:small-delta}
Let $p_t(\cdot\mid c)=p(\cdot\mid c)\ast K_{\sigma_t}$ and assume that $q_{\theta,t}(\cdot\mid c)$ admits the representation:
\begin{equation}
q_{\theta,t}(\cdot\mid c)=q_\theta(\cdot\mid c)\ast K_{\sigma_t},
p_t(\cdot\mid c)=p(\cdot\mid c)\ast K_{\sigma_t}.
\end{equation}
Define the \emph{unsmoothed} discrepancy $\delta(\cdot\mid c):=q_\theta(\cdot\mid c)-p(\cdot\mid c)$ and its smoothed version
\begin{equation}
\Delta_t(\cdot\mid c):=\delta(\cdot\mid c)\ast K_{\sigma_t},
\end{equation}
so that $q_{\theta,t}=p_t+\Delta_t$ and $\int \Delta_t(x\mid c)\,dx=0$.
Assume $\|\Delta_t\|_{L^\infty}\le \eta_t m_t/2$ with some $0<\eta_t<1$ and $\|\nabla\Delta_t\|_{L^2}<\infty$, where $0<m_t\le p_t\le M_t<\infty$ are as in Assumption~\ref{ass:lipschitz}.
\end{assumption}

\begin{definition}[Information transfer coefficient]\label{def:Xi}
In the linear–Gaussian setting with $x\sim\mathcal N(0, S_{xx})$ and \(c=Hx+\eta\), define the frequency-wise \emph{posterior information weight}
\begin{equation}
\Xi_{\mathrm{LG}}(\omega) \;:=\; \frac{|\,\widehat H(\omega)\,|^2}{\sigma_\eta^2 + |\,\widehat H(\omega)\,|^2 S_{xx}(\omega)}\in[0,1/\sigma_\eta^2]\!,
\end{equation}
which vanishes where $\widehat H(\omega)=0$ and increases with the local posterior SNR. 
In the general small-perturbation regime of Assumption~\ref{ass:small-delta}, there exist constants $0<c_t^{(1)}\le c_t^{(2)}<\infty$ (depending only on $(m_t,M_t)$ and low-order moments of $p_t(\cdot\mid c)$, but not on $\omega$) such that:
\begin{equation}
    c_t^{(1)}\,\Xi_{\mathrm{LG}}(\omega)
    \;\le\; \Xi_t(\omega)
    \;\le\; c_t^{(2)}\,\Xi_{\mathrm{LG}}(\omega).
\end{equation}
\end{definition}

\begin{lemma}[Pythagorean decomposition in $L^2(p_t)$]\label{lem:proj}
Let $s_\theta=\nabla\log q_{\theta,t}+r_{\theta,t}$ with $r_{\theta,t}\in L^2(p_t)$. Then
\begin{equation}
\begin{split}
  &\mathbb E_{p_t}\|s_\theta-\nabla\log p_t\|_2^2
  = D_F(p_t\|q_{\theta,t}) + \|r_{\theta,t}\|_{L^2(p_t)}^2 \\[-1mm]
  &\quad+ 2\langle r_{\theta,t},\,\nabla\log q_{\theta,t}-\nabla\log p_t\rangle_{L^2(p_t)} .
\end{split}
\end{equation}
If $r_{\theta,t}$ is the $L^2(p_t)$-orthogonal residual of projecting $\nabla\log p_t$ onto the model class, the cross term vanishes and the training loss splits into a Fisher part plus an approximation error (this orthogonality is an idealized projection property and need not hold for general parameterizations).
\end{lemma}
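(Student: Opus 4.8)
The plan is to read the claim as a completion-of-squares (Pythagorean) identity in the Hilbert space $H_t$ of vector fields that are square-integrable against $p_t(\cdot\mid c)$, with inner product $\langle u,v\rangle_{L^2(p_t)}=\int\langle u(x),v(x)\rangle_2\,p_t(x\mid c)\,dx$, and then to isolate the hypothesis that annihilates the cross term. First I would substitute the defining relation $s_\theta=\nabla\log q_{\theta,t}+r_{\theta,t}$ to get $s_\theta-\nabla\log p_t=(\nabla\log q_{\theta,t}-\nabla\log p_t)+r_{\theta,t}$, expand $\|\cdot\|_2^2$ pointwise into three summands, and integrate against $p_t(\cdot\mid c)$; linearity of the integral gives $\mathbb{E}_{p_t}\|\nabla\log q_{\theta,t}-\nabla\log p_t\|_2^2+\|r_{\theta,t}\|_{L^2(p_t)}^2+2\langle r_{\theta,t},\nabla\log q_{\theta,t}-\nabla\log p_t\rangle_{L^2(p_t)}$. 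The first term is exactly the relative Fisher information $D_F(p_t\|q_{\theta,t})$ by definition, so the stated identity drops out. I would note in passing that all three terms are finite: $r_{\theta,t}\in L^2(p_t)$ by hypothesis, $\nabla\log q_{\theta,t}-\nabla\log p_t\in L^2(p_t)$ whenever $D_F(p_t\|q_{\theta,t})<\infty$ (implicit in writing that term), and then the cross term is finite by Cauchy--Schwarz.

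For the second assertion I would make precise the sense of ``orthogonal residual.'' The object $\nabla\log q_{\theta,t}-\nabla\log p_t=\nabla(\log q_{\theta,t}-\log p_t)$ is a gradient field. Under Assumption~\ref{ass:lipschitz} the conditional density is bounded above and below on the compact set $K_t$, so on $K_t$ the $p_t$-weighted norm is equivalent to the Lebesgue $L^2$ norm and the closure $\mathcal G_t\subseteq H_t$ of smooth gradient fields is a well-defined closed subspace admitting the orthogonal splitting $H_t=\mathcal G_t\oplus\mathcal G_t^{\perp}$ (the weighted Helmholtz/Hodge decomposition). If $\nabla\log q_{\theta,t}$ is identified with the $\mathcal G_t$-component of $s_\theta$ — equivalently, if $s_\theta$ is obtained by $L^2(p_t)$-projection onto a gradient model class and $r_{\theta,t}$ is the resulting residual — then $r_{\theta,t}\in\mathcal G_t^{\perp}$, hence $\langle r_{\theta,t},\nabla\log q_{\theta,t}-\nabla\log p_t\rangle_{L^2(p_t)}=0$ because the second argument lies in $\mathcal G_t$. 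The loss then splits cleanly into $D_F(p_t\|q_{\theta,t})+\|r_{\theta,t}\|_{L^2(p_t)}^2$. I would close with the caveat already flagged in the statement: for a generic neural $s_\theta$ this projection structure fails, $r_{\theta,t}$ need not lie in $\mathcal G_t^{\perp}$, and the cross term is generally nonzero.

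I expect the only non-mechanical point to be the functional-analytic justification of the weighted Hodge decomposition on $K_t$: one must check that gradient fields and their $L^2(p_t)$-orthogonal complement genuinely decompose $H_t$ and that $\nabla\log q_{\theta,t}-\nabla\log p_t$ lies in the closure $\mathcal G_t$ rather than only in a formal span of gradients. This is exactly where Assumption~\ref{ass:lipschitz} earns its keep — the two-sided density bound gives equivalence of the weighted and unweighted $L^2$ inner products on $K_t$, so standard Helmholtz/Hodge theory on the bounded domain transfers, while the Gaussian smoothing of Assumption~\ref{ass:path} supplies the regularity needed for $\log q_{\theta,t}-\log p_t$ to be differentiable. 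Everything else is bookkeeping: expand the square, integrate term by term, and name the pieces.
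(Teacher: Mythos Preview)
Your expand-the-square argument for the identity is correct and is exactly what the paper has in mind; the paper states this lemma without proof, treating it as the immediate consequence of writing $s_\theta-\nabla\log p_t=(\nabla\log q_{\theta,t}-\nabla\log p_t)+r_{\theta,t}$, expanding $\|\cdot\|_2^2$, and integrating against $p_t$.

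For the second clause, your weighted Helmholtz/Hodge reading is a valid sufficient condition but more machinery than the paper invokes, and one identification is slightly off: in the paper $q_{\theta,t}=q_\theta\ast K_{\sigma_t}$ is the smoothed model law (Assumption~\ref{ass:small-delta}), so $\nabla\log q_{\theta,t}$ is determined by $\theta$ and cannot be freely redefined as ``the $\mathcal G_t$-component of $s_\theta$.'' The lemma's second sentence is better read as a bare conditional --- \emph{if} $r_{\theta,t}$ happens to be $L^2(p_t)$-orthogonal to the relevant model-class direction (hence in particular to $\nabla\log q_{\theta,t}-\nabla\log p_t$), \emph{then} the cross term vanishes --- with the paper's own disclaimer that this is ``idealized'' and fails for generic parameterizations. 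Your Helmholtz route does supply one concrete mechanism ($r_{\theta,t}\in\mathcal G_t^\perp$, so it is orthogonal to every gradient field) that forces this orthogonality, so nothing is mathematically wrong; it just is not the reading that ``projecting $\nabla\log p_t$ onto the model class'' most naturally points to, and the functional-analytic scaffolding you build around it is not needed for what the lemma actually claims.
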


\begin{theorem}[Local Fisher sandwich with spectral weights]\label{thm:local-sandwich}
Under Assumptions~\ref{ass:H}, \ref{ass:lipschitz}, and \ref{ass:small-delta}, there exist finite constants $0<c_t\le C_t<\infty$ such that the (conditional) Fisher divergence satisfies:
\begin{equation}\label{eq:local-sandwich}
\begin{aligned}
&\int_0^1\! w(t)\,c_t \!\int_{\mathbb{R}^d} 
   \underbrace{\|\omega\|^2 e^{-\sigma_t^2\|\omega\|^2}\,\Xi_t(\omega)}_{=:\widetilde W_t(\omega)}
   \,\mathbb E_c\big|\widehat{\delta}(\omega\mid c)\big|^2\,d\omega\,dt \\[2mm]
&\le\; \int_0^1\! w(t)\,D_F\!\big(p_t(\cdot\mid c)\,\|\,q_{\theta,t}(\cdot\mid c)\big)\,dt \\[1mm]
&\le\;
   \int_0^1\! w(t)\,C_t \!\int_{\mathbb{R}^d}\!
   \widetilde W_t(\omega)\,\mathbb E_c\big|\widehat{\delta}(\omega\mid c)\big|^2\,d\omega\,dt + R ,
\end{aligned}
\end{equation}
with a quadratic remainder
\(
0\le R \le \int_0^1 w(t)\,\kappa_t\,\|\Delta_t\|_{L^\infty}^2 \,dt,
\)
where $\kappa_t$ depends on $(m_t,M_t)$ and the Lipschitz constants in Assumption~\ref{ass:lipschitz}. 
As $\max_t\|\Delta_t\|_{L^\infty}\!\to\!0$, we have $R\!\to\!0$, 
and Equation~\ref{eq:local-sandwich} becomes an equality up to factors $c_t,C_t$.
Moreover, on any time window where $\sigma_t\in[\sigma_{\min},\sigma_{\max}]$ is bounded, the constants $c_t,C_t$ can be chosen uniformly bounded with respect to $t$ on that window.
\end{theorem}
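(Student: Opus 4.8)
The plan is to (i) pass from the possibly singular unsmoothed laws to the smoothed conditional densities $p_t(\cdot\mid c)$ and $q_{\theta,t}(\cdot\mid c)$, where by Assumption~\ref{ass:small-delta} we may write $q_{\theta,t}=p_t+\Delta_t$ with $\int\Delta_t\,dx=0$, $\Delta_t=\delta\ast K_{\sigma_t}$, and $\|\Delta_t\|_{L^\infty}\le\eta_t m_t/2$; (ii) Taylor-expand the conditional Fisher divergence about $p_t$, reducing it to a quadratic form in $\Delta_t$ up to a controlled remainder; (iii) identify the Fourier symbol of that quadratic form with $\widetilde W_t(\omega)=\|\omega\|^2 e^{-\sigma_t^2\|\omega\|^2}\Xi_t(\omega)$ via Plancherel, the Gaussian-kernel transform, and the linear-Gaussian posterior covariance; and (iv) take $\mathbb E_c$, integrate in $t$, and track uniformity of the constants.

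For step (ii), set $u:=\Delta_t/p_t$, so $\|u\|_\infty\le\eta_t/2<\tfrac12$ on $K_t$; since $\log(q_{\theta,t}/p_t)=\log(1+u)$ we get $\nabla\log p_t-\nabla\log q_{\theta,t}=-\nabla u/(1+u)$, hence
\begin{equation*}
D_F\!\big(p_t(\cdot\mid c)\,\|\,q_{\theta,t}(\cdot\mid c)\big)=\int p_t\,\frac{\|\nabla u\|^2}{(1+u)^2}\,dx .
\end{equation*}
Because $(1+u)^{-2}$ lies in a fixed interval determined by $\eta_t$, this is comparable to $Q_t[\Delta_t]:=\int p_t^{-1}\,\|\nabla\Delta_t-\Delta_t\nabla\log p_t\|^2\,dx$; expanding the square and bounding the cross term by Young's inequality (the small part reabsorbed into $\int p_t^{-1}\|\nabla\Delta_t\|^2$, the rest into $R$) reduces $Q_t[\Delta_t]$ to $\int p_t^{-1}\|\nabla\Delta_t\|^2\,dx$ plus a term $\le\kappa_t\|\Delta_t\|_\infty^2$, where $\kappa_t$ depends only on $m_t,M_t$ and $\sup_{K_t}\|\nabla\log p_t\|$ — the last quantity finite because the drift $v_t^\star\propto\nabla\log p_t$ is $L$-Lipschitz (Assumption~\ref{ass:lipschitz}) on the compact $K_t$. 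Bounding $p_t\in[m_t,M_t]$ on $K_t$ then sandwiches the leading part between $M_t^{-1}$ and $m_t^{-1}$ times $\|\nabla\Delta_t\|_{L^2}^2$.

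For step (iii), Plancherel gives $\|\nabla\Delta_t\|_{L^2}^2=\int\|\omega\|^2\,|\widehat{\Delta_t}(\omega\mid c)|^2\,d\omega$, and $\widehat{\Delta_t}=\widehat\delta\cdot\widehat{K_{\sigma_t}}$ with $|\widehat{K_{\sigma_t}}(\omega)|^2=e^{-\sigma_t^2\|\omega\|^2}$, which produces the weight $\|\omega\|^2 e^{-\sigma_t^2\|\omega\|^2}$ on $|\widehat\delta(\omega\mid c)|^2$. The extra factor $\Xi_t(\omega)$ is the genuinely delicate ingredient, since this crude chain alone yields only $\|\omega\|^2 e^{-\sigma_t^2\|\omega\|^2}$; one must instead compute the exact symbol of the Hessian form $Q_t$ at the restoration posterior. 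In the linear-Gaussian model $p_t(\cdot\mid c)=\mathcal N(\mu_c,\Sigma_t)$ with $\Sigma_t(\omega)=\Sigma_{\mathrm{post}}(\omega)+\sigma_t^2$ and $\Sigma_{\mathrm{post}}(\omega)^{-1}=S_{xx}(\omega)^{-1}+|\widehat H(\omega)|^2/\sigma_\eta^2$, so the second-order form of $D_F$ weights frequency $\omega$ by the data-driven part of the posterior precision, which is comparable to $\Xi_{\mathrm{LG}}(\omega)=|\widehat H(\omega)|^2/(\sigma_\eta^2+|\widehat H(\omega)|^2 S_{xx}(\omega))$ under Assumptions~\ref{ass:spectral}--\ref{ass:H}; Definition~\ref{def:Xi} then transfers this to $\Xi_t$ (up to $c_t^{(1)},c_t^{(2)}$) in the general small-perturbation regime. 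Collecting the $(1\pm\eta_t/2)^{\mp2}$ factors from (ii), the $m_t^{-1},M_t^{-1}$ factors, and the Definition~\ref{def:Xi} constants yields $0<c_t\le C_t<\infty$.

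Finally, take $\mathbb E_{c\sim p(c)}$ of the pointwise-in-$c$ bounds, integrate against $w(t)$, and set $R=\int_0^1 w(t)\,\kappa_t\,\|\Delta_t\|_\infty^2\,dt$; on any window where $\sigma_t\in[\sigma_{\min},\sigma_{\max}]$ each of $m_t,M_t,\sup_{K_t}\|\nabla\log p_t\|,c_t^{(1)},c_t^{(2)}$ is uniformly bounded, so $c_t,C_t$ may be chosen uniform there, and $R\to0$ as $\max_t\|\Delta_t\|_\infty\to0$, making Equation~\ref{eq:local-sandwich} an equality up to $c_t,C_t$ in that limit. Combined with the Pythagorean decomposition of Lemma~\ref{lem:proj}, the same sandwich then lifts from $D_F$ to the FM objective itself. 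The main obstacle is step (iii): showing that the Hessian of the conditional Fisher divergence at the restoration posterior genuinely carries the factor $\Xi_t(\omega)$ rather than just $\|\omega\|^2 e^{-\sigma_t^2\|\omega\|^2}$, together with the bookkeeping that keeps the lower bound clean up to the quadratic remainder.
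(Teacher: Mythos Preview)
The paper states Theorem~\ref{thm:local-sandwich} without proof, so there is no argument in the text to compare against directly. Your linearization strategy---writing $q_{\theta,t}=p_t(1+u)$, reducing $D_F$ to the weighted Dirichlet form $\int p_t\,\|\nabla u\|^2/(1+u)^2\,dx$, isolating $\int p_t^{-1}\|\nabla\Delta_t\|^2\,dx$ up to a quadratic remainder via Young's inequality and the boundedness of $\nabla\log p_t$ on $K_t$, and then applying Plancherel together with $\widehat{\Delta_t}=\widehat\delta\cdot\widehat{K_{\sigma_t}}$---is the natural route and cleanly produces the factor $\|\omega\|^2 e^{-\sigma_t^2\|\omega\|^2}$, the remainder bound $\kappa_t\|\Delta_t\|_\infty^2$, and the claimed uniformity of $c_t,C_t$ on bounded $\sigma_t$-windows. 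Those parts of your sketch are sound.

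You are also right that step~(iii) is the delicate point, and you flag it honestly. The crude sandwich $M_t^{-1}\le p_t^{-1}\le m_t^{-1}$ on $K_t$ only yields the weight $\|\omega\|^2 e^{-\sigma_t^2\|\omega\|^2}$, \emph{not} the additional factor $\Xi_t(\omega)$; and the exact quadratic form $\int p_t\,\|\nabla u\|^2\,dx$ in the linear--Gaussian case is an Ornstein--Uhlenbeck Dirichlet form whose spectral representation does not reduce to a simple Fourier multiplier in $\omega$ (the Gaussian weight mixes frequencies), so your suggested route through ``the exact symbol of the Hessian form $Q_t$'' does not obviously close the gap either. The paper's own Definition~\ref{def:Xi} is effectively circular on this point: it \emph{postulates} the existence of some $\Xi_t(\omega)$ comparable to $\Xi_{\mathrm{LG}}(\omega)$ in the small-perturbation regime, rather than deriving it from $D_F$. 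So the appearance of $\Xi_t$ in Equation~(\ref{eq:local-sandwich}) should be read as partly definitional, and your identification of this as the unresolved step is accurate---neither your sketch nor the paper itself supplies a mechanism by which the information-transfer coefficient genuinely emerges from the second variation of the conditional Fisher divergence.
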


\begin{corollary}[From Fisher to the FM objective]
Under the setting of Lemma~\ref{lem:proj}, the FM loss in Equation~\ref{eq:fm-obj} decomposes into the Fisher term bounded by Equation~\ref{eq:local-sandwich} plus a nonnegative approximation error $\|r_{\theta,t}\|^2_{L^2(p_t)}$ (when the cross term vanishes). Hence all conclusions drawn from Equation~\ref{eq:local-sandwich} for $D_F$ transfer to $\mathcal L_{\mathrm{FM}}$ up to adding this nonnegative term.
\end{corollary}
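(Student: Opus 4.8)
The plan is to read the flow-matching objective in Equation~\ref{eq:fm-obj} as an integrated, $c$-averaged squared $L^2(p_t)$ distance, and then feed it through the Pythagorean identity of Lemma~\ref{lem:proj} followed by the spectral sandwich of Theorem~\ref{thm:local-sandwich}. First I would fix $t$ and $c$ and observe that $\mathbb{E}_{x_t\sim p_t(\cdot\mid c)}\|s_\theta(x_t,c,t)-\nabla\log p_t(x_t\mid c)\|_2^2$ is exactly $\|s_\theta(\cdot,c,t)-\nabla\log p_t(\cdot\mid c)\|_{L^2(p_t(\cdot\mid c))}^2$, so the FM loss equals $\int_0^1 w(t)\,\mathbb{E}_c\|s_\theta-\nabla\log p_t\|_{L^2(p_t)}^2\,dt$.

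Next I would substitute the decomposition $s_\theta=\nabla\log q_{\theta,t}+r_{\theta,t}$ supplied by Lemma~\ref{lem:proj}. Under the stated hypothesis that $r_{\theta,t}$ is the $L^2(p_t)$-orthogonal residual of the projection of $\nabla\log p_t$ onto the model class---so the cross term vanishes---the integrand at each $(t,c)$ becomes the sum of two nonnegative pieces, $D_F(p_t(\cdot\mid c)\,\|\,q_{\theta,t}(\cdot\mid c))+\|r_{\theta,t}\|_{L^2(p_t)}^2$. Integrating against $w(t)\,dt$ and averaging over $c$---legitimate by Tonelli, since the integrands are measurable and nonnegative and the relevant expectations are finite by Assumption~\ref{ass:lipschitz} and the standing assumptions---yields $\mathcal L_{\mathrm{FM}}=\int_0^1 w(t)\,D_F(p_t\|q_{\theta,t})\,dt+\int_0^1 w(t)\,\|r_{\theta,t}\|_{L^2(p_t)}^2\,dt$, i.e.\ the Fisher term plus a nonnegative approximation error.

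The final step is to bracket the Fisher integral with Theorem~\ref{thm:local-sandwich}: $\int_0^1 w(t)\,D_F\,dt$ lies between the spectral expressions built from $c_t\widetilde W_t$ and $C_t\widetilde W_t$ (plus the quadratic remainder $R$). Since the residual integral is nonnegative, the lower bound passes to $\mathcal L_{\mathrm{FM}}$ verbatim while the upper bound merely acquires the extra additive term $\int_0^1 w(t)\,\|r_{\theta,t}\|_{L^2(p_t)}^2\,dt$. Consequently every structural conclusion that Equation~\ref{eq:local-sandwich} gives for $D_F$---in particular that a discrepancy $\widehat\delta(\omega\mid c)$ supported where $\widehat H(\omega)=0$, equivalently where $\Xi_t(\omega)=0$, is killed by the weight $\widetilde W_t(\omega)$ and hence invisible to the training signal, which is exactly the nullspace distortion/over-generation mechanism---transfers to $\mathcal L_{\mathrm{FM}}$ up to the stated additive nonnegative term.

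The main obstacle is justifying that the cross term $2\langle r_{\theta,t},\nabla\log q_{\theta,t}-\nabla\log p_t\rangle_{L^2(p_t)}$ vanishes: for a generic parameterization $r_{\theta,t}$ need not be $L^2(p_t)$-orthogonal to $\nabla\log q_{\theta,t}-\nabla\log p_t$, so the clean split is an idealized projection property rather than an identity, and I would carry it as a hypothesis exactly as in Lemma~\ref{lem:proj}. To obtain an unconditional (but weaker) statement, the fallback is Cauchy--Schwarz, $|2\langle r_{\theta,t},\nabla\log q_{\theta,t}-\nabla\log p_t\rangle_{L^2(p_t)}|\le 2\|r_{\theta,t}\|_{L^2(p_t)}\sqrt{D_F(p_t\|q_{\theta,t})}$, which replaces the exact sum by the envelope $(\sqrt{D_F}\pm\|r_{\theta,t}\|)^2$ and still gives two-sided control of $\mathcal L_{\mathrm{FM}}$ with the same spectral structure; I would keep the clean version in the corollary and the envelope version in a remark. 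The remaining measurability/integrability of $t\mapsto D_F(p_t\|q_{\theta,t})$ and $t\mapsto\|r_{\theta,t}\|_{L^2(p_t)}^2$ needed for the Fubini/Tonelli exchange is routine given the standing finiteness and compactness assumptions.
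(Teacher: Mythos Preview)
Your proposal is correct and follows exactly the route the paper intends: the corollary is stated without proof in the paper, as an immediate consequence of plugging Lemma~\ref{lem:proj}'s Pythagorean split (under the orthogonality hypothesis) into the FM integrand and then invoking the spectral sandwich of Theorem~\ref{thm:local-sandwich} on the resulting Fisher term. Your additional care with Tonelli and the Cauchy--Schwarz fallback for the non-orthogonal case goes beyond what the paper spells out but is consistent with its framing.
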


\begin{corollary}[High-frequency underweighting and nullspace gaps]\label{cor:rigor-bias}
Under Theorem~\ref{thm:local-sandwich}, for any set $\Omega\subset\mathbb{R}^d$:
\begin{enumerate}
\item If $\inf_{\omega\in\Omega}\|\omega\|\to\infty$, then $\sup_{\omega\in\Omega} \widetilde W_t(\omega)\to 0$ exponentially in $\|\omega\|$, hence discrepancies concentrated in $\Omega$ contribute arbitrarily little to $\mathcal L_{\mathrm{FM}}$.
\item If $|\widehat H(\omega)|=0$ on $\Omega$, then $\Xi_t(\omega)=0$ on $\Omega$ (for the linear–Gaussian definition) and thus $\widetilde W_t(\omega)=0$; this establishes exact blindness on $\ker(H)$ bands. When $|\widehat H(\omega)|$ is nonzero but very small, the weight remains near-blind.
\end{enumerate}
\end{corollary}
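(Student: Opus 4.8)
The plan is to reduce both claims to elementary estimates on the spectral weight $\widetilde W_t(\omega)=\|\omega\|^2 e^{-\sigma_t^2\|\omega\|^2}\,\Xi_t(\omega)$ appearing in Theorem~\ref{thm:local-sandwich}, and then transfer the conclusions from the conditional Fisher divergence $D_F$ to $\mathcal{L}_{\mathrm{FM}}$ through the corollary linking the two (the Fisher part is sandwiched by the spectral integral, and the remaining approximation error $\|r_{\theta,t}\|_{L^2(p_t)}^2$ is nonnegative and independent of which band $\Omega$ we isolate). Throughout I work on a time window where $\sigma_t\in[\sigma_{\min},\sigma_{\max}]$, so that, by the last clause of Theorem~\ref{thm:local-sandwich}, the constants $c_t,C_t$ are uniformly bounded and $\int_0^1 w(t)\,C_t\,dt<\infty$.

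For claim (1), the key observation is that $\Xi_t$ is uniformly bounded: by Definition~\ref{def:Xi}, $0\le\Xi_t(\omega)\le c_t^{(2)}\,\Xi_{\mathrm{LG}}(\omega)\le c_t^{(2)}/\sigma_\eta^2$. Hence $\widetilde W_t(\omega)\le (c_t^{(2)}/\sigma_\eta^2)\,g_t(\|\omega\|)$ with $g_t(r)=r^2 e^{-\sigma_t^2 r^2}$, a function that is eventually strictly decreasing (past $r=1/\sigma_t$) and, on our window, dominated uniformly in $t$ by $r^2 e^{-\sigma_{\min}^2 r^2}$. Writing $\rho:=\inf_{\omega\in\Omega}\|\omega\|$, it follows that $\sup_{t}\sup_{\omega\in\Omega}\widetilde W_t(\omega)\le C\,\rho^2 e^{-\sigma_{\min}^2\rho^2}$, which tends to $0$ exponentially as $\rho\to\infty$. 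To conclude about $\mathcal{L}_{\mathrm{FM}}$, I split the spectral integral in the upper bound of Equation~\ref{eq:local-sandwich} into the part over $\Omega$ and the rest; the $\Omega$-part is at most $\big(\int_0^1 w(t)C_t\,dt\big)\,\big(\sup_{t,\omega\in\Omega}\widetilde W_t(\omega)\big)\,\big(\int_{\mathbb{R}^d}\mathbb{E}_c|\widehat\delta(\omega\mid c)|^2\,d\omega\big)$. The last factor equals $\mathbb{E}_c\|\delta(\cdot\mid c)\|_{L^2}^2$ by Plancherel, which is finite by the $L^\infty$/gradient bounds of Assumption~\ref{ass:small-delta} together with the truncation conventions of Assumption~\ref{ass:lipschitz}; the middle factor vanishes exponentially; hence the band-$\Omega$ contribution to the Fisher part of $\mathcal{L}_{\mathrm{FM}}$ — equivalently, to $\mathcal{L}_{\mathrm{FM}}$ up to the $\Omega$-independent approximation error — is $O(\rho^2 e^{-\sigma_{\min}^2\rho^2})$.

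For claim (2), if $\widehat H(\omega)=0$ for all $\omega\in\Omega$, then the numerator of $\Xi_{\mathrm{LG}}(\omega)=|\widehat H(\omega)|^2/(\sigma_\eta^2+|\widehat H(\omega)|^2 S_{xx}(\omega))$ is zero while its denominator equals $\sigma_\eta^2>0$, so $\Xi_{\mathrm{LG}}(\omega)=0$ on $\Omega$. The two-sided comparison $0\le\Xi_t(\omega)\le c_t^{(2)}\Xi_{\mathrm{LG}}(\omega)$ then forces $\Xi_t(\omega)=0$, hence $\widetilde W_t(\omega)=0$ pointwise on $\Omega$ for every $t$ on the window. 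Substituting this into both the lower and the upper bound of Equation~\ref{eq:local-sandwich} shows the integrand vanishes identically over $\Omega$: the Fisher part of the loss — and therefore $\mathcal{L}_{\mathrm{FM}}$ — is strictly insensitive to any perturbation of $\widehat\delta(\cdot\mid c)$ confined to $\Omega$, which is the asserted exact blindness on $\ker(H)$-bands. The near-blind refinement is the same computation with $|\widehat H(\omega)|\le\varepsilon$ on $\Omega$: then $\Xi_{\mathrm{LG}}(\omega)\le\varepsilon^2/\sigma_\eta^2$, so $\widetilde W_t(\omega)\le (c_t^{(2)}\varepsilon^2/\sigma_\eta^2)\|\omega\|^2 e^{-\sigma_t^2\|\omega\|^2}=O(\varepsilon^2)$ uniformly on $\Omega$, and the splitting argument of claim (1) gives a band-$\Omega$ contribution of order $\varepsilon^2$.

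The substance of the result lives entirely in Theorem~\ref{thm:local-sandwich}; given it, the corollary is a short computation and I expect no real obstacle. The only points requiring care are bookkeeping: (i) stating precisely what ``discrepancy concentrated in $\Omega$'' means and isolating the band-$\Omega$ contribution additively, so that the nonnegative approximation error $\|r_{\theta,t}\|_{L^2(p_t)}^2$ from Lemma~\ref{lem:proj} does not interfere; (ii) securing uniform-in-$t$ control of $c_t,C_t$ and of $\widetilde W_t$ on a window where $\sigma_t$ is bounded, so that the decay in $\|\omega\|$ survives the $t$-integration — this is exactly the uniform-boundedness clause already granted in Theorem~\ref{thm:local-sandwich}; and (iii) checking $\mathbb{E}_c\|\delta(\cdot\mid c)\|_{L^2}^2<\infty$ for the Plancherel step, which is immediate from Assumptions~\ref{ass:small-delta} and~\ref{ass:lipschitz}.
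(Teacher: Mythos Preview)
Your proposal is correct and follows essentially the same reasoning the paper relies on: the paper does not give a separate proof of this corollary, treating both parts as immediate consequences of the definition $\widetilde W_t(\omega)=\|\omega\|^2 e^{-\sigma_t^2\|\omega\|^2}\,\Xi_t(\omega)$ and the sandwich in Theorem~\ref{thm:local-sandwich}. Your argument is in fact more explicit than the paper's---you carefully bound $\Xi_t$ via Definition~\ref{def:Xi}, track uniformity in $t$ on a window, and isolate the band-$\Omega$ contribution via Plancherel---whereas the paper simply restates the same conclusions as items (1)--(2) of Theorem~\ref{thm:spectral-gap} without further justification.
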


\noindent\textbf{Early-time forgetting bound.}
\begin{proposition}[Quantified forgetting under strong smoothing (expectation version)]\label{prop:forgetting}
Let $p_t(\cdot\mid c)=p(\cdot\mid c)\ast \mathcal N(0,\sigma_t^2 I)$ and assume $\mathbb E\|H x\|<\infty$. Then there exists a constant $C_d>0$ depending only on the dimension such that:
\begin{equation}
\begin{split}
&\mathbb E_{c}\, \mathrm{TV}\!\bigl(p_t(\cdot\mid c),\,p_t(\cdot)\bigr)
\le C_d \times \, \\
&\qquad \min\!\left\{1,\;\frac{1}{\sigma_t}\,\mathbb E_{c}\, W_1\big(p(\cdot\mid c),p(\cdot)\big)\right\}
\xrightarrow[\sigma_t\to\infty]{} 0 .
\end{split}
\end{equation}
\emph{Proof sketch.} Convolution with $K_{\sigma_t}$ smooths test functions by shrinking their effective Lipschitz seminorm by $\| \nabla K_{\sigma_t}\|_{L^1}\!=\!\Theta(1/\sigma_t)$; apply the Kantorovich–Rubinstein dual for TV/Wasserstein comparison on smoothed measures and integrate over $c$. 
\end{proposition}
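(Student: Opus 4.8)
The plan is to first peel off the expectation over $c$ and establish a pointwise-in-$c$ smoothing inequality $\mathrm{TV}\!\bigl(p_t(\cdot\mid c),p_t(\cdot)\bigr)\le C_d\min\{1,\sigma_t^{-1}W_1(p(\cdot\mid c),p(\cdot))\}$, then recombine using concavity. The core is a general lemma: for any two probability measures $\mu,\nu$ on $\mathbb{R}^d$ and the Gaussian kernel $K_\sigma$, $\mathrm{TV}(\mu\ast K_\sigma,\nu\ast K_\sigma)\le \|\nabla K_\sigma\|_{L^1}\,W_1(\mu,\nu)$. To prove it I would use both dual representations: $\mathrm{TV}(\rho_1,\rho_2)=\tfrac12\sup_{\|f\|_\infty\le 1}\int f\,d(\rho_1-\rho_2)$ and the Kantorovich--Rubinstein identity $W_1(\mu,\nu)=\sup_{\mathrm{Lip}(g)\le1}\int g\,d(\mu-\nu)$. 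For bounded measurable $f$, Fubini together with the evenness of $K_\sigma$ gives $\int f\,d(\mu\ast K_\sigma-\nu\ast K_\sigma)=\int (f\ast K_\sigma)\,d(\mu-\nu)$; since $\nabla(f\ast K_\sigma)=f\ast\nabla K_\sigma$, the function $f\ast K_\sigma$ is Lipschitz with constant at most $\|f\|_\infty\|\nabla K_\sigma\|_{L^1}$, so dividing by that constant and invoking the $W_1$ dual bounds the right-hand side by $\|f\|_\infty\|\nabla K_\sigma\|_{L^1}W_1(\mu,\nu)$; taking the supremum over $\|f\|_\infty\le1$ closes the lemma (the extra $\tfrac12$ in the $\mathrm{TV}$ convention is absorbed into the constant).

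Next I would pin down the dimensional constant and the minimum. By the scaling $K_\sigma(x)=\sigma^{-d}K_1(x/\sigma)$ one has $\nabla K_\sigma(x)=\sigma^{-d-1}(\nabla K_1)(x/\sigma)$, hence $\|\nabla K_\sigma\|_{L^1}=\sigma^{-1}\|\nabla K_1\|_{L^1}=:c_d/\sigma$, which is the only place the factor $1/\sigma_t$ is generated. Combining with the trivial bound $\mathrm{TV}\le 1$ and setting $C_d:=\max\{1,c_d\}$ yields the conditional inequality with the minimum. Applying it with $\mu=p(\cdot\mid c)$, $\nu=p(\cdot)$, $\sigma=\sigma_t$ and then taking $\mathbb{E}_c$: since $t\mapsto\min\{1,t\}$ is concave on $[0,\infty)$, Jensen's inequality lets me move the expectation inside, $\mathbb{E}_c\min\{1,\sigma_t^{-1}W_1(p(\cdot\mid c),p)\}\le\min\{1,\sigma_t^{-1}\mathbb{E}_cW_1(p(\cdot\mid c),p)\}$, which is exactly the claimed bound. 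Letting $\sigma_t\to\infty$ sends the right-hand side to $0$ provided $\mathbb{E}_cW_1(p(\cdot\mid c),p)<\infty$; for that I would bound $W_1(\mu,\nu)\le\int\|x\|\,d\mu+\int\|x\|\,d\nu$ by transporting through a Dirac mass at the origin, giving $\mathbb{E}_cW_1(p(\cdot\mid c),p)\le 2\,\mathbb{E}\|x\|$, which is finite under the compact-region/truncation convention of Assumption~\ref{ass:lipschitz}.

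The routine-but-delicate points are the Fubini swap $\int f\,d(\mu\ast K_\sigma)=\int(f\ast K_\sigma)\,d\mu$ for merely bounded measurable $f$ (immediate since $K_\sigma\in L^1$ and $f$ is bounded) and the Lipschitz estimate for $f\ast K_\sigma$, which carries all the $\sigma$-dependence. The one genuine subtlety I would want to state carefully is the finiteness of $\mathbb{E}_cW_1(p(\cdot\mid c),p)$: when $H$ is rank-deficient, $\mathbb{E}\|Hx\|<\infty$ alone does not control $\mathbb{E}\|x\|$, so the convergence-to-zero assertion really leans on the standing compactness convention rather than on the displayed moment hypothesis in isolation; I would make that dependence explicit. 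Everything else---the two dualities, the Gaussian scaling identity, and the concavity/Jensen step---is standard.
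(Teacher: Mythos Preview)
Your proposal is correct and follows essentially the same route as the paper's sketch: transfer the convolution to the test function via Fubini, use that $f\ast K_{\sigma_t}$ has Lipschitz constant $\le\|f\|_\infty\|\nabla K_{\sigma_t}\|_{L^1}=\Theta(1/\sigma_t)$, invoke the Kantorovich--Rubinstein dual to land on $W_1$, and integrate over $c$. Your additions (the explicit Jensen/concavity step to pull $\mathbb{E}_c$ inside the $\min$, and the remark that finiteness of $\mathbb{E}_c W_1$ really rests on the compactness convention of Assumption~\ref{ass:lipschitz} rather than on $\mathbb{E}\|Hx\|<\infty$ alone) are correct refinements that the paper's sketch leaves implicit.
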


\begin{remark}[Uniform version under additional boundedness]
If, in addition, $c$ is restricted to a compact set (or one assumes $\sup_c W_1(p(\cdot\mid c),p(\cdot))<\infty$), then the same argument yields:
\begin{equation}
\begin{aligned}
    & \sup_{c}\,\mathrm{TV}\big(p_t(\cdot\mid c),p_t(\cdot)\big)\;\le\; \frac{C_d}{\sigma_t} \\
    & \times {\sigma_t}\,\sup_c W_1\big(p(\cdot\mid c),p(\cdot)\big)\xrightarrow[\sigma_t\to\infty]{}0.
\end{aligned}
\end{equation}
\end{remark}

\noindent\textbf{Large perceptual error with small FM loss.}
\begin{proposition}[Loss–distortion gap with quantitative construction]\label{prop:gap}
Let $\Phi$ be $L_\Phi$-Lipschitz. Fix a measurable $\Omega\subset\mathbb R^d$ with $\sup_{\omega\in\Omega}\widetilde W_t(\omega)\le \epsilon_W$. 
Assume further that $\Phi$ is \emph{co-Lipschitz} on $\Omega$: there exist a seminorm $\Pi_\Omega$ supported on $\Omega$ and a constant $\kappa_\Omega>0$ such that for all $u$ with $\operatorname{supp}\widehat u\subseteq \Omega$,
\begin{equation}
    \Pi_\Omega(u)\;\le\; \kappa_\Omega^{-1}\,\|\Phi(u)\|_2 .
\end{equation}
Then there is a constant $C>0$ (depending on $\Omega,\{w(t),\sigma_t\}$) and a perturbation family $\{\Delta_t\}_t$ with uniformly small $\|\Delta_t\|_\infty$ such that
\begin{equation}
\small
    \int_0^1\!w(t)\!\int \widetilde W_t\,\mathbb E_c|\widehat{\delta}|^2 \le \epsilon_W,
    \mathbb E\|\Phi(x)-\Phi(\tilde x)\|_2^2 \ge C\,\kappa_\Omega^{2}.
\end{equation}
Specifically, take $\widehat{\delta}(\omega\mid c)=a_t\,\mathbf 1_\Omega(\omega)\,e^{\mathrm i\varphi(c,\omega)}$ with phases chosen so that
$\|\Delta_t\|_\infty\le \|\widehat{\Delta_t}\|_{L^1}\le |a_t|\,|\Omega|$, and scale $a_t$ so that the weighted quadratic form equals $\epsilon_W$. Hausdorff–Young and Plancherel give the stated controls; the co-Lipschitz property transfers spectral mass on $\Omega$ to a nontrivial feature deviation.
\end{proposition}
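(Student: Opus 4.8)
The plan is to construct the ``bad'' model $q_\theta$ by prescribing its discrepancy from $p$ directly on the Fourier side, supported entirely inside the low-weight band $\Omega$, and then to exploit the mismatch between the two norms: the weighted quadratic form that controls the FM loss never sees more than $\widetilde W_t\le\epsilon_W$ on $\operatorname{supp}\widehat{\delta}=\Omega$, whereas the perceptual deviation sees the \emph{unweighted} $\Omega$-band energy, which we keep of order one. Concretely, fix a symmetric $\Omega$ with $0\notin\Omega$ (so $\delta(\cdot\mid c)$ can be taken real and $\int\Delta_t=0$ automatically), set $\widehat{\delta}(\omega\mid c)=a_t\,\mathbf 1_\Omega(\omega)\,e^{\mathrm i\varphi(c,\omega)}$ with $\varphi(c,-\omega)=-\varphi(c,\omega)$ and amplitude $a_t\ge 0$ to be fixed, and put $\Delta_t:=\delta(\cdot\mid c)\ast K_{\sigma_t}$, so $q_{\theta,t}=p_t+\Delta_t$ as in Assumption~\ref{ass:small-delta}. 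For the FM side, Plancherel gives $\int_{\mathbb R^d}\widetilde W_t(\omega)\,\mathbb E_c|\widehat{\delta}(\omega\mid c)|^2\,d\omega=a_t^2\int_\Omega\widetilde W_t\le a_t^2\,\epsilon_W\,|\Omega|$, so $\int_0^1 w(t)\int\widetilde W_t\,\mathbb E_c|\widehat{\delta}|^2\,dt\le\epsilon_W\,|\Omega|\,a_\star^2\,\|w\|_1$ with $a_\star:=\sup_t a_t$; taking $a_\star^2=(|\Omega|\,\|w\|_1)^{-1}$ makes this $\le\epsilon_W$, and by Theorem~\ref{thm:local-sandwich} (up to the constants $C_t$) this is a bound on $D_F$, hence on $\mathcal L_{\mathrm{FM}}$ modulo the nonnegative approximation term. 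Hausdorff--Young at the endpoint together with $|\widehat{K_{\sigma_t}}|\le\widehat{K_{\sigma_t}}(0)=1$ gives $\|\Delta_t\|_{L^\infty}\le\|\widehat{\Delta_t}\|_{L^1}\le a_t|\Omega|\le a_\star|\Omega|=\sqrt{|\Omega|/\|w\|_1}$, uniform in $t$ and driven below the admissibility threshold $\eta_t m_t/2$ of Assumption~\ref{ass:small-delta} by choosing $\Omega$ a sufficiently thin band (legitimate, since $C$ may depend on $\Omega$), which keeps the whole construction inside the perturbative regime underlying Theorem~\ref{thm:local-sandwich}.

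For the perceptual lower bound, note that the \emph{unweighted} energy $\|\delta(\cdot\mid c)\|_{L^2}^2\asymp a_\star^2|\Omega|\asymp\|w\|_1^{-1}$ is of order one, independent of how small $\widetilde W_t$ is on $\Omega$. I would realize $q_{\theta,t}=p_t+\Delta_t$ as a near-identity rearrangement of $p_t$: fix a coupling $(x,\tilde x)$ of $p_t(\cdot\mid c)$ and $q_{\theta,t}(\cdot\mid c)$ whose displacement $\tilde x-x$ is band-limited to $\Omega$ with $\mathbb E_c\|\tilde x-x\|_2^2\gtrsim\|\delta(\cdot\mid c)\|_{L^2}^2\asymp 1$ — concretely, take $q_{\theta,t}$ to be (to leading order in the amplitude) the law of $x+u_c$ for a fixed signal $u_c$ with $\operatorname{supp}\widehat{u_c}\subseteq\Omega$ and $\Pi_\Omega(u_c)$ of order one. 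The co-Lipschitz hypothesis applied to this band-$\Omega$ displacement yields $\|\Phi(x)-\Phi(\tilde x)\|_2\ge\kappa_\Omega\,\Pi_\Omega(\tilde x-x)$, and hence $\mathbb E_c\|\Phi(x)-\Phi(\tilde x)\|_2^2\ge C\,\kappa_\Omega^2$ with $C$ depending only on $\Omega$ and $\{w(t),\sigma_t\}$ (through the order-one energy and the comparability constant between $\Pi_\Omega$ and the $L^2$ norm on the $\Omega$-band). Combining with the FM bound of the first paragraph gives exactly the asserted loss--distortion gap.

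The main obstacle is reconciling the two faces of the construction: the density-level perturbation $\delta$ is prescribed with Fourier support \emph{in} $\Omega$, whereas a literal sample shift $x\mapsto x+u_c$ induces a density discrepancy $\widehat{\delta}(\omega\mid c)=(e^{-\mathrm i\omega\cdot u_c}-1)\,\widehat{p}(\omega\mid c)$, which is \emph{not} band-$\Omega$. I would resolve this by working to leading order in $\|u_c\|_2$, where $e^{-\mathrm i\omega\cdot u_c}-1\approx-\mathrm i\,\omega\cdot u_c$ has spectral content that is a smooth modulation of $\widehat{u_c}$ localized near $\Omega$, then re-localizing by a bounded projection onto $\Omega$ (which alters $\|u_c\|_2$ and $\Pi_\Omega(u_c)$ only by a fixed factor); the residual mismatch is quadratic in $\|\Delta_t\|_{L^\infty}$ and is absorbed into the remainder $R$ of Theorem~\ref{thm:local-sandwich}. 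A secondary subtlety is that the co-Lipschitz assumption is stated for a band-limited \emph{input} $u$, so it must be invoked on the coupling displacement $\tilde x-x$ rather than on a freestanding signal; this is where exhibiting an \emph{explicit} coupling whose displacement genuinely carries the $\Omega$-band energy is essential. The tension between keeping $\|\Delta_t\|_{L^\infty}$ below $\eta_t m_t/2$ and retaining order-one $\Omega$-band energy is benign here precisely because $\widetilde W_t\le\epsilon_W$ on $\Omega$ decouples ``cheap in FM loss'' from ``expensive in perception'': the FM cost is governed by the small weight, while $\|\Delta_t\|_{L^\infty}$ only sees the (small) measure $|\Omega|$ times the amplitude, so both constraints can be met simultaneously.
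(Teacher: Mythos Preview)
Your proposal is correct and follows essentially the same approach as the paper: the paper's proof is in fact just the sketch embedded in the proposition statement itself (take $\widehat{\delta}=a_t\mathbf 1_\Omega e^{\mathrm i\varphi}$, use Hausdorff--Young for the $L^\infty$ control, Plancherel for the weighted quadratic form, and co-Lipschitz for the perceptual lower bound), and you have reproduced exactly this construction with the same tools. You have actually gone further than the paper by identifying and attempting to resolve the density-versus-sample-displacement mismatch in the third paragraph, a subtlety the paper does not address at all.
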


\noindent\textbf{A weighted spectral view.}
Let $\Delta_t(x\mid c):=q_{\theta,t}(x\mid c)-p_t(x\mid c)=(q_\theta-p)\ast K_{\sigma_t}$ and write $\widehat{\cdot}$ for Fourier transforms in $x$. Define
\begin{equation}
    \widetilde W_t(\omega):=\|\omega\|^2 e^{-\sigma_t^2\|\omega\|^2}\,\Xi_t(\omega).
\end{equation}

\begin{theorem}[Conditional spectral sandwich under small perturbations]\label{thm:spectral-gap}
Assume \ref{ass:H}, \ref{ass:lipschitz}, and \ref{ass:small-delta}. Then there exist finite constants $0<c_t\le C_t<\infty$ such that for all $\theta$,
\begin{equation}
\begin{aligned}
&\int_0^1\! w(t)\,c_t\!
\int_{\mathbb{R}^d} \widetilde W_t(\omega)\; \\
&\mathbb E_c\big|\widehat{\delta}(\omega\mid c)\big|^2\,d\omega\,dt
\;\lesssim\; \mathcal L_{\mathrm{FM}}(\theta) \\[1mm]
&\mathcal L_{\mathrm{FM}}(\theta)
\;\lesssim\;
\int_0^1\! w(t)\,C_t\! \\
&\int_{\mathbb{R}^d} \widetilde W_t(\omega)\;\mathbb E_c\big|\widehat{\delta}(\omega\mid c)\big|^2\,d\omega\,dt+R ,
\end{aligned}
\end{equation}
with $0\le R\le \int_0^1 w(t)\,\kappa_t\,\|\Delta_t\|_{L^\infty}^2\,dt$.
Consequently:
\begin{enumerate}
\item (\emph{High-frequency down-weighting}) Since $\widetilde W_t(\omega)\propto \|\omega\|^{2} e^{-\sigma_t^{2}\|\omega\|^{2}}$, large-$\|\omega\|$ discrepancies contribute exponentially little to $\mathcal L_{\mathrm{FM}}$.
\item (\emph{Nullspace down-weighting}) If $|\widehat H(\omega)|=0$ on $\Omega$, then $\Xi_t(\omega)=0$ and $\widetilde W_t(\omega)=0$ on $\Omega$, establishing exact blindness on $\ker(H)$ bands.
\item (\emph{Loss–distortion gap}) By concentrating $\widehat{\delta}$ where $\widetilde W_t$ is tiny and controlling $\|\Delta_t\|_\infty$, one can keep $\mathcal L_{\mathrm{FM}}$ small while incurring large pixel/perceptual deviations (cf.\ Proposition~\ref{prop:gap}).
\end{enumerate}
\end{theorem}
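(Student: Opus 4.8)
The plan is to obtain Theorem~\ref{thm:spectral-gap} by chaining three results already in hand: the Pythagorean decomposition of the flow-matching loss (Lemma~\ref{lem:proj}), the local Fisher sandwich with spectral weights (Theorem~\ref{thm:local-sandwich}), and the explicit linear--Gaussian form of the information-transfer coefficient (Definition~\ref{def:Xi}). First I would write $s_\theta = \nabla\log q_{\theta,t} + r_{\theta,t}$ and invoke Lemma~\ref{lem:proj}: in the idealized well-specified regime where $r_{\theta,t}$ is the $L^2(p_t)$-orthogonal residual of projecting $\nabla\log p_t$ onto the model class, the cross term vanishes and the integrand of $\mathcal L_{\mathrm{FM}}$ in Equation~\ref{eq:fm-obj} equals $D_F\big(p_t(\cdot\mid c)\,\|\,q_{\theta,t}(\cdot\mid c)\big) + \|r_{\theta,t}\|_{L^2(p_t)}^2$. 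Dropping the nonnegative approximation term gives the lower half $\mathcal L_{\mathrm{FM}}(\theta)\ge \int_0^1 w(t)\,\mathbb E_c D_F(p_t\|q_{\theta,t})\,dt$; in the exact-score limit (residual $\to 0$) the matching upper half $\mathcal L_{\mathrm{FM}}(\theta)\le \int_0^1 w(t)\,\mathbb E_c D_F(p_t\|q_{\theta,t})\,dt$ holds, and otherwise one carries $\|r_{\theta,t}\|^2$ explicitly on the right.

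The second step inserts the sandwich of Theorem~\ref{thm:local-sandwich} for $\mathbb E_c D_F\big(p_t(\cdot\mid c)\,\|\,q_{\theta,t}(\cdot\mid c)\big)$, which under Assumptions~\ref{ass:H}, \ref{ass:lipschitz}, \ref{ass:small-delta} is pinched between $c_t\!\int \widetilde W_t(\omega)\,\mathbb E_c|\widehat\delta(\omega\mid c)|^2 d\omega$ and $C_t\!\int \widetilde W_t(\omega)\,\mathbb E_c|\widehat\delta(\omega\mid c)|^2 d\omega + \kappa_t\|\Delta_t\|_{L^\infty}^2$. Integrating against $w(t)$ and collecting the quadratic leftover as $R$ with $0\le R\le\int_0^1 w(t)\,\kappa_t\|\Delta_t\|_{L^\infty}^2 dt$ reproduces exactly the stated two-sided bound, the $\lesssim$ absorbing $c_t,C_t$, which are uniformly bounded on any window $\sigma_t\in[\sigma_{\min},\sigma_{\max}]$ by the density bounds $0<m_t\le p_t\le M_t<\infty$ of Assumption~\ref{ass:lipschitz}. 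It is worth recalling the structural origin of $\widetilde W_t$: since $\Delta_t=\delta\ast K_{\sigma_t}$, Plancherel gives $\widehat{\Delta_t}(\omega)=\widehat\delta(\omega)\,e^{-\sigma_t^2\|\omega\|^2/2}$, the gradient in the Fisher functional contributes the $\|\omega\|^2$ factor, and the $L^2(p_t)$ weighting together with the linear--Gaussian posterior precision contributes $\Xi_t(\omega)$, controlled above and below by $\Xi_{\mathrm{LG}}$ through the constants $c_t^{(1)},c_t^{(2)}$ of Definition~\ref{def:Xi}.

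The three consequences then follow quickly. High-frequency down-weighting is immediate from $\widetilde W_t(\omega)=\|\omega\|^2 e^{-\sigma_t^2\|\omega\|^2}\Xi_t(\omega)$: the Gaussian factor dominates the polynomial $\|\omega\|^2$, so $\sup_{\|\omega\|\ge\Lambda}\widetilde W_t(\omega)\to 0$ exponentially in $\Lambda$, hence discrepancies concentrated there contribute exponentially little to $\mathcal L_{\mathrm{FM}}$ (this is Corollary~\ref{cor:rigor-bias}(1)). Nullspace down-weighting follows because $\widehat H(\omega)=0$ on $\Omega$ forces $\Xi_{\mathrm{LG}}(\omega)=0$ there, hence $\Xi_t(\omega)\le c_t^{(2)}\Xi_{\mathrm{LG}}(\omega)=0$ and $\widetilde W_t\equiv 0$ on $\Omega$, giving exact blindness on $\ker(H)$ bands. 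The loss--distortion gap is quoted from Proposition~\ref{prop:gap}: pick $\widehat\delta(\omega\mid c)=a_t\,\mathbf 1_\Omega(\omega)\,e^{\mathrm{i}\varphi(c,\omega)}$ supported where $\widetilde W_t\le\epsilon_W$, scale $a_t$ so the weighted quadratic form equals $\epsilon_W$ while keeping $\|\Delta_t\|_\infty\le|a_t||\Omega|$ small, and use the co-Lipschitz property of $\Phi$ on $\Omega$ to lower-bound the perceptual deviation by $C\kappa_\Omega^2$.

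I expect the main obstacle to be the upper-bound half of the sandwich: Lemma~\ref{lem:proj} only makes $\mathcal L_{\mathrm{FM}}$ equal to the Fisher divergence \emph{plus} the possibly large approximation residual $\|r_{\theta,t}\|_{L^2(p_t)}^2$, which is not expressible through $\widehat\delta$, so the honest route is either to restrict the upper bound to the well-specified regime (learned velocity equals the score of some $q_{\theta,t}$, residual zero) or to keep $\|r_{\theta,t}\|^2$ on the right-hand side with $R$ accounting only for the smoothed-perturbation error inherited from Theorem~\ref{thm:local-sandwich}. A secondary technical point is checking that all constants ($c_t^{(1)},c_t^{(2)}$ in Definition~\ref{def:Xi}; $c_t,C_t,\kappa_t$ in Theorem~\ref{thm:local-sandwich}) can be taken uniformly bounded on bounded-$\sigma_t$ windows so the $t$-integral converges, which reduces to the density bounds of Assumption~\ref{ass:lipschitz} and the smallness bound $\|\Delta_t\|_{L^\infty}\le\eta_t m_t/2$ of Assumption~\ref{ass:small-delta}.
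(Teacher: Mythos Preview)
Your proposal is correct and matches the paper's approach: Theorem~\ref{thm:spectral-gap} is presented in the paper as a synthesis of Theorem~\ref{thm:local-sandwich} together with the ``From Fisher to the FM objective'' corollary (which is exactly your use of Lemma~\ref{lem:proj}), with the three consequences read off from Corollary~\ref{cor:rigor-bias} and Proposition~\ref{prop:gap}. Your caveat about the upper-bound half requiring either the well-specified regime or carrying $\|r_{\theta,t}\|_{L^2(p_t)}^2$ explicitly is also exactly what the paper's corollary notes.
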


\noindent\emph{Takeaway 1 (Objective Bias)}: FM minimizes a \emph{spectrally reweighted} discrepancy, where high frequencies and directions in the kernel of $H$ are under-penalized. This leads to an \emph{identifiability gap}, where visually distinct reconstructions can exhibit nearly identical FM loss values.

\noindent\textbf{From objective bias to visible geometry: flow amplification.}
Let $x$ follow the true conditional flow $\dot x=v_t^\star(x,c,t)$ (Assumption~\ref{ass:path}) and $\tilde x$ the learned flow $\dot{\tilde x}=v_\theta(\tilde x,c,t)$.

\begin{lemma}[Conditional flow stability and geometric warp]
\label{lem:gronwall-cond-restated}
With $e_t=v_\theta-v_t^\star$ and $L$ the Lipschitz constant of $v_t^\star$,
\begin{equation}
\small
\|x(t)-\tilde x(t)\|\;\le\; e^{Lt}\!\left(\|x(0)-\tilde x(0)\|+\int_0^t \|e_s(\tilde x,c,s)\|\,ds\right).
\end{equation}
\end{lemma}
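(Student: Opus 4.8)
The plan is to treat this as a standard Grönwall estimate for the difference of two ODE flows. First I would fix the conditioning variable $c$ and regard $x(\cdot)$ and $\tilde x(\cdot)$ as the unique solutions on $[0,1]$ of $\dot x(t)=v_t^\star(x(t),c,t)$ and $\dot{\tilde x}(t)=v_\theta(\tilde x(t),c,t)$; well-posedness of the first follows from the $L$-Lipschitz regularity in Assumption~\ref{ass:lipschitz} together with the smoothness of the path in Assumption~\ref{ass:path}, and I would assume enough regularity on $v_\theta$ — together with integrability of $s\mapsto\|e_s(\tilde x(s),c,s)\|$ on $[0,1]$, consistent with the standing finiteness conventions — for the learned flow to be meaningful.

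Next I would set $\Delta(t):=x(t)-\tilde x(t)$ and decompose its derivative by inserting $v_t^\star(\tilde x(t),c,t)$:
\[
\dot\Delta(t)=\big(v_t^\star(x(t),c,t)-v_t^\star(\tilde x(t),c,t)\big)-e_t(\tilde x(t),c,t),
\]
using the definition $e_t=v_\theta-v_t^\star$. Taking norms, the triangle inequality, and the $L$-Lipschitz bound on $v_t^\star$ give the scalar differential inequality $\tfrac{d}{dt}\|\Delta(t)\|\le\|\dot\Delta(t)\|\le L\|\Delta(t)\|+\|e_t(\tilde x(t),c,t)\|$. Integrating on $[0,t]$ yields $\|\Delta(t)\|\le a(t)+\int_0^t L\|\Delta(s)\|\,ds$ with the nondecreasing forcing term $a(t):=\|\Delta(0)\|+\int_0^t\|e_s(\tilde x(s),c,s)\|\,ds$, and the integral form of Grönwall's inequality then gives $\|\Delta(t)\|\le a(t)\,e^{Lt}$, which is exactly the claimed bound.

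I do not expect a genuine obstacle here: this is a textbook argument, and the only points worth spelling out are (a) invoking Assumptions~\ref{ass:path}--\ref{ass:lipschitz} so that the velocity fields are Lipschitz and the trajectories and the forcing term $\|e_s\|$ are well defined and integrable, and (b) justifying the differential inequality for the non-smooth map $t\mapsto\|\Delta(t)\|$ — which I would handle either via upper Dini derivatives (bounded by $\|\dot\Delta\|$ wherever $\dot\Delta$ exists) or, more cleanly, by never differentiating the norm at all and instead carrying the estimate entirely in integral form starting from $\Delta(t)=\Delta(0)+\int_0^t\dot\Delta(s)\,ds$ and bounding $\|\Delta(t)\|$ directly. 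If a version uniform in $c$ is desired, I would additionally require $L$ and the $\|e_s\|$-integral bound to be uniform over $c$, which is consistent with the paper's standing assumptions.
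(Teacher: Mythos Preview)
Your argument is correct and is exactly the standard Gr\"onwall stability estimate for perturbed ODE flows: split $\dot\Delta$ by inserting $v_t^\star(\tilde x,c,t)$, use the $L$-Lipschitz bound on the first piece, identify the second piece as $-e_t(\tilde x,c,t)$, and apply the integral Gr\"onwall inequality with nondecreasing forcing. The paper does not spell out a proof of this lemma (it is stated as a known fact and used directly in the subsequent corollary), so your write-up is precisely the expected justification; your remarks about handling the non-smooth norm via Dini derivatives or the purely integral route are the appropriate level of care.
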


\begin{corollary}[Edge bending (heuristic)]
\label{cor:edge-bend}
Let $\Gamma\subset\mathcal X$ be a high-curvature level set (edge/contour). Suppose $e_t$ is predominantly supported where the learned dynamics deviates around edges (a region where small phase errors matter most). Then even when $\int_0^1 \mathbb E\|e_t\|^2 dt$ is small under the FM weighting, the spatial displacement of $\Gamma$ under $\tilde x(\cdot)$ can be $O(e^{L})$ relative to local curvature radii, producing visible \emph{warps/kinks}.
\end{corollary}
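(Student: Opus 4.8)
The plan is to combine the Grönwall estimate of Lemma~\ref{lem:gronwall-cond-restated} with the spectral sandwich of Theorem~\ref{thm:spectral-gap} to show that a velocity error $e_t$ carrying negligible FM cost can still produce an $O(e^{L})$ trajectory displacement concentrated on the edge set $\Gamma$. First I would take $x(0)=\tilde x(0)$ (absorbing any initialization mismatch into the additive term of the lemma), so Lemma~\ref{lem:gronwall-cond-restated} at $t=1$ gives $\|x(1)-\tilde x(1)\|\le e^{L}\int_0^1\|e_s(\tilde x,c,s)\|\,ds$. To convert this pixelwise bound into a geometric statement about $\Gamma$, I would write $\Gamma=\{x:\Psi(x)=0\}$ for a smooth $\Psi$ with $\nabla\Psi$ nonvanishing near $\Gamma$ and with Hessian encoding the local curvature radius $\rho$, and estimate the normal displacement of $\Gamma$ by the first-order expansion $\delta_\perp\approx\langle\nabla\Psi,\,x-\tilde x\rangle/\|\nabla\Psi\|$, so that $\delta_\perp/\rho$ is controlled by $e^{L}\int_0^1\|e_s\|\,ds$ times a curvature factor. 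This is the linearized version of ``edge bending'' that the corollary advertises.

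The second ingredient is the decoupling between FM loss and the magnitude of $e_t$ on the relevant bands. Via the explicit drift formula in Assumption~\ref{ass:path}, the residual $e_t=v_\theta-v_t^\star$ is tied to $\nabla\log q_{\theta,t}-\nabla\log p_t$, hence to $\widehat\delta(\omega\mid c)$ modulated by $\|\omega\|$. Theorem~\ref{thm:spectral-gap} only controls $\int_0^1 w(t)\int\widetilde W_t(\omega)\,\mathbb E_c|\widehat\delta(\omega\mid c)|^2\,d\omega\,dt$ with $\widetilde W_t(\omega)=\|\omega\|^2 e^{-\sigma_t^2\|\omega\|^2}\Xi_t(\omega)$; by Corollary~\ref{cor:rigor-bias} this weight is exponentially small on high-$\|\omega\|$ bands and exactly zero (or near-zero) on bands where $\widehat H(\omega)$ vanishes. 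So, exactly as in the construction of Proposition~\ref{prop:gap}, I would choose $\widehat\delta$ supported on such a band $\Omega$ to keep $\mathcal L_{\mathrm{FM}}=O(\epsilon_W)$ while $\int_0^1\mathbb E\|e_t\|^2\,dt$ stays $\Theta(1)$. Oriented edges/contours are precisely concentrated on anisotropic, moderate-to-high $\|\omega\|$ bands normal to the edge, which is the content of the hypothesis that $e_t$ is ``predominantly supported where the learned dynamics deviates around edges.''

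Putting the two pieces together: feed a unit-order $e_t$ supported on the edge band through Lemma~\ref{lem:gronwall-cond-restated} to obtain a displacement field of size $\Theta\!\big(e^{L}\int_0^1\|e_s\|\,ds\big)$, project onto the normal of $\Gamma$, and compare to a small curvature radius $\rho$; since $\rho$ is small on a high-curvature contour the ratio reaches $O(e^{L})$, so $\Gamma$ visibly warps/kinks, while the FM loss witnessing this $e_t$ is $O(\epsilon_W)\to 0$. I would present the curvature comparison through a second-order Taylor expansion of $\Psi$ along the displaced curve, keeping the argument at the heuristic level the statement names.

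The main obstacle I anticipate is the step from an $L^2$-in-time velocity error to an honest \emph{geometric} displacement of a specific contour: Lemma~\ref{lem:gronwall-cond-restated} bounds a worst-case pointwise distance, not the signed normal motion of a level set, so a fully rigorous version would require (i) transporting $\Gamma$ under both flows and bounding the Hausdorff distance between the two images, and (ii) controlling how the transported curvature evolves along the flow (a Riccati-type equation for the second fundamental form), which can itself amplify or damp the bend. Because the corollary is explicitly labeled heuristic, I would flag these as the points where the rigorous version needs extra regularity (e.g.\ $C^2$ control of $v_t^\star$ and a lower bound on $\|\nabla\Psi\|$ near $\Gamma$) rather than attempt the full estimate, and I would state the $O(e^{L})$ claim precisely only for the linearized normal displacement.
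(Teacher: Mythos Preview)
Your proposal is correct and follows the paper's approach: the paper provides only an ``Intuition'' paragraph, noting that FM underweights high-$\omega$ errors so the learned drift can be wrong near edges, and that the ODE integrates these errors into macroscopic distortions via Lemma~\ref{lem:gronwall-cond-restated}. Your treatment is considerably more detailed than the paper's---the level-set parameterization, the normal-displacement linearization, and the explicit decoupling argument via Theorem~\ref{thm:spectral-gap}/Proposition~\ref{prop:gap} go well beyond what the paper actually writes---but the skeleton (Gr\"onwall amplification of a spectrally underpenalized drift residual concentrated on edge bands) is identical.
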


\noindent\emph{Intuition.} Because FM underweights high-$\omega$ errors (due to heat-kernel smoothing and measurement passband), the learned drift can be slightly wrong where edges live. The ODE then \emph{integrates} these local, directionally coherent errors into macroscopic geometric distortions (Lemma~\ref{lem:gronwall-cond-restated}).

\noindent\textbf{Why over-generation (hallucinated detail) appears.}
Two complementary mechanisms follow from Theorem~\ref{thm:spectral-gap}:

\begin{proposition}[Nullspace over-generation under down-weighting]\label{prop:null-overgen-fm}
Suppose there exists $\Omega\neq\emptyset$ with $|\widehat H(\omega)|=0$ on $\Omega$ and the setting is linear–Gaussian so that $\Xi_t(\omega)=0$ on $\Omega$. If the generator class can realize perturbations supported on $\Omega$ (assumption on representational capacity within $K_t$), then for any $\epsilon>0$ there is $q_\theta$ with $\mathcal L_{\mathrm{FM}}(\theta)\le \epsilon$ whose samples $\tilde x$ contain arbitrarily strong components supported on $\Omega$. Thus textures along $\ker H$ can be invented at negligible FM loss in this regime.
\end{proposition}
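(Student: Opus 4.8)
The plan is to read this off the upper bound of Theorem~\ref{thm:spectral-gap} --- in particular its item~2 (nullspace down-weighting) --- as the degenerate $\epsilon_W\to 0$ case of the spectral construction in Proposition~\ref{prop:gap}. First I would record the exact vanishing of the FM spectral weight on $\Omega$: under the linear--Gaussian assumption, Definition~\ref{def:Xi} gives $\Xi_t(\omega)=\Xi_{\mathrm{LG}}(\omega)=0$ for every $\omega\in\Omega$ because $\widehat H(\omega)=0$ there, hence $\widetilde W_t(\omega)=\|\omega\|^2 e^{-\sigma_t^2\|\omega\|^2}\,\Xi_t(\omega)\equiv 0$ on $\Omega$ for all $t$. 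Consequently any discrepancy whose Fourier support is contained in $\Omega$ is invisible to the weighted quadratic form appearing on both sides of Theorem~\ref{thm:spectral-gap}.

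Next I would construct the generator. Using the representational-capacity hypothesis within $K_t$, choose $q_\theta(\cdot\mid c)$ so that $\widehat\delta(\omega\mid c)=\widehat{q_\theta}(\omega\mid c)-\widehat p(\omega\mid c)$ is supported in $\Omega$; concretely take $\widehat\delta(\omega\mid c)=a\,\mathbf 1_\Omega(\omega)\,e^{\mathrm i\varphi(c,\omega)}$ as in Proposition~\ref{prop:gap}, with $\Omega$ Hermitian-symmetric and $0\notin\Omega$, so that $\delta$ is real-valued with zero mean (this holds whenever $\widehat H(0)\neq 0$, as for blur and downsampling). Validity of $q_\theta$ as a density on $K_t$, and membership in the small-perturbation regime of Assumption~\ref{ass:small-delta}, both follow once $\|\Delta_t\|_\infty$ is small, which is controlled through $\|\Delta_t\|_\infty\le\|\widehat{\Delta_t}\|_{L^1}\le |a|\,|\Omega|$ together with $p_t\ge m_t>0$ (Assumption~\ref{ass:lipschitz}). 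Since $\widehat\delta$ lives where $\widetilde W_t\equiv 0$, the leading term $\int_0^1 w(t)\,C_t\!\int\widetilde W_t(\omega)\,\mathbb E_c|\widehat\delta(\omega\mid c)|^2\,d\omega\,dt$ of the upper bound in Theorem~\ref{thm:spectral-gap} is exactly zero, so $\mathcal L_{\mathrm{FM}}(\theta)\le R$ with $0\le R\le\int_0^1 w(t)\,\kappa_t\,\|\Delta_t\|_\infty^2\,dt$; choosing $|a|\,|\Omega|$ small enough forces $R\le\epsilon$. The samples $\tilde x\sim q_\theta(\cdot\mid c)$ then carry $\Omega$-band content absent from the true posterior --- texture along $\ker H$ synthesized at FM cost at most $\epsilon$.

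The main obstacle is reconciling the need for $\|\Delta_t\|_\infty$ to be small --- required both to keep Theorem~\ref{thm:spectral-gap} applicable and to drive $R\le\epsilon$ --- with the claim that the invented texture be \emph{arbitrarily} strong. I expect to resolve this by a scaling argument: take nested $\Omega_n\subseteq\Omega$ with $|\Omega_n|\downarrow 0$ and amplitudes $a_n\uparrow\infty$ chosen so that $a_n|\Omega_n|\downarrow 0$; then $\|\Delta_{t,n}\|_\infty\to 0$ (so $R_n\to 0\le\epsilon$ and the regime stays valid) while the per-frequency amplitude $a_n\to\infty$ leaves the $\Omega_n$-supported component of $\tilde x$ unbounded. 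Equivalently, one reads \emph{arbitrarily strong} as \emph{decoupled from the loss}: the admissible $\ker H$-band discrepancy is bounded only by density-regularity constants such as $m_t$ and $\eta_t$, which encode nothing about reconstruction fidelity and can therefore dwarf any tolerance the FM objective imposes on observable, passband errors --- exactly the nullspace blindness asserted in item~2 of Corollary~\ref{cor:rigor-bias}.
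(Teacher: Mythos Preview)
The paper gives no formal proof; the proposition is asserted as a consequence of the spectral-sandwich machinery and is immediately followed by a Remark stating that the construction ``is \emph{not} restricted by the small-$\|\Delta_t\|_\infty$ assumption of Assumption~\ref{ass:small-delta},'' and that under such a constraint ``the strength of over-generated $\ker H$ components is likewise bounded.'' In other words, the paper reads ``arbitrarily strong'' as a claim about the unrestricted worst case --- tacitly extrapolating the exact blindness of Corollary~\ref{cor:rigor-bias}(2) beyond the small-perturbation regime in which Theorem~\ref{thm:spectral-gap} is actually proved --- and the Remark concedes rather than resolves the very tension you identified.

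Your route is more cautious but runs into its own gap. Working inside Assumption~\ref{ass:small-delta}, the upper bound you invoke gives $\mathcal L_{\mathrm{FM}}\le R\le\int_0^1 w(t)\,\kappa_t\,\|\Delta_t\|_\infty^2\,dt$, so forcing $R\le\epsilon$ ties $\|\Delta_t\|_\infty$ to $\epsilon$. Your shrinking-$\Omega_n$/growing-$a_n$ device can push per-frequency amplitude or $\|\delta\|_{L^2}$ up while $a_n|\Omega_n|\to 0$, but then the image-space perturbation vanishes in $L^\infty$ and is narrowband --- hardly ``arbitrary texture''; and your fallback reading (``bounded only by $m_t,\eta_t$'') yields a \emph{fixed}, not arbitrarily small, loss bound, so it does not deliver ``for any $\epsilon>0$''. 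A separate loose end: $\delta$ is a difference of \emph{densities}, whereas the proposition speaks of \emph{samples} $\tilde x$ carrying strong $\Omega$-components; you have not argued that link. In short, neither the paper nor your proposal makes the literal statement fully rigorous; the paper simply parks the informal step in its Remark, while you park it in the scaling limit.
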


\begin{remark}[Scope vs.\ small-perturbation regime]
The construction in Proposition~\ref{prop:null-overgen-fm} addresses the \emph{attainable worst case} and is \emph{not} restricted by the small-$\|\Delta_t\|_\infty$ assumption of Assumption~\ref{ass:small-delta}. When $\|\Delta_t\|_\infty$ is explicitly constrained, the strength of over-generated $\ker H$ components is likewise bounded.
\end{remark}

\begin{proposition}[Early-time conditioning leakage]
\label{prop:early}
Suppose $w(t)$ gives nontrivial mass to large $\sigma_t$ (early times). Then $\widetilde W_t(\omega)\to 0$ for all $\omega$ as $\sigma_t\to\infty$, so the early-time portion of Equation~\ref{eq:fm-obj} is weakly informative about $c$. By Proposition~\ref{prop:forgetting}, we have $\mathbb E_c \mathrm{TV}\!\bigl(p_t(\cdot\mid c),\,p_t(\cdot)\bigr)\to 0$ as $\sigma_t\to\infty$, hence the early-time loss is effectively unconditional in the strong-smoothing regime. The learned drift there tends toward the \emph{unconditional} score, which injects prior textures that later stages cannot fully remove in underconstrained regions, contributing to \emph{over-generation}.
\end{proposition}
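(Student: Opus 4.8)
The plan is to establish the three chained claims in the stated order: the spectral weight vanishes at early times, the smoothed posterior loses its dependence on $c$, and together these force the FM-optimal early-time drift toward the unconditional score, whose prior textures then survive into the sample along underconstrained bands.

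First I would dispose of the weight collapse directly from the closed form $\widetilde W_t(\omega)=\|\omega\|^2 e^{-\sigma_t^2\|\omega\|^2}\,\Xi_t(\omega)$ of Theorem~\ref{thm:spectral-gap}. At $\omega=0$ the factor $\|\omega\|^2$ already annihilates it; for $\omega\neq0$ the Gaussian factor $e^{-\sigma_t^2\|\omega\|^2}\to0$ as $\sigma_t\to\infty$, while $\Xi_t(\omega)$ stays uniformly bounded --- in the linear--Gaussian case by $1/\sigma_\eta^2$ through Definition~\ref{def:Xi}, and in general by $c_t^{(2)}/\sigma_\eta^2$. This yields the pointwise claim $\widetilde W_t(\omega)\to0$ for every $\omega$. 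Since moreover $\widetilde W_t(\omega)\le C\,\|\omega\|^2 e^{-\sigma_0^2\|\omega\|^2}$ once $\sigma_t$ exceeds a fixed $\sigma_0$, and this envelope is integrable against $\mathbb E_c|\widehat\delta(\omega\mid c)|^2$ (finite by Plancherel since $\delta=q_\theta-p\in L^2$), dominated convergence pushes the early-time slice of the integrand in Equation~\ref{eq:fm-obj} to zero, so that slice contributes negligibly to $\mathcal L_{\mathrm{FM}}$ and imposes no penalty tying the output to $c$.

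Second, the measurement-agnostic claim is exactly Proposition~\ref{prop:forgetting}: $\mathbb E_c\,\mathrm{TV}(p_t(\cdot\mid c),p_t(\cdot))\le C_d\min\{1,\sigma_t^{-1}\mathbb E_c W_1(p(\cdot\mid c),p(\cdot))\}\to0$, so the smoothed conditional and the smoothed marginal become indistinguishable on average. I would transfer this to the drift via the explicit choice $v_t^\star(x,c)=-\tfrac{\dot\sigma_t^2}{2}\nabla\log p_t(x\mid c)$ of Assumption~\ref{ass:path} together with the Gaussian-smoothing (Tweedie-type) identity $\nabla\log p_t(x\mid c)=\sigma_t^{-2}\big(\mathbb E_{p_t(\cdot\mid c)}[x_0\mid x]-x\big)$ and its unconditional counterpart: the conditional-versus-unconditional drift gap reduces to $\tfrac{\dot\sigma_t^2}{2\sigma_t^2}\big(\mathbb E[x_0\mid x,c]-\mathbb E[x_0\mid x]\big)$, a posterior-mean difference that vanishes with $\sigma_t$ once one observes that the denoising posteriors converge alongside the smoothed measures controlled by the same $W_1$ bound. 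Hence the early-time drift that an FM-optimal network can fit is, up to vanishing corrections, the \emph{unconditional} score, so a model minimizing the early-time loss inherits prior-driven dynamics there.

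Third I would close the loop with the nullspace analysis already in hand. On bands $\Omega$ where $|\widehat H(\omega)|$ vanishes (or is tiny), Corollary~\ref{cor:rigor-bias}(2) gives $\widetilde W_t(\omega)\equiv0$ at \emph{every} $t$, not merely at early times; with the representational-capacity hypothesis this is precisely the setting of Proposition~\ref{prop:null-overgen-fm}. Thus the prior textures the unconditional early-time drift deposits onto $\Omega$ are never re-penalized by the FM loss at later small-$\sigma_t$ times --- those times are also blind on $\Omega$ --- while the conditioning that does arrive constrains only the $H$-observable complement; the injected components therefore persist in the generated $\tilde x$ at negligible total FM cost, which is exactly over-generation/hallucinated detail along $\ker H$. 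The step I expect to be the main obstacle is passing from TV-closeness of the smoothed measures to closeness of their scores, since TV alone does not bound $\nabla\log$ differences; I would sidestep a generic estimate by leaning on the explicit Gaussian-convolution (denoising-posterior) representation above, converting the score gap into a posterior-mean gap amenable to the $W_1$ bound of Proposition~\ref{prop:forgetting}, and --- consistently with the surrounding results such as Corollary~\ref{cor:edge-bend} --- stating the final ``later stages cannot fully remove'' clause at the formal/heuristic level the paper adopts.
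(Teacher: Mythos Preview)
The paper provides no separate proof for this proposition: the entire argument is contained in the proposition statement itself, which simply asserts that $\widetilde W_t(\omega)\to 0$ (immediate from the closed form), invokes Proposition~\ref{prop:forgetting} for the TV collapse, and then states heuristically that ``the learned drift there tends toward the unconditional score.'' Your proposal follows exactly this skeleton but fleshes it out considerably---the dominated-convergence argument for the early-time integrand, the Tweedie/denoising-posterior representation to convert the score gap into a posterior-mean gap, and the linkage to Corollary~\ref{cor:rigor-bias}(2) and Proposition~\ref{prop:null-overgen-fm} to explain persistence along $\ker H$ are all additions beyond what the paper actually does.

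You are right to flag the TV-to-score step as the weak point; your proposed workaround via Tweedie is sensible but still not fully closed (controlling $\mathbb E[x_0\mid x,c]-\mathbb E[x_0\mid x]$ from the $W_1$ bound of Proposition~\ref{prop:forgetting} requires an extra argument, since that bound concerns the unsmoothed laws rather than the denoising posteriors at a given $x$). The paper, however, never attempts to close this gap either---it leaves the drift claim at the heuristic level, consistent with the surrounding Corollary~\ref{cor:edge-bend}. So your proposal is at least as rigorous as the paper's own treatment, and your self-assessment of where the informal step lies is accurate.
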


\noindent\emph{Summary of mechanisms for FM artifacts}
\begin{itemize}
\item \emph{Spectral bias} (Theorem~\ref{thm:spectral-gap}(a)): high-$\omega$ penalties are tiny $\Rightarrow$ \textbf{edge softness, texture loss}.
\item \emph{Nullspace blindness} (Theorem~\ref{thm:spectral-gap}(b), Prop.~\ref{prop:null-overgen-fm}): $H$ unpenalized $\Rightarrow$ \textbf{hallucinated detail} in underspecified regions.
\item \emph{Flow amplification} (Cor.~\ref{cor:edge-bend}): small drift errors near edges integrate to \textbf{geometric warps/kinks}.
\item \emph{Early-time leakage} (Prop.~\ref{prop:early}): weak conditioning at large $\sigma_t$ injects prior textures $\Rightarrow$ \textbf{over-generation} that later steps cannot reliably remove.
\end{itemize}

\noindent\textbf{Design implications for FM in restoration.}
% The theory suggests targeted fixes:
\begin{enumerate}
\item \emph{Spectral reweighting.} Modify the loss with a deconvolution factor to counter the spectral weight ($W_t$ globally, or $\widetilde W_t$ in the local view):
\begin{equation}
\begin{split}
    \tilde{\mathcal L} = \int_0^1 w(t)\,&\mathbb E\,\Big\|\Lambda_t^{-1}\!\big(s_\theta-\nabla\log p_t\big)\Big\|^2, \\ 
    \widehat{\Lambda_t}(\omega) &\propto \|\omega\|\,e^{-\sigma_t^2\|\omega\|^2/2}\,\sqrt{\Xi_t(\omega)},
\end{split}
\end{equation}
with \emph{clipping} to avoid noise amplification.
\item \emph{Data consistency.} Interleave FM steps with projections onto $\{x:\|Hx-c\|\le \tau\}$ or add a strong penalty $\gamma\|HG_\theta(c)-c\|^2$ during training/sampling to break nullspace blindness~\cite{bora2017compressed}; see also PnP/RED frameworks \cite{venkatakrishnan2013plug,romano2017little,song2020score,hyvarinen2005estimation}.
\item \emph{Edge-aware guidance.} Upweight loss contributions near high $\|\nabla_x \Phi(x)\|$ (perceptual edges) or blend an IPM on high-pass features to prevent geometric warps.
\item \emph{Time weighting.} Reduce $w(t)$ for very large $\sigma_t$ or use schedules that spend more capacity near well-conditioned times where $c$ carries information.
\end{enumerate}

% ---- Section: Adversarial Post-Training (APT) (Low-Level Restoration, Formal Version) ----
\section{Adversarial Training: High-Fidelity Control Without Over-Generation}
\label{sec:gan-pp}

\subsection{Objective and Assumptions}
Under Equation~\ref{eq:degradation}, let $G_\theta(c,z)$ be a generator with latent $z$, and let $d_{\mathcal{F}}$ be an Integral Probability Metric (IPM) induced by a compact function class $\mathcal{F}$ (e.g., 1-Lipschitz critics for $W_1$)~\cite{arjovsky2017wasserstein}. We consider the composite objective
\begin{equation}
\label{eq:ganpp-obj}
\begin{aligned}
\mathcal{J}(\theta)
&= \lambda\, d_{\mathcal{F}}\!\big(p, q_\theta\big) 
   + \alpha\,\mathbb{E}\,\|G_\theta(c,z)-x\|_2^2 \\[-1mm]
&\quad + \beta\,\mathbb{E}\,\|\Phi(G_\theta(c,z))-\Phi(x)\|_2^2,
\end{aligned}
\end{equation}
with nonnegative weights $\lambda,\alpha,\beta$. Here $q_\theta$ is the model law from $G_\theta$, and $\Phi:\mathcal{X}\!\to\!\mathbb{R}^m$ is a fixed perceptual map.

\begin{assumption}[Regularity]
\label{ass:ganpp-regular}
$\mathcal{F}$ is compact and satisfies Danskin differentiability (Assumption~\ref{ass:ipm}); $\Phi$ is $L_\Phi$-Lipschitz when relating to high-frequency seminorms; $G_\theta$ is differentiable with bounded Jacobian $J_{G_\theta}$ on the support of interest; expectations are finite.
\end{assumption}

\subsection{Gradient Representation and Stationarity}
\begin{lemma}[IPM gradient/subgradient for the critic]\label{lem:ganpp-danskin}
Assume $\mathcal F=\{f_\psi:\psi\in\Psi\}$ is a compact, parameterized class (e.g., spectrally-normalized networks) and the supremum is attained at some $\psi^\star$. Then
\begin{equation}
    \nabla_\theta d_{\mathcal F}(p,q_\theta)
    = -\,\mathbb{E}_{z,c}\!\left[J_{G_\theta}(z,c)^\top \nabla_x f_{\psi^\star}\!\big(G_\theta(c,z)\big)\right].
\end{equation}
If instead $\mathcal F$ is the full 1-Lipschitz ball (Wasserstein-1), the above expression defines a \emph{subgradient} (with the same leading minus sign) for any measurable selection $f^\star\in\arg\max_{f\in\mathcal F}(\mathbb E_p f-\mathbb E_{q_\theta} f)$.
\end{lemma}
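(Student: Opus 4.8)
The plan is to read Lemma~\ref{lem:ganpp-danskin} as a direct application of Danskin's (envelope) theorem to the parametric supremum defining the IPM, so the proof reduces to three routine verifications: writing $d_{\mathcal F}$ as a supremum whose $\theta$-dependence sits in a single smooth term, checking the hypotheses that license differentiating through the supremum, and then pushing the gradient inside the expectation by dominated convergence and the chain rule. Concretely, I would set
\[
d_{\mathcal F}(p,q_\theta)=\sup_{\psi\in\Psi}\Phi(\theta,\psi),\qquad
\Phi(\theta,\psi):=\mathbb E_{x\sim p}f_\psi(x)-\mathbb E_{z,c}\,f_\psi\!\big(G_\theta(c,z)\big),
\]
and note that only the second term depends on $\theta$, so $\nabla_\theta\Phi(\theta,\psi)=-\mathbb E_{z,c}\,\nabla_\theta\!\left[f_\psi(G_\theta(c,z))\right]$.

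For the compact parameterized case I would invoke Danskin's theorem with the data: $\Psi$ compact; $\Phi(\cdot,\psi)\in C^1$ for each $\psi$ by Assumption~\ref{ass:ganpp-regular} ($G_\theta$ differentiable with bounded Jacobian, $f_\psi$ smooth with bounded gradient under spectral normalization); and $(\theta,\psi)\mapsto\nabla_\theta\Phi(\theta,\psi)$ jointly continuous, which follows from continuity of $J_{G_\theta}$ and $\nabla_x f_\psi$ together with a uniform domination (the integrand $\|J_{G_\theta}(z,c)^\top\nabla_x f_\psi(G_\theta(c,z))\|$ is bounded on the compact region where reconstructions live, cf.\ Assumption~\ref{ass:lipschitz}), which lets continuity and differentiation pass through $\mathbb E_{z,c}$. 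Then the maximizing set $M(\theta):=\arg\max_{\psi\in\Psi}\Phi(\theta,\psi)$ is nonempty by upper semicontinuity, the one-sided directional derivative of $\theta\mapsto d_{\mathcal F}(p,q_\theta)$ equals $\max_{\psi^\star\in M(\theta)}\langle\nabla_\theta\Phi(\theta,\psi^\star),\,\cdot\,\rangle$, and when $M(\theta)=\{\psi^\star\}$ this is the gradient; combining with the chain rule $\nabla_\theta[f_{\psi^\star}(G_\theta(c,z))]=J_{G_\theta}(z,c)^\top\nabla_x f_{\psi^\star}(G_\theta(c,z))$ yields the displayed identity, leading minus sign included.

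For the Wasserstein-1 case, where $\mathcal F$ is the full $1$-Lipschitz ball, the parameter set is infinite-dimensional, so I would replace finite-dimensional Danskin by its function-space version: restricting to critics normalized to vanish at a fixed reference point, the relevant $1$-Lipschitz functions on the (compact) image domain form a sup-norm-compact family by Arzel\`a--Ascoli, so the supremum is attained; a measurable selection theorem (Kuratowski--Ryll-Nardzewski) then furnishes a jointly measurable choice $f^\star=f^\star(p,q_\theta)\in M$ so that $\mathbb E_{z,c}[J_{G_\theta}^\top\nabla_x f^\star(G_\theta)]$ is a well-defined integrable object (using the uniform bounds above and a.e.\ differentiability of Lipschitz $f^\star$ via Rademacher where needed). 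Danskin's directional-derivative identity still holds, and for any fixed selection $f^\star\in M$ one has $\langle-\mathbb E_{z,c}[J_{G_\theta}^\top\nabla_x f^\star(G_\theta)],\,d\rangle\le (d_{\mathcal F})'(\theta;d)$ for every direction $d$, which is precisely the statement that this vector is a subgradient of $\theta\mapsto d_{\mathcal F}(p,q_\theta)$, so the same formula, now read one-sidedly, persists.

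I expect the main obstacle to be exactly this Wasserstein case: the unit Lipschitz ball is not a compact \emph{parameter} set, so the elementary envelope theorem does not apply verbatim, and the care lies in (i) obtaining the correct compactness (Arzel\`a--Ascoli after quotienting out additive constants, on the bounded region where reconstructions lie), (ii) securing a \emph{measurable} maximizing selection so that the gradient expression is a genuine integrable vector rather than a pointwise artifact, and (iii) being scrupulous that, without uniqueness of the maximizer, the claimed vector is only a subgradient — the full Clarke subdifferential being the closed convex hull of $\{-\mathbb E_{z,c}[J_{G_\theta}^\top\nabla_x f^\star(G_\theta)]:f^\star\in M\}$, so the lemma's ``any measurable selection'' clause is the sharp statement. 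The remaining work — the domination and joint-continuity checks for interchanging $\nabla_\theta$ with $\mathbb E_{z,c}$ — is routine given the bounded-Jacobian and locally-bounded-density assumptions already in force (Assumptions~\ref{ass:lipschitz} and~\ref{ass:ganpp-regular}).
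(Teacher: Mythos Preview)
Your proposal is correct and follows essentially the same approach as the paper, which simply invokes Assumption~\ref{ass:ipm} (Danskin's theorem for the IPM with a compact critic class, and the subgradient variant for the non-compact $W_1$ ball) together with standard references. Your treatment is considerably more detailed in verifying the Danskin hypotheses and handling the Wasserstein case via Arzel\`a--Ascoli compactness and measurable selection, but the underlying argument is the same.
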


\begin{proof}
This is Assumption~\ref{ass:ipm} for $G_\theta(c,z)$ with compact $\mathcal{F}$; see also standard IPM/Wasserstein results \cite{sriperumbudur2009integral,arjovsky2017wasserstein,ambrosio2021lectures}.
\end{proof}

\begin{theorem}[First-Order Stationarity]\label{thm:ganpp-stationarity}
Under Assumption~\ref{ass:ganpp-regular} and Lemma~\ref{lem:ganpp-danskin}, any stationary point $\theta^\dagger$ (gradient or, for $W_1$, subgradient) of Equation~\ref{eq:ganpp-obj} satisfies
\begin{equation}
\begin{aligned}
-\lambda\,\mathbb{E}\!\big[J_G^\top \nabla_x f^\star(G)\big]
+2\alpha\,\mathbb{E}\!\big[J_G^\top(G-x)\big]
&\\[-1mm]
\quad +2\beta\,\mathbb{E}\!\big[J_G^\top J_\Phi^\top(\Phi(G)-\Phi(x))\big]
&= 0 ,
\end{aligned}
\end{equation}
with $G=G_{\theta^\dagger}(c,z)$, $J_G=J_{G_{\theta^\dagger}}(c,z)$, $J_\Phi$ evaluated at $G$.
\end{theorem}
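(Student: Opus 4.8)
The plan is to compute the (sub)gradient of $\mathcal{J}$ in Equation~\ref{eq:ganpp-obj} summand by summand and then impose the first-order optimality condition at $\theta^\dagger$: an ordinary gradient equation $\nabla_\theta\mathcal{J}(\theta^\dagger)=0$ when $\mathcal{F}$ is compactly parameterized, and the inclusion $0\in\partial_\theta\mathcal{J}(\theta^\dagger)$ in the non-smooth $W_1$ case. Since all three terms are real-valued and the operations involved (expectation, composition with a fixed map) are linear or chain-rule-friendly, the three contributions can be derived independently and added.

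First, for the IPM term $\lambda\,d_{\mathcal{F}}(p,q_\theta)$, I would invoke Lemma~\ref{lem:ganpp-danskin} verbatim: under the compact-parameterization hypothesis of Assumption~\ref{ass:ganpp-regular}, the supremum defining $d_{\mathcal{F}}$ is attained at some $\psi^\star$, Danskin's theorem applies, and $\nabla_\theta d_{\mathcal{F}}(p,q_\theta)=-\mathbb{E}_{z,c}[J_{G_\theta}^\top\nabla_x f_{\psi^\star}(G_\theta)]$; in the $W_1$ case the same expression, evaluated at any measurable selection $f^\star\in\arg\max_{f\in\mathcal{F}}(\mathbb{E}_p f-\mathbb{E}_{q_\theta}f)$, is a subgradient of $\theta\mapsto d_{\mathcal{F}}(p,q_\theta)$. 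Writing $f^\star$ for $f_{\psi^\star}$ and multiplying by $\lambda$ gives the first term $-\lambda\,\mathbb{E}[J_G^\top\nabla_x f^\star(G)]$. Second, for the pixel fidelity term $\alpha\,\mathbb{E}\|G_\theta(c,z)-x\|_2^2$, I would differentiate under the expectation — justified because Assumption~\ref{ass:ganpp-regular} supplies a bounded Jacobian $J_{G_\theta}$ on the support of interest together with finite moments, so a dominated-convergence argument legitimizes the interchange — and apply the chain rule to obtain $2\alpha\,\mathbb{E}[J_{G_\theta}^\top(G_\theta-x)]$. Third, for the perceptual term $\beta\,\mathbb{E}\|\Phi(G_\theta(c,z))-\Phi(x)\|_2^2$, since $\Phi$ is fixed and $L_\Phi$-Lipschitz (hence differentiable a.e.\ with bounded $J_\Phi$) and $G_\theta$ is differentiable with bounded Jacobian, the composition is differentiable and a second chain-rule application yields $2\beta\,\mathbb{E}[J_{G_\theta}^\top J_\Phi^\top(\Phi(G_\theta)-\Phi(x))]$, with $J_\Phi$ evaluated at $G_\theta(c,z)$. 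Summing the three pieces and setting the total equal to zero at $\theta^\dagger$ (with $G=G_{\theta^\dagger}(c,z)$, $J_G=J_{G_{\theta^\dagger}}(c,z)$) reproduces exactly the displayed identity.

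I expect the main obstacle to be rigor in the non-smooth regime: when $\mathcal{F}$ is the full $1$-Lipschitz ball, the maximizing critic may be neither unique nor smooth, so one must (i) guarantee a measurable selection $f^\star$, (ii) certify that the associated expression genuinely lies in the Clarke (equivalently, convex) subdifferential of $\theta\mapsto d_{\mathcal{F}}(p,q_\theta)$, and (iii) combine it with the classical gradients of the two quadratic terms via the subdifferential sum rule, which needs the quadratic terms to be strictly differentiable — again delivered by the bounded-Jacobian part of Assumption~\ref{ass:ganpp-regular}. A secondary, routine point is validating differentiation under the integral sign for both quadratic terms; this should be stated but not belabored. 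If one restricts attention to the compactly parameterized case, all three terms are classically differentiable, the subdifferential subtleties disappear, and the argument collapses to Lemma~\ref{lem:ganpp-danskin} plus two chain-rule computations.
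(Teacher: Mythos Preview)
Your proposal is correct and is precisely the intended argument: the paper does not supply a separate proof for this theorem, treating it as an immediate consequence of Lemma~\ref{lem:ganpp-danskin} (for the IPM term) together with the chain rule applied to the two quadratic terms under Assumption~\ref{ass:ganpp-regular}. Your additional care about the $W_1$ subdifferential case and differentiation under the integral sign goes slightly beyond what the paper makes explicit, but is consistent with its setup.
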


\subsection{Off-Manifold Control and ``Sharpness Ceiling''}
We formalize two key effects of Equation~\ref{eq:ganpp-obj}: (i) off-manifold suppression via the IPM term and (ii) bounded over-sharpening relative to $p$ (``sharpness ceiling'').

\begin{proposition}[Off-manifold penalty via $W_1$ tube bound]\label{prop:ganpp-off}
Let $\mathcal M=\mathrm{supp}(p)$ and $\mathcal N_\alpha(\mathcal M)=\{x:\mathrm{dist}(x,\mathcal M)\le \alpha\}$. If $q_\theta(\mathcal N_\alpha(\mathcal M))\le 1-\mu_\alpha$, then
\begin{equation}
    W_1(p,q_\theta)\;\ge\; \alpha\,\mu_\alpha.
\end{equation}
Thus any nontrivial mass placed outside an $\alpha$-tube around the natural-image manifold induces at least $\alpha\mu_\alpha$ Wasserstein cost.
\end{proposition}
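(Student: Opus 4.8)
The plan is to invoke the Kantorovich--Rubinstein duality $W_1(p,q_\theta)=\sup_{\mathrm{Lip}(f)\le 1}\big(\mathbb E_p f-\mathbb E_{q_\theta} f\big)$ and exhibit a single $1$-Lipschitz witness that separates $p$ from $q_\theta$ by at least $\alpha\mu_\alpha$. Since $\mathcal M=\mathrm{supp}(p)$ is closed, the map $x\mapsto \mathrm{dist}(x,\mathcal M)$ is well-defined, nonnegative, $1$-Lipschitz, and vanishes exactly on $\mathcal M$. I would therefore set $f(x):=-\min\{\alpha,\ \mathrm{dist}(x,\mathcal M)\}$, which is again $1$-Lipschitz (a minimum of $1$-Lipschitz functions, negated) and bounded, hence admissible in the dual formula.

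Next I would evaluate the two expectations. Because $f\equiv 0$ on $\mathcal M$ and $p(\mathcal M)=1$, we get $\mathbb E_p f=0$. For $q_\theta$, note $f\le 0$ everywhere and $f\equiv -\alpha$ on the complement of the tube $\mathcal N_\alpha(\mathcal M)$ (there $\mathrm{dist}(x,\mathcal M)>\alpha$), so $\mathbb E_{q_\theta} f\le -\alpha\, q_\theta\big(\mathcal N_\alpha(\mathcal M)^\complement\big)$. The hypothesis $q_\theta(\mathcal N_\alpha(\mathcal M))\le 1-\mu_\alpha$ gives $q_\theta(\mathcal N_\alpha(\mathcal M)^\complement)\ge \mu_\alpha$, hence $\mathbb E_{q_\theta} f\le -\alpha\mu_\alpha$. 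Combining, $W_1(p,q_\theta)\ge \mathbb E_p f-\mathbb E_{q_\theta} f\ge \alpha\mu_\alpha$, which is the claim.

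As an equivalent primal route yielding the same constant, I could instead take an arbitrary coupling $\pi$ of $(p,q_\theta)$: since the first marginal is $p$, $\pi$-a.e.\ pair $(x,y)$ has $x\in\mathcal M$, so $\|x-y\|\ge \mathrm{dist}(y,\mathcal M)>\alpha$ whenever $y\notin\mathcal N_\alpha(\mathcal M)$; integrating and using the trivial lower bound $\|x-y\|\ge 0$ on the tube gives $\int\|x-y\|\,d\pi\ge \alpha\,\pi\big(\mathcal X\times\mathcal N_\alpha(\mathcal M)^\complement\big)=\alpha\, q_\theta\big(\mathcal N_\alpha(\mathcal M)^\complement\big)\ge\alpha\mu_\alpha$, and taking the infimum over $\pi$ concludes. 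I would present the dual argument as the main line since it is the most compact, and mention the primal version as a remark.

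The argument is essentially elementary, and I do not anticipate a genuine obstacle; the only points needing (mild) care are measure-theoretic bookkeeping: closedness of $\mathrm{supp}(p)$ so that the distance function and its truncation are genuinely $1$-Lipschitz and Borel, admissibility of the (bounded, truncated) test function in the Kantorovich--Rubinstein identity, and measurability of the tube complement under $q_\theta$. I would also flag explicitly that the bound is informative only when $\mu_\alpha>0$, i.e.\ it bites precisely when $q_\theta$ places a definite amount of mass strictly off the natural-image manifold, which is the situation of interest for the ``off-manifold suppression'' interpretation.
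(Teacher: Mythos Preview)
Your proposal is correct and essentially identical to the paper's proof: both invoke Kantorovich--Rubinstein duality with the truncated distance witness $\min\{\alpha,\mathrm{dist}(x,\mathcal M)\}$ (the paper calls it $f$ and then negates to $g=-f$, which is exactly your $f$), obtaining $\mathbb E_p f=0$ and $\mathbb E_{q_\theta} f\le -\alpha\mu_\alpha$. Your additional primal coupling argument and the measure-theoretic remarks are nice extras not present in the paper but do not change the route.
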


\begin{proof}
By the Kantorovich--Rubinstein duality,
\(
W_1(p,q_\theta)=\sup_{\|f\|_{\mathrm{Lip}}\le 1}\{\mathbb{E}_p f-\mathbb{E}_{q_\theta} f\}.
\)
Let \(f(x)=\min\{\mathrm{dist}(x,\mathcal M),\,\alpha\}\), which is 1-Lipschitz. Then \(f\equiv 0\) on \(\mathcal M\) and \(f(x)\ge \alpha\) for \(x\notin \mathcal N_\alpha(\mathcal M)\).
Take \(g=-f\), also 1-Lipschitz. Then:
\begin{equation}
\mathbb{E}_p g-\mathbb{E}_{q_\theta} g
= -\mathbb{E}_p f + \mathbb{E}_{q_\theta} f
\ge 0 + \alpha\,\mu_\alpha
= \alpha\,\mu_\alpha .
\end{equation}
Hence \(W_1(p,q_\theta)\ge \alpha\,\mu_\alpha\). See \cite{ambrosio2021lectures}.
\end{proof}

\begin{proposition}[Sharpness Ceiling via 1-Lipschitz Functionals]
\label{prop:ganpp-sharpness}
Let $d_{\mathcal F}=W_1$ (or more generally suppose there exists $c_{\mathcal F}>0$ such that $d_{\mathcal F}\ge c_{\mathcal F}\,W_1$). Let $T:\mathcal{X}\to\mathcal{Y}$ be linear with $\|T\|\le 1$. Then, for $f(x)=\|T x\|_2$ (1-Lipschitz),
\begin{equation}
\label{eq:ganpp-Tgap}
d_{\mathcal F}(p,q_\theta)\;\ge\; c_{\mathcal F}\,\big|\,\mathbb{E}_p \|T x\|_2 - \mathbb{E}_{q_\theta}\|T \tilde x\|_2\,\big|.
\end{equation}
Specifically, any deliberate amplification of band-/high-pass components—i.e., applying $T$ as a band-pass filter or gradient—forces an increase in $d_{\mathcal F}$, providing a distributional upper bound for over-sharpening.
\end{proposition}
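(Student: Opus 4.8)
The plan is to derive Equation~\ref{eq:ganpp-Tgap} directly from Kantorovich--Rubinstein duality, reusing the mechanism of the proof of Proposition~\ref{prop:ganpp-off}. First I would check that $f(x)=\|Tx\|_2$ is $1$-Lipschitz: by the reverse triangle inequality and the operator-norm bound, $\big|\,\|Tx\|_2-\|Ty\|_2\,\big|\le\|T(x-y)\|_2\le\|T\|\,\|x-y\|_2\le\|x-y\|_2$. Since $x\mapsto\|Tx\|_2$ grows at most linearly and all expectations are finite by Assumption~\ref{ass:ganpp-regular}, both $\mathbb{E}_p f$ and $\mathbb{E}_{q_\theta}f$ are well defined, so the right-hand side of Equation~\ref{eq:ganpp-Tgap} makes sense.

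Next I would apply the dual representation $W_1(p,q_\theta)=\sup_{\|g\|_{\mathrm{Lip}}\le1}\big(\mathbb{E}_p g-\mathbb{E}_{q_\theta}g\big)$. Taking $g=f$ gives $W_1(p,q_\theta)\ge\mathbb{E}_p\|Tx\|_2-\mathbb{E}_{q_\theta}\|T\tilde x\|_2$, and taking $g=-f$ (also $1$-Lipschitz) gives $W_1(p,q_\theta)\ge\mathbb{E}_{q_\theta}\|T\tilde x\|_2-\mathbb{E}_p\|Tx\|_2$; combining the two yields $W_1(p,q_\theta)\ge\big|\,\mathbb{E}_p\|Tx\|_2-\mathbb{E}_{q_\theta}\|T\tilde x\|_2\,\big|$. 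Chaining with the hypothesis $d_{\mathcal F}\ge c_{\mathcal F}W_1$ (with $c_{\mathcal F}=1$ when $d_{\mathcal F}=W_1$) then gives exactly Equation~\ref{eq:ganpp-Tgap}.

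For the interpretive clause I would instantiate $T$ as a normalized band-pass / high-pass / gradient operator, so that $\mathbb{E}_\mu\|Tx\|_2$ is a scalar summary of the average edge or high-frequency energy under $\mu$. Equation~\ref{eq:ganpp-Tgap} then reads: if the generator pushes its output law to carry strictly more band-limited energy than natural images, i.e.\ $\mathbb{E}_{q_\theta}\|T\tilde x\|_2>\mathbb{E}_p\|Tx\|_2$ (the hallmark of over-sharpening), the adversarial term $\lambda\,d_{\mathcal F}$ in Equation~\ref{eq:ganpp-obj} grows at least linearly in that excess. Hence at any stationary point (Theorem~\ref{thm:ganpp-stationarity}) this growth must be offset by the fidelity and perceptual gradients, which caps the attainable over-sharpening, the ``sharpness ceiling.''

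The only delicate point, and the one I would flag explicitly, is the normalization $\|T\|\le1$ in operator norm: a genuine high-pass filter typically has $\|T\|$ of order one but not exactly bounded by one, so one should either apply the bound to $T/\|T\|$ or absorb $\|T\|$ into the constant, replacing $c_{\mathcal F}$ by $c_{\mathcal F}/\|T\|$. A secondary caveat, worth a remark rather than a fix, is that the control is only through the distributional functional $\mathbb{E}\|Tx\|_2$, so it bounds \emph{average} over-sharpening but not spatially localized artifacts that leave this mean unchanged; strengthening to a pointwise statement would require a richer critic class and is outside the scope of this proposition.
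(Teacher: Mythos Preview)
Your proposal is correct and follows essentially the same approach as the paper: verify that $f(x)=\|Tx\|_2$ is $1$-Lipschitz, apply Kantorovich--Rubinstein duality with both $f$ and $-f$ to obtain the absolute value bound on $W_1$, then chain with $d_{\mathcal F}\ge c_{\mathcal F}W_1$. Your additional remarks on integrability, the normalization $\|T\|\le 1$, and the distinction between average and localized over-sharpening are sensible elaborations beyond what the paper's terse proof records.
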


\begin{proof}
Since $f$ is 1-Lipschitz, the Kantorovich–Rubinstein dual implies $W_1(p,q_\theta)\ge \mathbb{E}_p f - \mathbb{E}_{q_\theta} f$~\cite{ambrosio2021lectures}. Applying the same argument with $-f$ establishes the reverse inequality and yields the absolute difference. Using $d_{\mathcal F}\ge c_{\mathcal F} W_1$ gives Equation~\ref{eq:ganpp-Tgap}.
\end{proof}

\subsection{Perceptual Anchoring: Deviation Bounds}
We next show that, even without explicit data-consistency, the paired pixel/perceptual terms constrain nullspace-like deviations (including directions weakly reflected by $H$) because they are measured against the \emph{ground-truth} $x$.

\begin{proposition}[Component-wise control by the pixel term]\label{prop:ganpp-pixel}
Let $P:\mathcal{X}\to\mathcal{X}$ be any nonexpansive linear projector ($\|P\|\le 1$). With the $\alpha$ used in Equation~\ref{eq:ganpp-obj},
\begin{equation}
    \mathbb{E}\,\|P(G_\theta(c,z)-x)\|_2^2 \;\le\; \frac{1}{\alpha}\,\mathcal{J}(\theta).
\end{equation}
At any minimizer $\theta^\star$, $\mathbb{E}\,\|P(G_{\theta^\star}-x)\|_2^2 \le \inf_\theta \mathcal{J}(\theta)/\alpha$.
\end{proposition}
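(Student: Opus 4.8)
The plan is to chain two elementary inequalities: a pointwise contraction coming from $\|P\|\le 1$, followed by discarding the two nonnegative terms (IPM and perceptual) from $\mathcal{J}(\theta)$.

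First I would observe that, since $P$ is linear with operator norm at most one, for every realization of $(c,z,x)$ we have $\|P(G_\theta(c,z)-x)\|_2 \le \|G_\theta(c,z)-x\|_2$, and hence $\|P(G_\theta(c,z)-x)\|_2^2 \le \|G_\theta(c,z)-x\|_2^2$. Taking expectations (finite by Assumption~\ref{ass:ganpp-regular}) preserves the inequality, giving $\mathbb{E}\,\|P(G_\theta(c,z)-x)\|_2^2 \le \mathbb{E}\,\|G_\theta(c,z)-x\|_2^2$. Note that only $\|P\|\le1$ is used, not the idempotence of the projector.

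Second, I would use that $d_{\mathcal F}(p,q_\theta)\ge 0$ (an IPM is nonnegative: under Assumption~\ref{ass:ipm} the critic class attains its supremum, and it is $\ge 0$ either because a neutral critic is available or by closure under negation), and that the perceptual term $\mathbb{E}\,\|\Phi(G_\theta(c,z))-\Phi(x)\|_2^2\ge 0$, together with $\lambda,\beta\ge 0$. Plugging these into the definition of $\mathcal{J}$ in Equation~\ref{eq:ganpp-obj} yields $\alpha\,\mathbb{E}\,\|G_\theta(c,z)-x\|_2^2 \le \mathcal{J}(\theta)$. Dividing by $\alpha>0$ and combining with the first step gives $\mathbb{E}\,\|P(G_\theta(c,z)-x)\|_2^2 \le \mathcal{J}(\theta)/\alpha$. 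Evaluating this at a minimizer $\theta^\star$, where $\mathcal{J}(\theta^\star)=\inf_\theta\mathcal{J}(\theta)$, delivers the second claim.

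I do not expect a genuine obstacle here; the argument is two lines. The only points worth stating explicitly are that $\alpha$ is taken strictly positive so the division is legitimate, that the IPM term is nonnegative under Assumption~\ref{ass:ipm}, and that all expectations are finite under Assumption~\ref{ass:ganpp-regular}. A minor remark one could add is that the same proof works verbatim if $P$ is replaced by any (not necessarily linear) $1$-Lipschitz map fixing the relevant reference point, but the linear-projector version suffices for the intended use in bounding nullspace-like components of $G_{\theta^\star}-x$.
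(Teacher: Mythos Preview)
Your proposal is correct and follows exactly the paper's argument: drop the nonnegative IPM and perceptual terms to get $\mathcal{J}(\theta)\ge \alpha\,\mathbb{E}\|G_\theta-x\|_2^2$, then use the contraction $\|P\|\le 1$ to bound this below by $\alpha\,\mathbb{E}\|P(G_\theta-x)\|_2^2$. Your additional remarks (only $\|P\|\le1$ is used, not idempotence; $\alpha>0$ is implicitly needed) are accurate and slightly sharpen the presentation, but the core chain of inequalities is identical to the paper's one-line proof.
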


\begin{proof}
From Equation~\ref{eq:ganpp-obj}, one obtains $\mathcal{J}(\theta)\ge \alpha\,\mathbb{E}\|G_\theta-x\|^2\ge \alpha\,\mathbb{E}\|P(G_\theta-x)\|^2$, where the second inequality follows from the contractive property $\|P\|\le 1$.
\end{proof}

\begin{proposition}[Perceptual Alignment Controls Feature-Space Deviations]
\label{prop:ganpp-perc}
For any $\Phi$,
\begin{equation}
\label{eq:ganpp-perc-bound}
\mathbb{E}\,\|\Phi(G_\theta)-\Phi(x)\|_2^2 \;\le\; \frac{1}{\beta}\,\mathcal{J}(\theta).
\end{equation}
Moreover, whenever $\Phi$ dominates a high-frequency seminorm $\Pi$ (i.e., $\Pi(u)\le C\,\|\Phi(u)\|_2$ for $u$ in the region of interest), we obtain
\begin{equation}
\label{eq:ganpp-hf}
\begin{aligned}
\mathbb{E}\,\Pi\!\big(G_\theta - x\big)
&\;\le\; C\Bigl(\mathbb{E}\,\|\Phi(G_\theta)-\Phi(x)\|_2^2\Bigr)^{1/2} \\
&\;\le\; \frac{C}{\sqrt{\beta}}\,\sqrt{\mathcal{J}(\theta)}.
\end{aligned}
\end{equation}
Hence feature-space alignment upper-bounds perceptual/high-frequency deviations.
\end{proposition}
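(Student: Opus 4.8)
The plan is to read both inequalities directly off the structure of the composite objective $\mathcal{J}$ in Equation~\ref{eq:ganpp-obj}, using only nonnegativity of its summands, the stated domination hypothesis, and Jensen's inequality; no machinery beyond Proposition~\ref{prop:ganpp-pixel}'s style of argument is required. I assume, as is implicit, that $\beta>0$.

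First I would establish Equation~\ref{eq:ganpp-perc-bound}. Since $\lambda,\alpha\ge 0$ and $d_{\mathcal F}(p,q_\theta)\ge 0$, $\mathbb{E}\|G_\theta(c,z)-x\|_2^2\ge 0$, each of the three terms defining $\mathcal{J}(\theta)$ is nonnegative, so $\mathcal{J}(\theta)\ge \beta\,\mathbb{E}\|\Phi(G_\theta)-\Phi(x)\|_2^2$. Dividing by $\beta$ gives the claim for arbitrary $\Phi$; at a minimizer $\theta^\star$ one may further replace $\mathcal{J}(\theta)$ by $\inf_\theta\mathcal{J}(\theta)$, exactly parallel to Proposition~\ref{prop:ganpp-pixel}.

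Next I would chain this with the domination hypothesis to reach Equation~\ref{eq:ganpp-hf}. Applying $\Pi(u)\le C\|\Phi(u)\|_2$ at the residual and taking expectations gives $\mathbb{E}\,\Pi(G_\theta-x)\le C\,\mathbb{E}\|\Phi(G_\theta)-\Phi(x)\|_2$. By Jensen's inequality (concavity of $\sqrt{\cdot}$, equivalently Cauchy--Schwarz against the constant $1$), $\mathbb{E}\|\Phi(G_\theta)-\Phi(x)\|_2\le \bigl(\mathbb{E}\|\Phi(G_\theta)-\Phi(x)\|_2^2\bigr)^{1/2}$, which is the first inequality of Equation~\ref{eq:ganpp-hf}. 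Substituting the second-moment bound from the previous paragraph yields $\mathbb{E}\,\Pi(G_\theta-x)\le \tfrac{C}{\sqrt{\beta}}\sqrt{\mathcal{J}(\theta)}$, completing the chain.

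The only point requiring care — and the closest thing to an obstacle — is the precise reading of the domination hypothesis: it is phrased for a generic $u$ ``in the region of interest,'' whereas Equation~\ref{eq:ganpp-hf} invokes it on the difference of perceptual features $\Phi(G_\theta)-\Phi(x)$ rather than on $\Phi$ evaluated at the single point $G_\theta-x$. These coincide when $\Phi$ is linear (or to first order in the residual, as is standard for LPIPS-type feature maps, so that $\Phi(G_\theta)-\Phi(x)=\Phi(G_\theta-x)$). I would therefore either fix this linearity convention explicitly at the outset or, more conservatively, take the hypothesis to mean the ``relative'' co-Lipschitz bound $\Pi(G_\theta-x)\le C\|\Phi(G_\theta)-\Phi(x)\|_2$ directly; with either convention the argument above goes through verbatim, and everything else reduces to nonnegativity and Jensen.
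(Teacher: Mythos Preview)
Your proposal is correct and follows exactly the approach the paper intends: the paper omits an explicit proof here, treating the result as immediate in the style of Proposition~\ref{prop:ganpp-pixel} (drop nonnegative terms from Equation~\ref{eq:ganpp-obj}, then chain with the domination hypothesis and Jensen). Your explicit discussion of the linearity/co-Lipschitz reading of the domination hypothesis is a genuine clarification that the paper glosses over; either of your suggested conventions resolves it, and the rest is verbatim as you describe.
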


\subsection{No Over-Generation at Optimum} \label{end_proof}
We combine IPM and pixel/perceptual anchoring to show that stationary solutions cannot produce arbitrarily exaggerated details relative to $p$ and $x$.

\begin{theorem}[Combined Control of Over-Generation]
\label{thm:ganpp-combined}
Assume $d_{\mathcal F}=W_1$ (or, more generally, there exists $c_{\mathcal F}>0$ such that $d_{\mathcal F}\ge c_{\mathcal F} W_1$). Let $T$ be linear with $\|T\|\le 1$, and let $P$ be any projector with $\|P\|\le 1$. Then, for any $\theta$,
\begin{equation}
\label{eq:ganpp-combined}
\begin{aligned}
\big|\,\mathbb{E}_{q_\theta}\|T\tilde x\|_2 - \mathbb{E}_{p}\|T x\|_2\,\big|
&\;\le\; \tfrac{1}{\lambda\,c_{\mathcal F}}\,\mathcal{J}(\theta), \\[1mm]
\mathbb{E}\,\|P(G_\theta-x)\|_2^2
&\;\le\; \tfrac{1}{\alpha}\,\mathcal{J}(\theta),
\end{aligned}
\end{equation}
and
\begin{equation}
\label{eq:ganpp-combined-phi}
\mathbb{E}\,\|\Phi(G_\theta)-\Phi(x)\|_2
\;\le\; \frac{1}{\sqrt{\beta}}\,\sqrt{\mathcal{J}(\theta)}.
\end{equation}
In particular, at any global minimizer $\theta^\star$, all three deviations are jointly minimized and cannot be simultaneously large. Thus, the adversarial training is biased toward \emph{faithful} reconstructions (natural-manifold and feature-aligned), lacking the tendency to over-generate ultra-sharp details.
\end{theorem}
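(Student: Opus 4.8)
The plan is to obtain each of the three displayed bounds by isolating the corresponding nonnegative summand of $\mathcal J(\theta)$ and then invoking the matching proposition proved earlier in this section; no new analysis is required, only an assembly. First I would record the single structural fact that drives everything: since $\lambda,\alpha,\beta\ge 0$ and each of $d_{\mathcal F}(p,q_\theta)$, $\mathbb{E}\|G_\theta(c,z)-x\|_2^2$, and $\mathbb{E}\|\Phi(G_\theta(c,z))-\Phi(x)\|_2^2$ is nonnegative, the composite objective dominates every weighted term individually, i.e. $\mathcal J(\theta)\ge\lambda\,d_{\mathcal F}(p,q_\theta)$, $\mathcal J(\theta)\ge\alpha\,\mathbb{E}\|G_\theta-x\|_2^2$, and $\mathcal J(\theta)\ge\beta\,\mathbb{E}\|\Phi(G_\theta)-\Phi(x)\|_2^2$.

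For the first line of Equation~\ref{eq:ganpp-combined}, I would chain $\mathcal J(\theta)\ge\lambda\,d_{\mathcal F}(p,q_\theta)$ with the sharpness-ceiling estimate of Proposition~\ref{prop:ganpp-sharpness}, $d_{\mathcal F}(p,q_\theta)\ge c_{\mathcal F}\,\big|\mathbb{E}_p\|Tx\|_2-\mathbb{E}_{q_\theta}\|T\tilde x\|_2\big|$ for any linear $T$ with $\|T\|\le 1$; dividing through yields $\big|\mathbb{E}_{q_\theta}\|T\tilde x\|_2-\mathbb{E}_p\|Tx\|_2\big|\le\mathcal J(\theta)/(\lambda c_{\mathcal F})$. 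The second line is immediate from Proposition~\ref{prop:ganpp-pixel} applied to the nonexpansive projector $P$, which already states $\mathbb{E}\|P(G_\theta-x)\|_2^2\le\mathcal J(\theta)/\alpha$. For Equation~\ref{eq:ganpp-combined-phi}, I would start from Proposition~\ref{prop:ganpp-perc}, $\mathbb{E}\|\Phi(G_\theta)-\Phi(x)\|_2^2\le\mathcal J(\theta)/\beta$, and pass to the first moment by Jensen's inequality (equivalently Cauchy--Schwarz in $L^2$): $\mathbb{E}\|\Phi(G_\theta)-\Phi(x)\|_2\le\big(\mathbb{E}\|\Phi(G_\theta)-\Phi(x)\|_2^2\big)^{1/2}\le\sqrt{\mathcal J(\theta)/\beta}$.

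Finally, for the ``jointly minimized'' conclusion I would note only that all three bounds are uniform in $\theta$, so at a global minimizer $\theta^\star$ one has $\mathcal J(\theta^\star)=\inf_\theta\mathcal J(\theta)$ and every right-hand side is simultaneously as small as the objective permits; a blow-up of any single deviation would force $\mathcal J(\theta^\star)$ to be large, contradicting minimality, so none can be large in isolation — and this uses $\mathcal J(\theta^\star)\le\mathcal J(\theta)$ alone, no first-order condition. I do not expect a genuine obstacle here, since the statement repackages the preceding propositions; the one point that needs care is the direction of the hypothesis $d_{\mathcal F}\ge c_{\mathcal F}W_1$ (the IPM must dominate a multiple of $W_1$, not the reverse) so that the sharpness-ceiling step controls $q_\theta$ against $p$ and not conversely, together with routine bookkeeping of the weights $\lambda,\alpha,\beta$ appearing in the denominators.
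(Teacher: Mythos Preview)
Your proposal is correct and matches the paper's intended argument: the theorem is precisely an assembly of Propositions~\ref{prop:ganpp-sharpness}, \ref{prop:ganpp-pixel}, and \ref{prop:ganpp-perc}, obtained by dropping nonnegative summands of $\mathcal J(\theta)$ and, for the perceptual bound, passing from the second moment to the first via Jensen. The paper states the theorem without writing out a separate proof, so your explicit chaining of the three propositions is exactly what is implicitly required.
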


\subsection{From AR/FM Biases to Adversarial Training Fidelity}

\begin{theorem}[Fidelity Guarantees for Low-Level Restoration]
\label{thm:comp-fidelity}
Consider \(c = Hx + \eta\) under the spectral/regularity assumptions in
Sections~\ref{sec:distortion-restore}--\ref{end_proof} of the Appendix, and let \(p(x\!\mid\!c)\) denote the ground-truth posterior.
Suppose generators are trained with: (1) conditional AR MLE (Equation~\ref{eq:ar-ml-cond});
(2) FM Fisher minimization (Equation~\ref{eq:fm-obj});
(3) a composite adversarial training objective (Equation~\ref{eq:ganpp-obj}).

\begin{enumerate}
\item \textbf{Autoregression (AR).}
Single-mode conditional factorization together with greedy or vanishing-temperature decoding enforces a deterministic resolution of posterior ambiguity along \(\ker H\): genuinely distinct posterior modes contract toward midpoints, reducing variability and introducing structured artifacts. Under teacher forcing with test-time rollout, the total-variation discrepancy \emph{can accumulate up to linear order} with sequence length in the worst case (Lemma~\ref{lem:exposure-final}), yielding distortions and over-generation \cite{bengio2015scheduled,ranzato2015sequence,ross2011reduction,lindvall2002lectures}.

\item \textbf{Flow Matching (FM).}
FM minimizes an integral of conditional Fisher divergences of Gaussian-smoothed posteriors with effective frequency weight:
\begin{equation*}
    \widetilde W_t(\omega)\propto \|\omega\|^{2} e^{-\sigma_t^{2}\|\omega\|^{2}}\,\Xi_t(\omega).
\end{equation*}
Exponential high-frequency damping and null bands where $\Xi_t(\omega)$ is tiny (including $|\widehat{H}(\omega)|=0$) make FM insensitive to null-space errors, weak on textures/edges, and amplify local drifts into macroscopic warps or hallucinations \cite{lipman2022flow,liu2022flow,hyvarinen2005estimation,song2020score}.

\item \textbf{Adversarial Training.}
The composite objective enforces: (i) distributional alignment via an IPM critic, suppressing off-manifold mass; (ii) pixel-space anchoring to the ground truth, which bounds all measurable components of \(x\); and (iii) perceptual alignment in feature space, which regulates high-frequency discrepancies. At any global minimizer these constraints hold jointly, yielding sharp, faithful reconstructions without uncontrolled over-generation \cite{arjovsky2017wasserstein,sriperumbudur2009integral,ambrosio2021lectures}.
\end{enumerate}
\end{theorem}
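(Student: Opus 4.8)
The plan is to treat Theorem~\ref{thm:comp-fidelity} as a synthesis statement: each of its three parts merely reassembles conclusions already established in Sections~\ref{sec:distortion-restore}--\ref{end_proof}, so the proof reduces to (i) verifying that the standing hypotheses here — the linear–Gaussian model \(c=Hx+\eta\) together with Assumptions~\ref{ass:spectral}--\ref{ass:ipm} — entail the hypotheses of each cited statement, and (ii) transcribing the conclusions in the claimed form. No new estimates are required; the real care is in keeping the direction of every inequality honest and in distinguishing \emph{attainable worst cases} from inevitabilities.

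For Part 1 (AR), I would first note that whenever \(H\) has nontrivial nullspace the posterior \(p(x\mid c)\) generically exhibits a symmetric \(\ker(H)\)-ambiguity in the sense of Definition~\ref{def:ambig} on a set of positive probability, since the isotropic Gaussian likelihood is compatible with a two-point posterior that is even about its midpoint along any \(u\) with \(Hu=0\). Under Assumptions~\ref{ass:ar-family}--\ref{ass:greedy}, Theorem~\ref{thm:ar-collapse} then forces the MLE center and the greedy decode to that midpoint — precisely the ``deterministic resolution along \(\ker H\)'' of the statement — while Proposition~\ref{prop:nullspace-final} supplies the Bayes-point version. For the accumulation claim I would invoke the worst-case branch of Lemma~\ref{lem:exposure-final}, \(\mathrm{TV}(p(\cdot\mid c),q_\theta(\cdot\mid c))\le 1-\prod_t(1-\bar\varepsilon_t)\le\sum_t\bar\varepsilon_t\), and read off the linear leading term from Equation~\ref{eq:tv-expansion}. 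I would be explicit that this is an \emph{upper} bound of linear order ``in the worst case,'' with no matching lower bound claimed, consistent with the paper's Remark on bounds.

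For Part 2 (FM), the standing assumptions already furnish the hypotheses of Theorems~\ref{thm:spectral-gap} and~\ref{thm:local-sandwich}: Assumption~\ref{ass:H} controls the passband decay of \(\widehat H\), Assumption~\ref{ass:lipschitz} gives locally bounded conditional densities, and Assumption~\ref{ass:small-delta} places us in the small-perturbation regime where the conditional Fisher divergence is sandwiched by the spectrally weighted \(L^2\) discrepancy with weight \(\widetilde W_t(\omega)=\|\omega\|^2 e^{-\sigma_t^2\|\omega\|^2}\Xi_t(\omega)\). Conclusion (a) — exponential high-frequency damping — is then immediate from the \(\|\omega\|^2 e^{-\sigma_t^2\|\omega\|^2}\) prefactor (Corollary~\ref{cor:rigor-bias}(1)); conclusion (b) — blindness on null bands — follows because in the linear–Gaussian case \(\Xi_t=\Xi_{\mathrm{LG}}\) (Definition~\ref{def:Xi}) vanishes exactly where \(\widehat H(\omega)=0\) and is near-zero where \(|\widehat H(\omega)|\) is small (Corollary~\ref{cor:rigor-bias}(2)). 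The ``amplify local drifts into warps/hallucinations'' clause is obtained by chaining Lemma~\ref{lem:gronwall-cond-restated}/Corollary~\ref{cor:edge-bend} (Grönwall integration of drift error into geometric displacement) with Propositions~\ref{prop:null-overgen-fm} and~\ref{prop:early} (negligible-FM-loss over-generation along \(\ker H\), and early-time conditioning leakage toward the unconditional score).

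For Part 3 (adversarial) I would set \(d_{\mathcal F}=W_1\) (or any IPM with \(d_{\mathcal F}\ge c_{\mathcal F}W_1\)) under Assumptions~\ref{ass:ganpp-regular} and~\ref{ass:ipm}, so that Theorem~\ref{thm:ganpp-combined} applies verbatim: claim (i) is Proposition~\ref{prop:ganpp-off} plus the sharpness ceiling Proposition~\ref{prop:ganpp-sharpness} (taking \(f(x)=\|Tx\|_2\) with \(T\) a norm-\(\le1\) band-/high-pass charges any over-sharpening against \(d_{\mathcal F}\le\mathcal J(\theta)/\lambda\)); claim (ii) is Proposition~\ref{prop:ganpp-pixel}, where for any nonexpansive projector \(P\) one has \(\mathbb E\|P(G_\theta-x)\|_2^2\le\mathcal J(\theta)/\alpha\), and since \(\ker(H)\) directions are still directions of \(x\) they are controlled despite being invisible to FM; claim (iii) is Proposition~\ref{prop:ganpp-perc}, using that \(\Phi\) dominates a high-pass seminorm. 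The final ``at any global minimizer these constraints hold jointly'' is just the observation that for \(\theta^\star\in\argmin_\theta\mathcal J\) every right-hand side \(\mathcal J(\theta^\star)/(\cdot)\) attains its minimal value \(\inf_\theta\mathcal J(\theta)/(\cdot)\) simultaneously. The main obstacle — and the only genuine work — is precisely this bookkeeping of directionality: I expect to have to state explicitly that the AR accumulation is only an upper bound, that FM's ``exact'' blindness is exact only in the linear–Gaussian reduction (and degrades to near-blindness under the general \(\Xi_t\)-sandwich), and that the adversarial ``no over-generation'' is a ceiling scaling with the optimization residual \(\inf_\theta\mathcal J(\theta)\) rather than a claim of exact recovery, so that the assembled statement is exactly — and not over- — supported by the cited lemmas.
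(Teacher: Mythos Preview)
Your proposal is correct and matches the paper's approach: the paper treats Theorem~\ref{thm:comp-fidelity} purely as a synthesis, offering only a short prose paragraph that recaps how the AR collapse (Theorem~\ref{thm:ar-collapse}, Lemma~\ref{lem:exposure-final}), the FM spectral sandwich and its consequences (Theorem~\ref{thm:spectral-gap}, Corollaries~\ref{cor:rigor-bias}--\ref{cor:edge-bend}, Propositions~\ref{prop:null-overgen-fm}--\ref{prop:early}), and the adversarial control bounds (Propositions~\ref{prop:ganpp-off}--\ref{prop:ganpp-perc}, Theorem~\ref{thm:ganpp-combined}) assemble into the stated trichotomy. Your write-up is in fact more careful than the paper's own summary in tracking hypothesis propagation and the upper-bound-only/linear--Gaussian caveats, but the logical route is identical.
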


The preceding sections have established complementary theoretical limitations of 
autoregressive (AR) and flow matching (FM) objectives in the low-level restoration 
setting. For AR, the conditional factorization and single-mode decoding force 
posterior ambiguities in $\ker(H)$ to collapse deterministically, while exposure bias 
causes small local mismatches to accumulate into global artifacts and over-generation.  
For FM, the Fisher-based objective is spectrally reweighted by Gaussian smoothing 
and the forward operator (through $\Xi_t$), which underweights high-frequency errors, ignores 
$\ker(H)$ components, and allows unconditional textures to leak in at early times; 
small drift errors near edges are then amplified into visible warps.  
In contrast, the adversarial training formulation explicitly combines three 
complementary forces: (i) distributional alignment with the natural-image manifold 
via an IPM critic, (ii) pixel-level fidelity to the ground truth, and (iii) 
perceptual alignment in feature space. These terms jointly prevent both collapse 
and uncontrolled hallucination, while maintaining sharpness without exceeding the 
statistics of $p$.  

Together, these complementary objectives provide a precise characterization of fidelity and stability (Theorem~\ref{thm:comp-fidelity}).

\section{Frequency-Aware Low-Level Instructions}
\label{app:freqaware_prompts}
\noindent\textbf{\textit{Universal Restoration Instruction $r$.}}
\begin{promptbox}
\prompttt
Analyze and return 1 line: Task: <task\_token>, Focus: <high | low>, Rationale: <brief reason>, Pipeline: <step1 -> step2 -> ...>.
\end{promptbox}

\noindent\textit{\textbf{Interpretation.}}
The role of $r$ is to constrain the planner’s output format and scope: it precludes free-form text, enforces a single categorical decision, ties that decision to an interpretable frequency regime, and requires both a concise rationale and a stepwise restoration plan.

\noindent\textbf{\textit{Expert Rule $P_{\text{expert}}$.}}
\begin{promptbox}
\prompttt
\textbf{You are an expert image restoration task classifier.}\\[0.25em]
\textbf{Available tasks (use EXACT lowercase tokens):}\\
\texttt{deraining | desnowing | dehazing | deblur | denoise | light\_enhancement | super\_resolution}.\\[0.35em]
\textbf{Critical distinctions:}\\
\textbf{RAIN (deraining)}\\
- Linear streaks: parallel/near-parallel elongated lines\\
- Overlay effect: streaks cross sharp edges without blurring them\\
- Directional: consistent orientation (diagonal/vertical)\\
- Can coexist with low contrast, but streaks are visible as distinct overlays\\[0.25em]
\textbf{SNOW (desnowing)}\\
- Particles: round/irregular white blobs, bokeh discs\\
- Size variation: larger near, smaller far\\
- Random distribution, NOT linear/parallel\\[0.25em]
\textbf{HAZE (dehazing)}\\
- Depth-dependent: far objects more degraded than near\\
- Milky appearance with desaturation\\
- Distance gradient clearly visible\\[0.25em]
\textbf{BLUR (deblur)}\\
- Edge smearing: boundaries themselves are widened/soft\\
- Uniform softness or motion trails\\
- Object edges lose geometric precision\\[0.25em]
\textbf{NOISE (denoise)}\\
- Grain on crisp edges: edge structure intact but covered by speckles\\
- Random texture in flat areas\\
- High-ISO appearance\\[0.25em]
\textbf{UNDEREXPOSED (light\_enhancement)}\\
- Globally dark, no depth gradient\\
- Histogram left-biased, shadows crushed\\
- Color cast possible\\[0.5em]
\textbf{LOW RESOLUTION (super\_resolution)}\\
- Insufficient spatial sampling: small native $H{\times}W$ or strong aliasing\\
- Loss of fine textures; blockiness/jagged edges when upscaled\\
- Distinct from blur: edges can appear jagged/aliased rather than uniformly smeared\\[0.5em]
\textbf{Frequency decision rules:}\\
Choose \textbf{high} when degradation is dominated by fine-scale artifacts or missing detail that lives in high spatial frequencies:\\
- deblur (recover sharp edges and textures)\\
- denoise (suppress noisy high-frequency speckles while preserving true details)\\
- deraining / desnowing (remove thin streaks/particles and recover edge micro-structure)\\[0.25em]
Choose \textbf{low} when degradation is dominated by global/slow-varying components:\\
- dehazing (restore low-frequency contrast/airlight; depth-dependent veiling)\\
- light\_enhancement (global exposure/illumination and tone mapping)\\[0.25em]
If signals suggest mixed conditions, choose the dominant component according to the most visible impairment.\\[0.5em]
\textbf{Pipeline rules:}\\
- \textbf{DO NOT} use any task names in the \texttt{Pipeline}.
\end{promptbox}

\noindent\textit{\textbf{Interpretation.}}
% $P_{\text{expert}}$ codifies the taxonomy and decision boundary between high- vs.\ low-frequency degradations, and explicitly includes super\_resolution for low-sampling/aliasing cases.
$P_{\text{expert}}$ formalizes image restoration as a frequency-aware understanding problem, aligning perceptual degradations with their dominant spectral signatures. It specifies a principled decision boundary: fine-scale losses, rain, snow, noise, blur, map to high-frequency restoration, whereas global, slowly varying degradations, haze, illumination, map to low-frequency correction. By explicitly treating super-resolution as a sampling-limited case, $P_{\text{expert}}$ extends the taxonomy beyond semantic labels toward signal-reconstruction logic. Building on this prior, the planner emits, for each instance, an auditable reasoning trace and an executable restoration plan that directly guide the downstream executor.

\noindent\textbf{\textit{Label-free Low-level Feature Pool $P_{\text{hints}}$.}}
\begin{promptbox}
\prompttt
\textbf{Use the following label-free features computed from image pixels to support your decision.} \\
% Do not describe reasoning; emit only the final one-line answer.\\[0.35em]
\textbf{Image size:} $H, W$ (pixels). Include scale cues for potential \texttt{super\_resolution}.\\[0.35em]
\textbf{(1) Rain cues} $\phi_{\text{rain}}$:\quad
\textbf{Grad.\ orientation:} grayscale $g\!\in\![0,1]$, $(g_x,g_y)$, magnitude $m=\sqrt{g_x^2+g_y^2}$;
$\theta=\mathrm{mod}(\arctan2(g_y,g_x),\pi)$; weighted histogram $h(\theta)$ with weights $m$.
\ \textbf{Scores:} \texttt{line\_score}$=\max h/\sum h$, \texttt{anisotropy}$=(\max h-\mathrm{mean}\,h)/(\mathrm{mean}\,h)$.
\ \textbf{Spectrum:} power $P$ from FFT of $g$; annuli (mid/low) by radius;
\texttt{freq\_ratio}$=\mathrm{mean}(P_{\text{mid}})/(\mathrm{mean}(P_{\text{low}})+\varepsilon)$.\\[0.25em]
\textbf{(2) Snow cues} $\phi_{\text{snow}}$:\quad
\textbf{Blobs:} $b=\mathbb{1}[g>0.78]$; connected components; \texttt{small\_blobs}$=\#\{3\!\le\!\text{area}\!\le\!200\}$.
\ \textbf{Isotropy:} reuse $h(\theta)$; lower \texttt{anisotropy} $\rightarrow$ more snow-like randomness.\\[0.25em]
\textbf{(3) Noise cues} $\phi_{\text{noise}}$:\quad
\textbf{Flat mask:} $\Omega_{\text{flat}}=\{m<Q_{0.25}(m)\}$; local mean $\bar g$ (box $3{\times}3$); residual $r=g-\bar g$.
\ \textbf{Stats:} \texttt{noise\_mad}$=\mathrm{median}(|r-\mathrm{median}(r)|)$.
\ \textbf{Chroma:} YCbCr std on $\Omega_{\text{flat}}$, \texttt{chroma\_std}$=\tfrac{1}{2}(\mathrm{std}(\mathrm{Cb})+\mathrm{std}(\mathrm{Cr}))$.
\ \textbf{Score:} \texttt{noise\_score}$=0.6\,\sigma(50(\texttt{noise\_mad}-0.0050))+0.4\,\sigma(50(\texttt{chroma\_std}-0.0095))$.\\[0.25em]
\textbf{(4) Blur cues} $\phi_{\text{blur}}$:\quad
\textbf{Laplacian var:} \texttt{lapVar}$=\mathrm{Var}(L\!*g)$.
\ \textbf{Spectrum ratio:} \texttt{hf\_energy}$=\mathrm{mean}(P_{\text{outer}})/(\mathrm{mean}(P_{\text{inner}})+\varepsilon)$.
\ \textbf{Edge strength:} \texttt{grad95}$=Q_{0.95}(\sqrt{g_{x,s}^2+g_{y,s}^2})$ with mild smoothing.\\[0.25em]
\textbf{(5) Haze cues} $\phi_{\text{haze}}$:\quad
\textbf{Dark channel:} \texttt{dark\_mean}$=\mathrm{mean}(\min(R,G,B))$;\ 
\textbf{Saturation:} \texttt{sat\_mean}$=\mathrm{mean}(S)$ in HSV.
\ \textbf{Depth proxy:} split top/bottom; \texttt{depth\_grad}$=\mathrm{mean}(Y_{\text{top}})-\mathrm{mean}(Y_{\text{bot}})$.
\ \textbf{Composite:} \texttt{haze\_score}$=0.4\,\sigma(7(\texttt{dark\_mean}-0.33))+0.3\,\sigma(7(0.30-\texttt{sat\_mean})) + 0.3\,\sigma(8(\texttt{depth\_grad}-0.03))$.\\[0.25em]
\textbf{(6) Exposure cues} $\phi_{\text{expo}}$:\quad
\textbf{Luma stats:} \texttt{meanY}$=\mathrm{mean}(Y)$,\  \texttt{p50}$=Q_{0.50}(Y)$;\  underexposed if $\texttt{meanY}<0.32$ or $\texttt{p50}<0.26$.\\[0.35em]
\textbf{Recommended thresholds (for disambiguation):}\ 
\textbf{Rain:} \texttt{line\_score}$>0.16$, \texttt{anisotropy}$>0.40$, \texttt{freq\_ratio}$>1.05$;\ 
\textbf{Snow:} \texttt{small\_blobs}$>25$, \texttt{anisotropy}$<0.42$.\
\textbf{Noise:} \texttt{noise\_score}$>0.45$;\
\textbf{Blur:} \texttt{grad95}$<0.17$, \texttt{lapVar}$<0.27$, \texttt{hf\_energy}$<0.052$.\
\textbf{Haze:} \texttt{haze\_score}$>0.50$, \texttt{depth\_grad}$>0.03$;\
\textbf{Underexp:} \texttt{meanY}$<0.32$ or \texttt{p50}$<0.26$.\\[0.35em]\
% \textbf{Emit the final line ONLY in the format:}\ \texttt{Task: <task\_token>, Focus: <high|low>}\\[0.35em]
\textbf{Example (auto-filled at runtime):}\ 
Rain: line=0.21, aniso=0.45, freq=1.08;\
Snow: blobs=31, aniso=0.38;\
Noise: mad=0.0061, chroma=0.0083, score=0.47;\
Blur: lapVar=0.256, hf=0.055, grad95=0.174;\
Haze: score=0.42, depth\_grad=0.028, dark\_mean=0.36, sat\_mean=0.29;\
Exposure: meanY=0.31, p50=0.27;\ Size: H=480, W=320.
\end{promptbox}

\noindent\textit{\textbf{Interpretation.}}
% $P_{\text{hints}}$ exposes how evidence is computed, making the planner's choice auditable and mapping cues to the frequency focus: rain/snow/noise/blur $\rightarrow$ high, haze/exposure $\rightarrow$ low, with $(H,W)$ enabling super\_resolution when sampling is the bottleneck.
$P_{\text{hints}}$ grounds each decision in measurable, pixel-level evidence, making the planner’s reasoning traceable and reproducible. Gradient orientation, spectrum energy ratios, and brightness statistics are mapped to specific degradation cues, translating physical image phenomena into quantitative signals. High-frequency degradations (rain, snow, noise, blur) and low-frequency ones (haze, exposure) are thus separated by data-driven thresholds, while image size provides an explicit trigger for super-resolution. This turns frequency focus from a heuristic label into a verifiable diagnostic built on interpretable computational cues.

\begin{table*}[t]
\centering
\tablesize
\tighttabsetup
\caption{Unified quantitative comparison across all benchmarks. Our method (\textbf{FAPE-IR}) consistently achieves state-of-the-art or comparable performance across diverse restoration benchmarks.}
\label{figs:res}
\begin{adjustbox}{max width=\textwidth}
\begin{tabular}{@{} c *{20}{c} @{}}
\toprule

% ==================== Row-block A (Rain100_L | Rain100_H | OutDoor | RainDrop) ====================
\FourDatasetHeader{OutDoor}{RainDrop}{Rain100\_L}{Rain100\_H}

%  Section: Classical all-in-one IR (orange band)
\SectionBand{BandOrange}{State-of-the-art AIO-IR methods}
PromptIR~\cite{potlapalli2306promptir} & 18.40 & 0.67 & 0.41 & 154.96 & 0.30 & 23.48 & 0.80 & 0.18 & 61.42 & 0.12 & 37.41 & 0.98 & 0.02 & 8.84  & 0.03 & 15.93 & 0.52 & 0.46 & 175.42 & 0.27 \\
FoundIR~\cite{li2024foundir}  & 17.07 & 0.65 & 0.45 & 156.17 & 0.30 & 23.52 & 0.80 & 0.21 & 63.57 & 0.14 & 30.62 & 0.93 & 0.11 & 34.19 & 0.08 & 13.77 & 0.43 & 0.55 & 219.48 & 0.33 \\
DFPIR~\cite{tian2025degradation}  & 14.06 & 0.60 & 0.50 & 176.41 & 0.38 & 22.52 & 0.79 & 0.20 & 64.91 & 0.14 & 37.99 & 0.98 & 0.02 & 8.25  & 0.03 & 16.07 & 0.54 & 0.44 & 166.89 & 0.26 \\
MoCE-IR~\cite{zamfir2025complexity}  & 17.99 & 0.67 & 0.41 & 154.87 & 0.30 & 23.30 & 0.80 & 0.18 & 61.89 & 0.12 & 38.05 & 0.98 & 0.02 & 7.90  & 0.02 & 15.33 & 0.49 & 0.48 & 180.16 & 0.28 \\
AdaIR~\cite{cui2024adair}  & 18.23 & 0.67 & 0.41 & 155.67 & 0.30 & 23.37 & 0.80 & 0.18 & 61.89 & 0.12 & 38.00 & 0.98 & 0.02 & 7.58  & 0.02 & 15.93 & 0.52 & 0.46 & 175.01 & 0.27 \\
%  Section: Ours (purple band)
% \SectionBand{BandPurple}{Plan then Restore (Ours)}
\rowcolor{blue!8}
Ours     & 28.16 & 0.83 & 0.09 & 25.16 & 0.07 & 25.83 & 0.80 & 0.11 & 20.86 & 0.11 & 33.18 & 0.93 & 0.04 & 10.21 & 0.03 & 27.01 & 0.82 & 0.13 & 33.58 & 0.09 \\
\midrule

% ==================== Row-block B (BSD68-15 | BSD68-25 | BSD68-50 | Urban100-15) ====================
\FourDatasetHeader{BSD68-15}{BSD68-25}{BSD68-50}{Urban100-15}
\SectionBand{BandOrange}{State-of-the-art AIO-IR methods}
PromptIR~\cite{potlapalli2306promptir} & 28.60 & 0.85 & 0.20 & 49.74 & 0.14 & 27.15 & 0.72 & 0.32 & 86.65 & 0.21 & 23.52 & 0.48 & 0.52 & 164.09 & 0.33 & 38.98 & 0.98 & 0.01 & 5.28 & 0.05 \\
FoundIR~\cite{li2024foundir}  & 33.87 & 0.88 & 0.17 & 49.61 & 0.12 & 29.81 & 0.75 & 0.29 & 85.98 & 0.19 & 23.99 & 0.50 & 0.50 & 160.78 & 0.31 & 36.90 & 0.97 & 0.03 & 10.50 & 0.07 \\
DFPIR~\cite{tian2025degradation}    & 30.78 & 0.88 & 0.15 & 45.11 & 0.11 & 28.41 & 0.75 & 0.28 & 81.08 & 0.19 & 23.42 & 0.49 & 0.50 & 161.16 & 0.31 & 37.80 & 0.97 & 0.03 & 11.79 & 0.08 \\
MoCE-IR~\cite{zamfir2025complexity}  & 29.78 & 0.84 & 0.20 & 49.93 & 0.14 & 27.11 & 0.71 & 0.32 & 85.49 & 0.21 & 23.32 & 0.48 & 0.52 & 163.59 & 0.32 & 39.19 & 0.98 & 0.01 & 6.16 & 0.05 \\
AdaIR~\cite{cui2024adair}    & 29.58 & 0.84 & 0.20 & 49.85 & 0.13 & 27.51 & 0.71 & 0.32 & 85.31 & 0.21 & 23.34 & 0.48 & 0.52 & 164.68 & 0.33 & 38.94 & 0.98 & 0.02 & 6.11 & 0.05 \\
% \SectionBand{BandPurple}{Plan then Restore (Ours)}
\rowcolor{blue!8}
Ours     & 34.21 & 0.91 & 0.09 & 26.94 & 0.07 & 31.79 & 0.85 & 0.15 & 44.19 & 0.11 & 27.87 & 0.72 & 0.27 & 76.01 & 0.17 & 31.09 & 0.93 & 0.02 & 7.45 & 0.05 \\
\midrule

% ==================== Row-block C (Urban100-25 | Urban100-50 | ITS-val | URHI) ====================
\FourDatasetHeader{Urban100-25}{Urban100-50}{ITS-val}{URHI}
\SectionBand{BandOrange}{State-of-the-art AIO-IR methods}
PromptIR~\cite{potlapalli2306promptir} & 35.36 & 0.96 & 0.02 & 9.37 & 0.06 & 29.40 & 0.89 & 0.08 & 31.77 & 0.12 & 21.51 & 0.89 & 0.15 & 37.38 & 0.11 & 26.18 & 0.95 & 0.05 & 11.30 & 0.05 \\
FoundIR~\cite{li2024foundir}  & 32.72 & 0.90 & 0.09 & 27.28 & 0.12 & 26.29 & 0.72 & 0.30 & 68.30 & 0.22 & 13.32 & 0.74 & 0.33 & 49.35 & 0.22 & 16.99 & 0.84 & 0.17 & 19.74 & 0.14 \\
DFPIR~\cite{tian2025degradation}    & 34.35 & 0.94 & 0.05 & 21.67 & 0.10 & 29.52 & 0.84 & 0.13 & 49.97 & 0.17 & 12.42 & 0.71 & 0.32 & 47.00 & 0.23 & 24.64 & 0.94 & 0.06 & 11.97 & 0.05 \\
MoCE-IR~\cite{zamfir2025complexity}  & 35.78 & 0.97 & 0.02 & 10.39 & 0.07 & 29.94 & 0.92 & 0.06 & 24.88 & 0.11 & 14.71 & 0.79 & 0.24 & 43.64 & 0.16 & 23.76 & 0.93 & 0.07 & 12.32 & 0.06 \\
AdaIR~\cite{cui2024adair}    & 35.53 & 0.96 & 0.02 & 9.14 & 0.06 & 29.36 & 0.89 & 0.09 & 33.24 & 0.13 & 19.44 & 0.87 & 0.17 & 39.66 & 0.13 & 24.46 & 0.94 & 0.06 & 11.83 & 0.05 \\
% \SectionBand{BandPurple}{Plan then Restore (Ours)}
\rowcolor{blue!8}
Ours     & 30.17 & 0.91 & 0.03 & 9.93 & 0.06 & 27.82 & 0.86 & 0.06 & 17.44 & 0.09 & 34.07 & 0.97 & 0.04 & 6.11 & 0.04 & 31.62 & 0.96 & 0.04 & 8.31 & 0.04 \\
\midrule

% ==================== Row-block D (Snow100K-L | Snow100K-S | GoPro | GoPro-gamma) ====================
\FourDatasetHeader{Snow100K-L}{Snow100K-S}{GoPro}{GoPro-gamma}
\SectionBand{BandGreen}{State-of-the-art AIO-IR methods}
AdaIR~\cite{cui2024adair}   & 21.03 & 0.70 & 0.32 & 38.69 & 0.19 & 27.40 & 0.84 & 0.21 & 14.87 & 0.12 & 29.03 & 0.87 & 0.18 & 24.94 & 0.12 & 27.96 & 0.86 & 0.19 & 28.49 & 0.12 \\
FoundIR~\cite{li2024foundir} & 20.89 & 0.73 & 0.33 & 39.96 & 0.20 & 26.72 & 0.84 & 0.23 & 18.99 & 0.15 & 27.51 & 0.81 & 0.24 & 39.35 & 0.16 & 27.21 & 0.81 & 0.24 & 41.79 & 0.16 \\
DFPIR~\cite{tian2025degradation}   & 18.41 & 0.67 & 0.35 & 48.50 & 0.21 & 23.18 & 0.81 & 0.25 & 21.14 & 0.15 & 30.09 & 0.89 & 0.16 & 21.59 & 0.10 & 28.84 & 0.88 & 0.17 & 25.59 & 0.11 \\
MoCE-IR~\cite{zamfir2025complexity} & 21.00 & 0.69 & 0.33 & 43.21 & 0.20 & 26.67 & 0.83 & 0.23 & 17.74 & 0.13 & 29.56 & 0.90 & 0.16 & 20.10 & 0.10 & 27.93 & 0.87 & 0.18 & 25.85 & 0.11 \\
% \SectionBand{BandPurple}{Plan then Restore (Ours)}
\rowcolor{blue!8}
Ours    & 29.07 & 0.85 & 0.10 & 1.88 & 0.07 & 31.52 & 0.90 & 0.06 & 1.10 & 0.05 & 28.13 & 0.84 & 0.13 & 16.04 & 0.08 & 28.00 & 0.84 & 0.13 & 17.13 & 0.08 \\
\midrule

% ==================== Row-block E (RealBlur-J | RealBlur-R | LOL-v2 | LOL-v1) ====================
\FourDatasetHeader{RealBlur-J}{RealBlur-R}{LOL-v2}{LOL-v1}
\SectionBand{BandGreen}{State-of-the-art AIO-IR methods}
AdaIR~\cite{cui2024adair}   & 17.74 & 0.71 & 0.28 & 51.66 & 0.18 & 12.54 & 0.51 & 0.51 & 93.48 & 0.36 & 24.48 & 0.86 & 0.18 & 48.78 & 0.13 & 24.93 & 0.92 & 0.12 & 52.78 & 0.11 \\
FoundIR~\cite{li2024foundir} & 28.27 & 0.85 & 0.19 & 40.88 & 0.15 & 30.66 & 0.93 & 0.15 & 46.12 & 0.18 & 14.44 & 0.66 & 0.33 & 77.47 & 0.23 & 14.54 & 0.75 & 0.28 & 95.49 & 0.21 \\
DFPIR~\cite{tian2025degradation}   & 28.75 & 0.87 & 0.16 & 29.58 & 0.12 & 36.00 & 0.96 & 0.10 & 36.17 & 0.14 & 25.92 & 0.90 & 0.16 & 50.62 & 0.13 & 25.95 & 0.92 & 0.12 & 53.66 & 0.11 \\
MoCE-IR~\cite{zamfir2025complexity} & 15.76 & 0.68 & 0.30 & 52.80 & 0.19 & 11.55 & 0.48 & 0.53 & 96.66 & 0.37 & 23.25 & 0.89 & 0.16 & 44.31 & 0.12 & 24.93 & 0.92 & 0.11 & 44.90 & 0.09 \\
% \SectionBand{BandPurple}{Plan then Restore (Ours)}
\rowcolor{blue!8}
Ours    & 30.56 & 0.87 & 0.10 & 18.21 & 0.07 & 37.77 & 0.97 & 0.05 & 14.74 & 0.07 & 25.07 & 0.90 & 0.11 & 32.94 & 0.09 & 28.95 & 0.92 & 0.11 & 47.67 & 0.09 \\

% ==================== Row-block F (RealSR-2x | DrealSR-2x | RealSR-4x | DrealSR-4x) ====================
\FourDatasetHeader{RealSR 2$\times$}{DrealSR 2$\times$}{RealSR 4$\times$}{DrealSR 4$\times$}

%  Section: SR SOTA (orange band)
\SectionBand{BandPink}{State-of-the-art SR methods}
StableSR~\cite{wang2024exploiting} & 24.22 & 0.75 & 0.23 & 81.72  & 0.19 & 25.71 & 0.75 & 0.26 & 91.11  & 0.18 & 23.11 & 0.68 & 0.30 & 127.06 & 0.22 & 27.63 & 0.73 & 0.35 & 140.86 & 0.23 \\
DiffBIR~\cite{lin2024diffbir}  & 26.31 & 0.71 & 0.30 & 80.60  & 0.21 & 27.31 & 0.70 & 0.37 & 103.44 & 0.24 & 23.75 & 0.62 & 0.37 & 131.16 & 0.24 & 25.49 & 0.57 & 0.52 & 182.95 & 0.31 \\
SeeSR~\cite{wu2024seesr}    & 25.90 & 0.75 & 0.28 & 100.75 & 0.22 & 28.05 & 0.77 & 0.30 & 107.08 & 0.23 & 24.05 & 0.69 & 0.32 & 128.56 & 0.24 & 28.78 & 0.77 & 0.34 & 152.01 & 0.25 \\
PASD~\cite{yang2024pixel}     & 26.19 & 0.76 & 0.24 & 92.31  & 0.19 & 27.96 & 0.77 & 0.27 & 107.73 & 0.20 & 24.69 & 0.71 & 0.31 & 131.09 & 0.21 & 28.97 & 0.78 & 0.33 & 150.68 & 0.22 \\
OSEDiff~\cite{wu2024one}  & 25.56 & 0.76 & 0.25 & 100.26 & 0.20 & 27.37 & 0.79 & 0.27 & 114.31 & 0.20 & 24.51 & 0.72 & 0.30 & 124.37 & 0.21 & 28.84 & 0.79 & 0.31 & 131.27 & 0.22 \\
PURE~\cite{wei2025perceive}     & 23.59 & 0.65 & 0.32 & 83.81  & 0.22 & 25.84 & 0.68 & 0.37 & 108.30 & 0.23 & 22.20 & 0.59 & 0.39 & 127.72 & 0.25 & 26.53 & 0.64 & 0.44 & 158.88 & 0.26 \\

%  Section: Ours (purple band)
% \SectionBand{BandPurple}{Plan then Restore (Ours)}
\rowcolor{blue!8}
Ours     & 29.99 & 0.88 & 0.13 & 51.05  & 0.12 & 30.52 & 0.89 & 0.14 & 60.30  & 0.12 & 25.55 & 0.78 & 0.24 & 105.75 & 0.19 & 28.42 & 0.84 & 0.25 & 126.45 & 0.19 \\

\bottomrule

\end{tabular}
\end{adjustbox}
\vspace{-2ex}
\end{table*}

\begin{table*}[t]
\centering
\tablesize
\tighttabsetup
\caption{Unified quantitative comparison across all benchmarks. Our method (\textbf{FAPE-IR}) consistently achieves state-of-the-art or comparable performance across diverse restoration benchmarks.}
\label{figs:res2}
\begin{adjustbox}{max width=\textwidth}
\begin{tabular}{@{} c *{20}{c} @{}}
\toprule

% ==================== Row-block A (Rain100_L | Rain100_H | OutDoor | RainDrop) ====================
\FourDatasetHeaderIQA{OutDoor}{RainDrop}{Rain100\_L}{Rain100\_H}

% —— Section: Classical all-in-one IR (orange band)
\SectionBand{BandOrange}{State-of-the-art AIO-IR methods}
PromptIR~\cite{potlapalli2306promptir} & 4.59 & 69.40 & 0.68 & 0.69 & 0.50 & 6.68 & 61.27 & 0.58 & 0.45 & 0.39 & 7.18 & 59.42 & 0.58 & 0.31 & 0.46 & 3.16 & 65.43 & 0.66 & 0.46 & 0.51 \\
FoundIR~\cite{li2024foundir} & 4.76 & 68.41 & 0.67 & 0.63 & 0.48 & 8.48 & 60.45 & 0.57 & 0.38 & 0.41 & 6.32 & 61.27 & 0.57 & 0.32 & 0.40 & 3.65 & 67.02 & 0.66 & 0.42 & 0.50 \\
DFPIR~\cite{tian2025degradation} & 4.60 & 69.64 & 0.68 & 0.70 & 0.50 & 6.38 & 61.81 & 0.59 & 0.46 & 0.40 & 6.87 & 57.64 & 0.52 & 0.28 & 0.40 & 3.17 & 64.49 & 0.65 & 0.42 & 0.50 \\
MoCE-IR~\cite{zamfir2025complexity}  & 4.59 & 69.74 & 0.68 & 0.70 & 0.50 & 7.20 & 62.47 & 0.59 & 0.46 & 0.41 & 7.11 & 58.90 & 0.58 & 0.29 & 0.47 & 3.21 & 65.56 & 0.66 & 0.45 & 0.52 \\
AdaIR~\cite{cui2024adair} & 4.60 & 69.67 & 0.68 & 0.70 & 0.50 & 6.89 & 61.96 & 0.59 & 0.45 & 0.40 & 7.19 & 58.95 & 0.58 & 0.29 & 0.48 & 3.15 & 65.35 & 0.66 & 0.49 & 0.51 \\
% \SectionBand{BandPurple}{Plan then Restore (Ours)}
\rowcolor{blue!8}
Ours & 5.04 & 69.34 & 0.68 & 0.67 & 0.48 & 5.04 & 66.57 & 0.64 & 0.61 & 0.45 & 3.42 & 68.26 & 0.68 & 0.39 & 0.58 & 2.99 & 69.12 & 0.70 & 0.46 & 0.55 \\
\midrule

% ==================== Row-block B (BSD68-15 | BSD68-25 | BSD68-50 | Urban100-15) ====================
\FourDatasetHeaderIQA{BSD68-15}{BSD68-25}{BSD68-50}{Urban100-15}
\SectionBand{BandOrange}{State-of-the-art AIO-IR methods}
PromptIR~\cite{potlapalli2306promptir} & 5.25 & 45.68 & 0.57 & 0.55 & 0.33 & 5.71 & 39.01 & 0.54 & 0.47 & 0.30 & 5.33 & 72.86 & 0.52 & 0.38 & 0.28 & 5.33 & 72.86 & 0.75 & 0.62 & 0.73 \\
FoundIR~\cite{li2024foundir}  & 5.34 & 46.89 & 0.57 & 0.58 & 0.34 & 5.80 & 40.08 & 0.55 & 0.50 & 0.31 & 7.32 & 37.11 & 0.52 & 0.43 & 0.30 & 4.25 & 71.22 & 0.72 & 0.61 & 0.67 \\
DFPIR~\cite{tian2025degradation}    & 5.27 & 48.63 & 0.57 & 0.60 & 0.35 & 5.66 & 42.49 & 0.55 & 0.54 & 0.33 & 7.05 & 39.04 & 0.52 & 0.45 & 0.32 & 5.57 & 73.52 & 0.76 & 0.67 & 0.73 \\
MoCE-IR~\cite{zamfir2025complexity}  & 5.38 & 46.03 & 0.57 & 0.55 & 0.33 & 5.80 & 39.04 & 0.55 & 0.47 & 0.30 & 7.12 & 35.44 & 0.52 & 0.39 & 0.28 & 5.46 & 73.08 & 0.75 & 0.63 & 0.74 \\
AdaIR~\cite{cui2024adair}    & 5.30 & 45.57 & 0.57 & 0.56 & 0.33 & 5.76 & 38.84 & 0.55 & 0.46 & 0.30 & 7.11 & 35.16 & 0.52 & 0.39 & 0.28 & 5.20 & 72.78 & 0.75 & 0.63 & 0.73 \\
% \SectionBand{BandPurple}{Plan then Restore (Ours)}
\rowcolor{blue!8}
Ours     & 5.13 & 51.60 & 0.58 & 0.64 & 0.36 & 4.87 & 49.24 & 0.56 & 0.50 & 0.34 & 4.64 & 44.90 & 0.74 & 0.64 & 0.70 & 4.90 & 72.17 & 0.74 & 0.64 & 0.70 \\
\midrule

% ==================== Row-block C (Urban100-25 | Urban100-50 | ITS-val | URHI) ====================
\FourDatasetHeaderIQA{Urban100-25}{Urban100-50}{ITS-val}{URHI}
\SectionBand{BandOrange}{State-of-the-art AIO-IR methods}
PromptIR~\cite{potlapalli2306promptir} & 4.79 & 72.39 & 0.74 & 0.61 & 0.72 & 4.11 & 70.23 & 0.68 & 0.56 & 0.68 & 4.53 & 51.32 & 0.50 & 0.10 & 0.35 & 4.21 & 54.28 & 0.64 & 0.27 & 0.35 \\
FoundIR~\cite{li2024foundir}  & 3.86 & 66.26 & 0.69 & 0.58 & 0.61 & 4.34 & 55.79 & 0.63 & 0.54 & 0.50 & 5.65 & 49.82 & 0.50 & 0.14 & 0.36 & 5.16 & 54.44 & 0.63 & 0.30 & 0.35 \\
DFPIR~\cite{tian2025degradation}    & 5.86 & 73.15 & 0.76 & 0.64 & 0.69 & 7.13 & 69.97 & 0.71 & 0.58 & 0.56 & 5.10 & 46.81 & 0.48 & 0.11 & 0.34 & 4.16 & 54.11 & 0.63 & 0.24 & 0.35 \\
MoCE-IR~\cite{zamfir2025complexity}  & 5.54 & 73.36 & 0.75 & 0.60 & 0.73 & 4.80 & 72.46 & 0.73 & 0.56 & 0.69 & 5.01 & 49.54 & 0.49 & 0.11 & 0.36 & 4.24 & 54.34 & 0.63 & 0.27 & 0.36 \\
AdaIR~\cite{cui2024adair}    & 5.01 & 72.75 & 0.75 & 0.62 & 0.72 & 4.13 & 69.45 & 0.68 & 0.58 & 0.67 & 4.65 & 50.44 & 0.49 & 0.10 & 0.34 & 4.22 & 54.20 & 0.63 & 0.28 & 0.35 \\
% \SectionBand{BandPurple}{Plan then Restore (Ours)}
\rowcolor{blue!8}
Ours     & 4.92 & 71.90 & 0.73 & 0.62 & 0.69 & 4.96 & 70.73 & 0.70 & 0.57 & 0.66 & 5.66 & 47.88 & 0.55 & 0.13 & 0.30 & 4.47 & 53.76 & 0.64 & 0.23 & 0.35 \\
\midrule

% ==================== Row-block D (Snow100K-L | Snow100K-S | GoPro | GoPro-gamma) ====================
\FourDatasetHeaderIQA{Snow100K-L}{Snow100K-S}{GoPro}{GoPro-gamma}
\SectionBand{BandGreen}{State-of-the-art AIO-IR methods}
AdaIR~\cite{cui2024adair}   & 3.61 & 55.64 & 0.63 & 0.42 & 0.42 & 3.56 & 60.12 & 0.66 & 0.47 & 0.46 & 4.86 & 46.63 & 0.57 & 0.22 & 0.31 & 4.90 & 45.67 & 0.56 & 0.21 & 0.30 \\
FoundIR~\cite{li2024foundir} & 4.53 & 57.99 & 0.64 & 0.39 & 0.42 & 4.56 & 62.57 & 0.67 & 0.43 & 0.46 & 5.24 & 40.86 & 0.49 & 0.22 & 0.26 & 5.25 & 41.06 & 0.50 & 0.22 & 0.26 \\
DFPIR~\cite{tian2025degradation}   & 4.09 & 54.86 & 0.62 & 0.39 & 0.41 & 3.99 & 59.47 & 0.65 & 0.43 & 0.44 & 4.73 & 50.62 & 0.60 & 0.23 & 0.34 & 4.79 & 49.54 & 0.59 & 0.22 & 0.33 \\
MoCE-IR~\cite{zamfir2025complexity} & 3.85 & 55.91 & 0.63 & 0.38 & 0.42 & 3.76 & 60.54 & 0.66 & 0.43 & 0.46 & 4.61 & 51.14 & 0.60 & 0.25 & 0.35 & 4.70 & 48.42 & 0.58 & 0.23 & 0.33 \\
% \SectionBand{BandPurple}{Plan then Restore (Ours)}
\rowcolor{blue!8}
Ours    & 3.66 & 62.51 & 0.66 & 0.41 & 0.45 & 3.55 & 63.57 & 0.67 & 0.43 & 0.47 & 4.33 & 53.83 & 0.61 & 0.24 & 0.39 & 4.32 & 53.91 & 0.60 & 0.24 & 0.40 \\
\midrule

% ==================== Row-block E (RealBlur-J | RealBlur-R | LOL-v2 | LOL-v1) ====================
\FourDatasetHeaderIQA{RealBlur-J}{RealBlur-R}{LOL-v2}{LOL-v1}
\SectionBand{BandGreen}{State-of-the-art AIO-IR methods}
AdaIR~\cite{cui2024adair}   & 5.17 & 42.85 & 0.53 & 0.22 & 0.29 & 5.72 & 41.24 & 0.50 & 0.20 & 0.27 & 4.27 & 63.12 & 0.64 & 0.41 & 0.51 & 4.35 & 70.06 & 0.63 & 0.38 & 0.58 \\
FoundIR~\cite{li2024foundir} & 5.56 & 41.41 & 0.53 & 0.21 & 0.28 & 8.28 & 29.52 & 0.50 & 0.24 & 0.22 & 5.15 & 56.67 & 0.65 & 0.44 & 0.44 & 5.72 & 65.41 & 0.66 & 0.38 & 0.51 \\
DFPIR~\cite{tian2025degradation}   & 5.45 & 48.50 & 0.58 & 0.24 & 0.33 & 8.33 & 28.57 & 0.52 & 0.19 & 0.22 & 4.33 & 64.22 & 0.64 & 0.39 & 0.52 & 4.55 & 69.37 & 0.62 & 0.36 & 0.57 \\
MoCE-IR~\cite{zamfir2025complexity} & 5.23 & 44.07 & 0.53 & 0.23 & 0.30 & 5.99 & 43.54 & 0.51 & 0.23 & 0.29 & 4.27 & 64.95 & 0.65 & 0.43 & 0.52 & 4.47 & 71.48 & 0.64 & 0.42 & 0.60 \\
% \SectionBand{BandPurple}{Plan then Restore (Ours)}
\rowcolor{blue!8}
Ours    & 4.92 & 52.27 & 0.61 & 0.24 & 0.39 & 6.71 & 32.52 & 0.57 & 0.22 & 0.26 & 4.74 & 65.31 & 0.66 & 0.40 & 0.50 & 4.92 & 69.84 & 0.64 & 0.39 & 0.59 \\
\midrule

% ==================== Row-block F (RealSR-2x | DrealSR-2x | RealSR-4x | DrealSR-4x) ====================
\FourDatasetHeaderIQA{RealSR 2$\times$}{DrealSR 2$\times$}{RealSR 4$\times$}{DrealSR 4$\times$}
\SectionBand{BandPink}{State-of-the-art SR methods}
StableSR~\cite{wang2024exploiting} & 6.62 & 63.20 & 0.63 & 0.63 & 0.51 & 6.58 & 60.31 & 0.60 & 0.63 & 0.51 & 5.86 & 58.56 & 0.57 & 0.58 & 0.47 & 6.91 & 51.74 & 0.52 & 0.58 & 0.45 \\
DiffBIR~\cite{lin2024diffbir}  & 5.97 & 69.44 & 0.66 & 0.70 & 0.68 & 5.81 & 66.79 & 0.63 & 0.70 & 0.67 & 5.60 & 69.55 & 0.65 & 0.71 & 0.68 & 7.00 & 65.88 & 0.61 & 0.71 & 0.66 \\
SeeSR~\cite{wu2024seesr}    & 5.71 & 71.46 & 0.67 & 0.72 & 0.72 & 6.05 & 67.93 & 0.65 & 0.70 & 0.70 & 5.47 & 70.43 & 0.65 & 0.70 & 0.71 & 6.36 & 64.96 & 0.60 & 0.69 & 0.67 \\
PASD~\cite{yang2024pixel}     & 5.12 & 67.67 & 0.63 & 0.62 & 0.61 & 5.66 & 65.53 & 0.61 & 0.65 & 0.63 & 5.21 & 64.63 & 0.58 & 0.58 & 0.58 & 6.94 & 57.85 & 0.52 & 0.57 & 0.55 \\
OSEDiff~\cite{wu2024one}  & 5.81 & 70.55 & 0.66 & 0.69 & 0.64 & 6.02 & 67.61 & 0.63 & 0.69 & 0.63 & 5.74 & 69.14 & 0.63 & 0.67 & 0.63 & 6.78 & 62.68 & 0.58 & 0.69 & 0.59 \\
PURE~\cite{wei2025perceive}     & 5.34 & 69.47 & 0.66 & 0.72 & 0.65 & 6.18 & 65.95 & 0.62 & 0.70 & 0.63 & 5.77 & 66.84 & 0.62 & 0.69 & 0.61 & 7.11 & 60.14 & 0.57 & 0.67 & 0.58 \\
% \SectionBand{BandPurple}{Plan then Restore (Ours)}
\rowcolor{blue!8}
Ours     & 7.05 & 51.94 & 0.53 & 0.35 & 0.37 & 7.67 & 47.74 & 0.50 & 0.34 & 0.37 & 7.63 & 52.65 & 0.48 & 0.39 & 0.40 & 8.79 & 46.53 & 0.44 & 0.45 & 0.41 \\
\bottomrule
\end{tabular}
\end{adjustbox}
\vspace{-2ex}
\end{table*}

\section{Detailed Metrics}
To complement the main quantitative comparison in Table~\ref{tab:IR_series} and Table~\ref{tab:SR_series}, we provide full per-benchmark results in Table~\ref{figs:res} and Table~\ref{figs:res2} of the appendix. These tables report all five distortion–perception metrics (PSNR, SSIM, LPIPS, FID, DISTS) for each individual benchmark across seven restoration tasks (deraining, desnowing, dehazing, deblurring, denoising, low-light enhancement, and super-resolution), together with a unified comparison against recent AIO-IR methods~\cite{potlapalli2306promptir,li2024foundir,tian2025degradation,zamfir2025complexity,cui2024adair} and SR baselines~\cite{wang2024exploiting,lin2024diffbir,wu2024seesr,yang2024pixel,wu2024one,wei2025perceive}. Overall, FAPE-IR attains state-of-the-art or comparable performance on the majority of benchmarks, especially on weather-related degradations and challenging real-world benchmarks, where it substantially improves both PSNR/SSIM and perceptual metrics.

In the high-frequency–dominant restoration regimes (e.g., OutDoor, RainDrop, Snow100K-L/S, Rain100\_H, GoPro-gamma, RealBlur-J/R, Urban100-15/25/50), our method consistently attains markedly lower LPIPS and DISTS together with competitive or clearly higher PSNR/FID than prior AIO-IR approaches, indicating that the frequency-aware planner and band-specialized LoRA-MoE executor effectively suppress artifacts while preserving fine structures. In contrast, for low-frequency degradations such as URHI, ITS-val, and LOL-v1/v2, FAPE-IR achieves strong gains in FID and DISTS while maintaining high SSIM, reflecting better global contrast and color consistency under severe haze and illumination changes. A remaining challenge lies in Rain100\_L and the BSD68-15/25/50 denoising benchmarks, where FAPE-IR is less competitive in PSNR. We attribute this mainly to a mismatch between our training distribution and these relatively simple degradations (light rain and synthetic Gaussian noise), together with the limited amount of pure denoising data: the planner tends to favor stronger high-frequency suppression, which can lead to slight over-smoothing and thus lower PSNR on texture-rich scenes, while still preserving favorable perceptual metrics.

Table~\ref{figs:res2} further reports no-reference image quality metrics on the same set of benchmarks. Across most restoration benchmarks (excluding SR), FAPE-IR improves or matches the best scores on at least three of the five IQA indicators, confirming that our adversarial training and frequency regularization translate into reconstructions that are also preferred by hand-crafted NR-IQA measures. For real-world super-resolution, however, FAPE-IR exhibits a different pattern: while Table~\ref{figs:res} shows clear advantages in full-reference distortion and perceptual metrics (PSNR/SSIM, LPIPS, FID, DISTS), the no-reference scores in Table~\ref{figs:res2} are sometimes worse than those of competing SR methods. We speculate that this discrepancy stems from the SR ground-truth images themselves, whose statistics yield relatively poor scores under standard NR-IQA metrics. By recovering textures and statistics closer to these ground truths, FAPE-IR improves full-reference and learned perceptual metrics but can be penalized by hand-crafted no-reference indicators. Overall, the tables demonstrate that FAPE-IR maintains robust distortion–perception trade-offs across diverse degradations and benchmarks, while also highlighting the importance of higher-quality SR training data and more reliable NR-IQA models for future work, especially on challenging benchmarks such as Urban100 and SR benchmarks.

% \section{Other Visualization Results}
\section{Other Visualizations and FM Analysis}
\label{sec:more_vis}

In this section, we provide additional qualitative results that are not included in the main paper due to space limitations. These visualizations cover a broader range of degradations and benchmarks, further illustrating the effectiveness and generalization ability of our FAPE-IR. For clarity, we group the results into two parts: (1) supplementary qualitative comparisons, and (2) visualizations derived from our flow-matching training process.

\begin{figure*}[t]
\centering
\includegraphics[width=\linewidth]{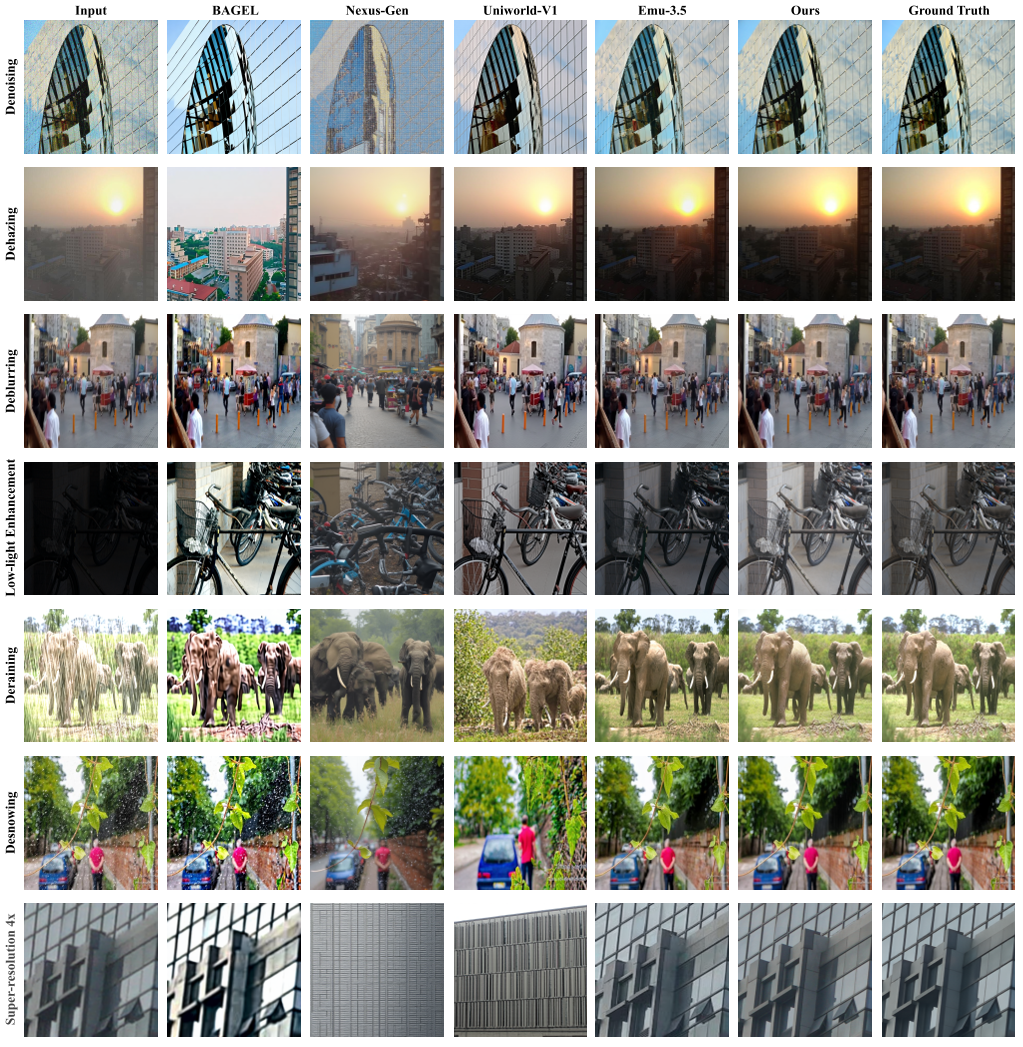}
\caption{Comparison among unified models, including BAGEL~\cite{deng2025emerging}, Nexus-Gen~\cite{zhang2504nexus}, Uniworld-V1~\cite{lin2025uniworld}, and Emu3.5~\cite{cui2025emu3}.}
\label{fig:qual_appendix_um1}
\end{figure*}

\begin{figure*}[t]
\centering
\includegraphics[width=\linewidth]{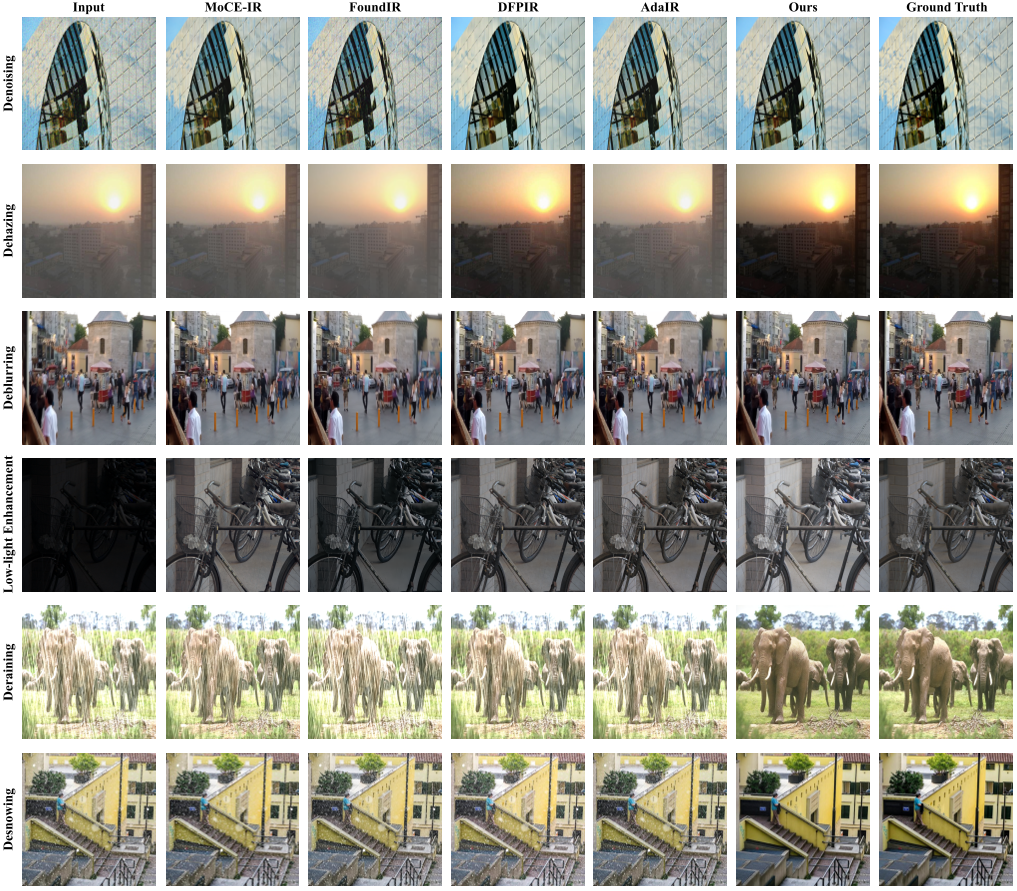}
\caption{Qualitative comparison of restoration results produced by FAPE-IR and state-of-the-art AIO-IR models.}
\label{fig:qual_appendix_um2}
\end{figure*}

\subsection{More Qualitative Comparisons}
Figures~\ref{fig:qual_appendix_um1}–\ref{fig:qual_appendix_um2} present additional qualitative comparisons on deraining, desnowing, deblurring, dehazing, denoising, low-light enhancement, and real-world super-resolution. Figure~\ref{fig:qual_appendix_um1} compares FAPE-IR with recent unified models, while Figure~\ref{fig:qual_appendix_um2} focuses on strong AIO-IR baselines. Across all tasks, FAPE-IR introduces fewer artifacts, better respects the input content, and more faithfully preserves high-frequency details than both categories of baselines.

\begin{itemize}
\item \textbf{Rainy scenes (deraining):} Unified models often hallucinate textures or alter scene layouts, and AIO-IR methods tend to leave residual streaks or over-smooth details. In contrast, FAPE-IR suppresses rain streaks more thoroughly while retaining local texture contrast, consistent with the improvements in LPIPS and DISTS.
\item \textbf{Snow scenes (desnowing):} Competing methods either fail to fully remove veiling snow or over-smooth the background. FAPE-IR removes both small particles and large translucent flakes while maintaining the underlying structures of objects such as buildings and vegetation.
\item \textbf{Denoising:} Under heavy Gaussian noise, unified models may introduce unnatural textures, whereas AIO-IR baselines can blur fine patterns. FAPE-IR better preserves sharp edges and regular patterns (e.g., window grids) with fewer color shifts and blotchy artifacts.
\item \textbf{Real-world blur (deblurring):} The proposed planner more effectively separates structural edges from noise-like blur, leading to sharper reconstructions with substantially fewer ringing and overshoot artifacts than both unified models and AIO-IR methods.
\item \textbf{Dehazing and low-light images:} For hazy and under-exposed scenes, baseline methods often exhibit residual haze, elevated noise, or strong color bias. FAPE-IR produces smoother illumination transitions, more balanced global contrast, and more natural color tones.
\item \textbf{Super-resolution:} On real-world $\times 4$ super-resolution, unified models sometimes hallucinate high-frequency patterns that deviate from the ground truth, while AIO-IR methods tend to over-smooth repetitive structures. FAPE-IR reconstructs sharper, more regular textures (e.g., building facades) that better match the ground-truth distribution, despite the NR-IQA metrics being challenging and sometimes misaligned with human perception.
\end{itemize}

These additional qualitative comparisons further corroborate the distortion–perception trade-offs discussed in the main paper and illustrate that FAPE-IR scales more reliably than both unified models and existing AIO-IR approaches across diverse degradation types.

\begin{figure*}[t]
\centering
\includegraphics[width=\linewidth]{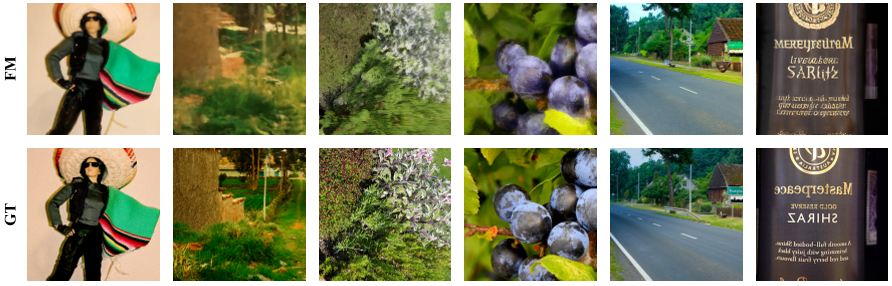}
\caption{Qualitative results of training our framework with a standard flow-matching (FM) objective on real-world super-resolution. Although the FM-trained variant can sharpen some structures, it also introduces severe artifacts and unrealistic high-frequency details (e.g., distorted edges and hallucinated textures), which motivates our final design choices for FAPE-IR.}
\label{fig:flow_vis_fm_ll}
\vspace{-2ex}
\end{figure*}

\subsection{Flow-Matching Visualizations}
\label{sec:fm_vis}
In our early exploratory stage, we attempted to train the entire FAPE-IR framework using a standard flow-matching (FM) objective~\cite{lipman2022flow}. Figure~\ref{fig:flow_vis_fm_ll} shows representative qualitative results on real-world super-resolution benchmarks. While the FM-trained variant is able to remove part of the degradation and produce sharper outputs than the input LR images, it also generates a large number of artifacts and unrealistic details: building facades exhibit irregular, “painted” textures, edges become locally distorted, and fine patterns are often hallucinated rather than faithfully reconstructed from the input.
These failure cases provide an empirical counterexample to the naive expectation that FM alone is sufficient for all-in-one restoration in pixel space. In ill-posed tasks such as real-world SR, the learned flow tends to overfit the training distribution and prioritize distribution matching over content preservation, leading to spurious high-frequency components that hurt perceptual realism. This observation is consistent with our theoretical motivation, and it prompted us to explore additional adversarial training and frequency-aware regularization to better constrain the planner–executor pipeline.

Although this FM-based variant is discarded in our final system, we include it here for completeness, given the widespread use of flow-matching in recent unified generative models. In future work, we plan to investigate ways to mitigate these artifacts, e.g., by incorporating stronger structural priors or geometry-aware constraints into the flow field, so that FM can be more safely integrated into general-purpose restoration frameworks like FAPE-IR.

\subsection{Hyperparameter Sensitivity Analysis}
\label{sec:appendix_hparam}

\begin{figure}[t]
\centering
\includegraphics[width=\linewidth]{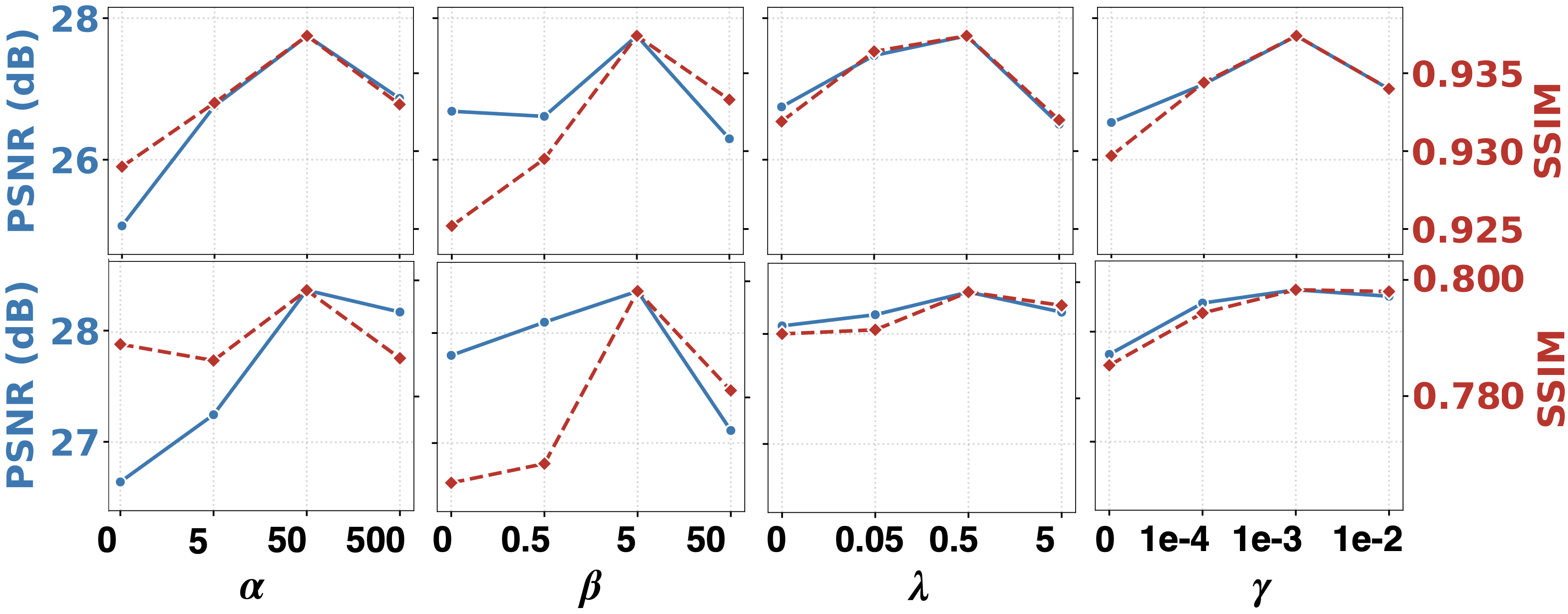}
\caption{Hyperparameter sensitivity analysis of the four loss weights $\alpha$, $\beta$, $\lambda$, and $\gamma$ on URHI and BSD68-15.}
\label{fig:trends}
\vspace{-3ex}
\end{figure}

To further validate the robustness of our objective design, we provide a more detailed hyperparameter sensitivity analysis for the four loss weights, namely $\alpha$, $\beta$, $\lambda$, and $\gamma$. Specifically, for each loss term, we vary its weight to $\{0\times, 0.1\times, 1\times, 10\times\}$ relative to the default setting, while keeping all remaining loss weights fixed. For efficiency, all variants are trained for 5k iterations under the same training protocol.
Figure~\ref{fig:trends} reports the corresponding results on two representative benchmarks, i.e., URHI and BSD68-15, in terms of both PSNR and SSIM. Overall, the results show a clear and consistent trend: each loss term contributes positively to the final performance, and removing any of them (i.e., setting the corresponding weight to $0$) leads to noticeable degradation. This confirms that the four components are complementary rather than redundant.

\begin{table}[t]
  \centering
  \small
  \caption{Ablation on BSD68-15 benchmark. Qwen: remove Qwen2.5-VL; Freq-U: remove frequency-aware text router; Freq-G: remove FIR spectral router; $r$: LoRA rank in expert combos.}
  \label{tab:bsd15_hf_ablation}
  \begin{adjustbox}{scale=0.75}
  \begin{tabular}{cccccccc}
    \toprule
    \textbf{Qwen} & \textbf{Freq-U} & \textbf{Freq-G} & \textbf{r=4} & \textbf{r=8} & \textbf{r=16} &
    \textbf{PSNR$\uparrow$} & \textbf{SSIM$\uparrow$} \\
    \midrule
    \xmark & \xmark & \xmark &  & \cmark &  & 30.81 & 0.90 \\
    \cmark & \xmark & \xmark &  & \cmark &  & 31.19 & 0.90 \\
    \addlinespace[2pt]

    \multicolumn{8}{l}{\textit{+ Freq-U enabled}} \\
    \cmark & \cmark & \xmark &  & \cmark &  & 31.27 & 0.90 \\
    \cmark & \cmark & \xmark &  & \cmark \cmark &  & 31.69 & 0.90 \\
    \cmark & \cmark & \xmark & \cmark &  & \cmark & 31.55 & 0.91 \\
    \cmark & \cmark & \xmark &  & \cmark & \cmark & 32.41 & 0.91 \\
    \addlinespace[2pt]

    \multicolumn{8}{l}{\textit{+ Freq-G enabled}} \\
    \cmark & \cmark & \cmark &  & \cmark \cmark &  & 33.57 & 0.91 \\
    \cmark & \cmark & \cmark & \cmark &  & \cmark & 33.12 & 0.91 \\
    \cmark & \cmark & \cmark &  & \cmark & \cmark & 33.20 & 0.91 \\
    \bottomrule
  \end{tabular}
  \end{adjustbox}
  \vspace{-3ex}
\end{table}

\subsection{Ablation on High-Frequency Restoration}
\label{sec:appendix_hf_ablation}
In the main text, we present the structure ablation results on the low-frequency restoration setting. To provide a more complete picture, we further report the corresponding ablation results on a high-frequency benchmark, i.e., BSD68-15, in Table~\ref{tab:bsd15_hf_ablation}.
The overall trend is consistent with the observations in the main text. First, introducing Qwen2.5-VL already brings a clear improvement over the base model, indicating that semantic guidance from the vision-language planner is beneficial not only for low-frequency recovery but also for more challenging high-frequency restoration. Second, enabling the frequency-aware text router (Freq-U) further improves performance, showing that frequency-conditioned text routing helps better align semantic priors with the restoration process. Third, adding the FIR spectral router (Freq-G) leads to the largest gain, which confirms the importance of explicit frequency-aware expert selection in handling complex high-frequency degradations.

We also compare different LoRA rank combinations. Among all tested settings, the combination centered on $r=8$ achieves the best overall performance, while configurations involving $r=4$ or $r=16$ are generally less effective. This suggests that a moderate LoRA rank provides a better trade-off between capacity and specialization for expert routing.
Overall, these supplementary results on BSD68-15 further validate that each proposed component contributes positively and that the conclusions drawn from the low-frequency setting generalize well to the high-frequency restoration scenario.

% WARNING: do not forget to delete the supplementary pages from your submission 
\end{document}